\definecolor{mycyan}{cmyk}{.3,0,0,0}
\definecolor{mypink}{rgb}{.99,.91,.95}
\definecolor{mygreen}{RGB}{124,252,0}
\theoremstyle{plain}
\newtheorem{theorem}{Theorem}[section]
\newtheorem{proposition}{Proposition}[section]
\newtheorem{lemma}{Lemma}[section]
\theoremstyle{remark}
\theoremstyle{definition}
\newtheorem{definition}[theorem]{Definition}
\newtheorem{assumption}{Assumption}
\numberwithin{equation}{section} % 设置equation的编号格式
\begin{document}
\title{ Neural network stochastic differential equation models with applications to financial data forecasting}
 %LDE-Net: L\'evy Induced Stochastic Differential Equation Equipped with Neural Network for Time Series Forecasting
\author{\bf\normalsize{
Luxuan Yang$^{1,}$\footnotemark[2],
Ting Gao$^{1,}$\footnotemark[1],
Yubin Lu$^{1,}$\footnotemark[3],
Jinqiao Duan$^{2,}$\footnotemark[4]
and Tao Liu $^{3,}$\footnotemark[5]
}\\[10pt]
\footnotesize{$^1$School of Mathematics and Statistics \& Center for Mathematical Sciences,Huazhong University of Science and Technology,} \\
\footnotesize{ Wuhan 430074, China.} \\[5pt]
\footnotesize{$^2$Department of Applied Mathematics, College of Computing, Illinois Institute of Technology, Chicago, IL 60616, USA} \\[5pt]
\footnotesize{$^3$ Securities Finance Department, China Securities Co., Ltd}
}

\footnotetext[2]{Email: \texttt{luxuan\_yang@hust.edu.cn}}
\footnotetext[1]{Email: \texttt{tgao0716@hust.edu.cn}}
\footnotetext[3]{Email: \texttt{yubin\_lu@hust.edu.cn}}
\footnotetext[4]{Email: \texttt{duan@iit.edu}}
\footnotetext[5]{Email:\texttt{liutao@csc.com.cn}}
\footnotetext[1]{is the corresponding author}

\date{}	% Here you can change the date presented

\maketitle
\setlength{\parindent}{2em}
\begin{abstract}
	%\lipsum[1]
% 	 With the fast development of modern deep learning techniques, the study of dynamic systems and neural networks is increasingly benefiting each other in a lot of different ways. Since uncertainties often arise in real world observations, SDEs (stochastic differential equations) come to play an important role in scientific modeling. To this end, 
In this article, we employ a collection of stochastic differential equations with drift and diffusion coefficients approximated by neural networks to predict the trend of chaotic time series which has big jump properties. Our contributions are, first, we propose a model called L\'evy induced stochastic differential equation network, which explores compounded stochastic differential equations with $\alpha$-stable L\'evy motion to model complex time series data and solve the problem through neural network approximation. Second, we theoretically prove that the numerical solution through our algorithm converges in probability to the solution of corresponding stochastic differential equation, without curse of dimensionality. %Second, we theoretically prove the convergence of our algorithm with respect to hyper-parameters of the neural network, and obtain the error bound without curse of dimensionality. 
Finally, we illustrate our method by applying it to real financial time series data and find the accuracy increases through the use of non-Gaussian L\'evy processes. We also present detailed comparisons in terms of data patterns, various models, different shapes of L\'evy motion and the prediction lengths.

	\keywords{Stochastic Differential Equations, $\alpha$-stable L\'evy Motion, Neural Network, Chaotic Time Series}
\end{abstract}

% keywords can be removed

\section{Introduction}
 In many scenarios, the past information that time series data contains can help people understand future phenomena. Time series forecasting is hereafter attractive in many research fields and lots of research work are developed to improve the prediction ability of the models. Traditionally, autoregressive models and moving average models are often applied to time series prediction. However, they essentially only capture linear relationship in the data set and can not afford many real cases, such as weather forecasting, financial market prediction or signal processing, which are often complex, nonlinear or even chaotic. Another class of statistical methods is based on Bayes' theorem in probability theory. Bayesian inference, as an applicable tool of Bayesian theory, is leveraged to deal with uncertainty in noisy environment and improve model accuracy by collecting some prior information. For example, Hummer\cite{hummer2005position} applies Bayesian inference to estimate the rate coefficients that can characterize the dynamic behavior of some diffusion models. Bevan et al. \cite{beltran2013self} use the Bayesian inference method to obtain the position-dependent potential energy and diffusivity so as to fit particle dynamics. Sahoo and Patra\cite{sahoo2020river} combine the Bayesian inference method with Markov Chain Monte Carlo (MCMC) algorithm to evaluate water pollution rates. With the help of some prior knowledge, this method can greatly improve the prediction accuracy. However, if the prior distribution is not presumed correctly, the inference will go to the wrong direction. Hence, improving the reliability of prior knowledge is crucial to the model performance. As there always exists various kinds of uncertainties in the forecasting problems, how to accurately capture the intrinsic volatile nature of time series data is still challenging. For example, financial time series data could be unpredictable considering hidden factors, such as market structure, transaction behavior, etc., which are inextricably linked with the carbon price \cite{sun2021new,tian2020point}. In addition, in most often the cases, missing data could occur in many industrial applications, taking the prediction of equipment's temperature time series for example \cite{wu2014multi}. To handle these kinds of complex data with uncertain prior knowledge, some chaotic theory is constructed for prediction where phase space reconstruction is one of the most important ideas. Considering the chaotic characteristic of stock time series, Zhong \cite{Tanwei2007NewAO} improves short-term stock prediction by using Phase Space Reconstruction Theory combined with Recurrent Neural Network. Stergiou and Karakasidis \cite{stergiou2021application} treat Lyapunov time as the length of the neural network input to quantify the safe prediction horizon and use the current prediction as input to predict the next step. Prediction with rolling accumulated training samples is a coin of two sides. On the one hand, with more information put into the training set, it will help to better predict the trend in the next day. On the other hand, the accumulated error will also deviate the long term prediction from the correct track.

% Chaotic theory is suitable for nonlinear prediction where phase space reconstruction is one of the most important ideas. Cao \cite{Cao1997PracticalMF} proposed a practical method to determine the minimum embedding dimension for a scalar time series. This method only needs one parameter, delay time, and a small amount of data when calculating the embedding dimension. Huang \cite{Huang2017NonlinearML} adopted Cao's method for its robust handling of noise and reconstructed observed financial time series into a high-dimensional phase space. Yu \cite{Yu2019StockPP} used mutual information to calculate delay interval and Cao's method to obtain the embedding dimension in order to predict stock prices in reconstructed phase space. Considering the complex non-linear characteristics of stock time series, Zhong \cite{Tanwei2007NewAO} improved short-term stock prediction by using PSRT (Phase Space Reconstruction Theory) combined with RNN (Recurrent Neural Network) in which the input dimension of RNN was determined by the minimal embedding dimension. This inspires us that applying phase space reconstruction to chaotic time series data is an appropriate embedding strategy got input representation.
% \textcolor[rgb]{0,1,0}{do we need to put this paragraph some other place? did you put some of the ref here to Section 4.1??}  

Recently, machine learning models outperform conventional methods in many aspects and have led a growing amount of researches to utilize various deep learning frameworks in the research of time series forecasting. One of the promising models, transformer \cite{Vaswani2017AttentionIA} , is bound to have a prominent prospect. Many variants of  transformer models have emerged in the past two years, such as informer \cite{Zhou2021InformerBE} and autoformer \cite{Wu2021AutoformerDT}. Informer proposes a novel probability based self-attention mechanism, which reduces the network size and improves long-term sequence prediction. In addition, the long sequence output is obtained through the generative decoder, which avoids the cumulative error propagation in the inference stage. Autoformer also boosts the performance of long-term sequence prediction by using the random process theory, which discards the self-attention mechanism of point-wise connections, and replaces with the autocorrelation mechanism of series-wise connections instead. As can be seen from the above, the self-attention mechanism has certain advantages for long-term prediction, but not has too much mathematical theories to explain how and why it works on various prediction time ranges. To this end, Macaron Transformer \cite{Lu2019UnderstandingAI} establishes an analogy between the multi-particle dynamic system in physics and the model structure, which benefits the understanding of the intrinsic mechanisms of Transformer. 
  
Nowadays, the marriage between deep learning and dynamical systems has become a topic of increasingly popularity, as they could benefiting each other in both theoretical and computational aspects. On the computational side, for example, Neural ODE  \cite{Chen2018NeuralOD} is a combination of residual network and ordinary differential equations. The output of the network is calculated with a black box differential equation solver by adjoint methods. The article also builds a time series signal generation model through the variational autoencoder (VAE) framework, and uses the neural network ODE as a part of it. For stochastic cases, neural stochastic differential equation (Neural SDE) \cite{liu2019neural} shows a continuous neural network framework based on stochastic differential equation with Brownian motions. Neural Jump Stochastic Differential Equations \cite{jia2019neural} also provides an approach to learn hybrid system with flow and jump. On the basis of differential equations, an innovative neural network based on stochastic differential equations, SDE-Net \cite{Kong2020SDENetED} captivates us. The core of the SDE-Net is to treat the deep neural network transformation as the state evolution of a stochastic dynamic system, and introduce the Brownian motion term to capture cognitive uncertainty. Moreover, there are also studies using neural networks based on physical regimes. The PINNs(Physics-Informed Neural Networks) \cite{raissi2019physics} model just is a combination of mathematics and deep learning. Lu et al.\cite{lu2021deepxde} summarize PINNs(Physics-Informed Neural Networks) and analyze the feasibility of PINNs to solve partial differential equations. Chen et al. \cite{chen2021solving} propose a new framework based on PINNs(Physics-Informed Neural Networks) with a new loss function to slove inverse stochastic problems. O'Leary et al. \cite{o2021stochastic} use stochastic physics-informed neural networks (SPINN) to learn the hidden physics dynamical systems with multiplicative noise. Kontolati et al. \cite{kontolati2021manifold} propose a multi-scale model based on nonlinear PDEs consistent with a continuum theory, and apply manifold learning and surrogate-based optimization techniques to the framework of machine learning. Moreover, there are even articles that combine finance, mathematics and deep learning. Gonon and Schwab \cite{gonon2021deep} prove that it is reasonable to use DNN in financial modeling of a large basket in a market with a jump.

On the theoretical part, since the neural network is a black box, people are seeking to explain some of its characteristics through numerical analysis of dynamical systems. For instance, Siegel and Xu \cite{siegel2020approximation} get the convergence rate of the two-layer neural network under different conditions with the mathematical tools of Fourier analysis. E et al. \cite{weinan2021barron} define the Barron space and give direct and inverse approximation theorems holding for functions in the Barron space under two-layer neural network. Li et al. \cite{li2019deep} pay attention to the deep residual neural network, which can be regarded as continuous-time dynamical systems and summarized some results on the approximation properties in continuous time. All these ideas have inspired us to prove the convergence of our proposed L\'evy induced stochastic differential equation network (LDE-Net) model.

In many real world applications, L\'evy noise plays an important part as a more general 
random fluctuation including complex non-Gaussian properties. Zhu \cite{zhu2014asymptotic} compares the L\'evy noise with the standard Gaussian noise and concludes that the L\'evy noise is more versatile and has a wider range of applications in such diverse areas as mathematical finance, financial economics, stochastic filtering, stochastic control, and quantum field theory \cite{zhu2014asymptotic}. Sangeetha and Mathiyalagan \cite{sangeetha2020state} indicate that L\'evy motion also works with genetic regulatory networks. Considering the properties of L\'evy motion, we choose it to better capture the uncertainty in financial data, such as abrupt transitions. Thus we introduce L\'evy induced stochastic dynamical system to our neural network framework for modeling complex financial time series data.
  
The rest of this paper is organized as follows. In Section 2, we present our neural network model based on stochastic differential equation induced by l\'evy motion. The theoretical analysis of the model is given in Section 3. In Section 4, experimental results with real financial data are reported, which can help to evaluate the applicability and effectiveness of the model. Finally, we conclude the paper and give future research directions in Section 5.
  
Our main contributions are: First, we design an $\alpha-$stable L\'evy motion induced SDE (stochastic differential equation) to recognize model uncertainty. The experiment results prove that the use of non-Gaussian processes increases the accuracy of model predictions. Second, we prove that the approximated solution through our model (LDE-Net) converges in probability towards the true solution of the corresponding stochastic differential equation. Specially, this convergence bound avoids the curse of dimensionality since it is independent of state dimension. Third, we bring the attention mechanism into the model so that the proposed LDE-Net model can achieve the goal of multi-step prediction.
  
%   To better state our work framework for complex chaotic data prediction, we plot our modling process in the following flow chart. (Figure \ref{structure}).
 
%   \begin{figure}[htbp]
%     \centering
%     \includegraphics[width=0.8\textwidth]{framework.jpeg}
%     \caption{Experiment framework of the stock price series prediction process.}
%     \label{structure}
%   \end{figure}
 
\section{Methods}

\subsection{ResNet to Ordinary Differential Equation}

In the earlier days, linear autoregression models, such as AR, MA, ARIMA, etc., are often adopted to tackle with time series forecasting tasks. There are also some studies that use mathematical methods for time series forecasting. For instance, Wang et al. \cite{wang2020prediction} apply data-driven ordinary differential equation to forecast daily PM2.5 concentration. Li et al. \cite{li2019new} propose a method based on the new bidirectional weakening buffer operator to improve the accuracy of time series forecasting. With the development of deep learning, a lot of neural network models such as RNN, LSTM, GRU, etc. are receiving more and more attention for various prediction problems. Because the structure of neural network lacks interpretability, people try to seek an explanation from mathematical theory. The emergence of residual neural networks (ResNet) gives a novel explanation on the accumulated error of neural networks through numerical analysis of ordinary differential equations.

Consequently, a growing amount of research arises in finding the  relationship between neural network structure and discretized solution of numerical differential equations. For example, as neural network maps the input $x$ to the output $y$ through a series of hidden layers, the hidden representation can be regarded as the state of a dynamic system \cite{Kong2020SDENetED}. Lu et al. \cite{lu2018finite} confirmed that several neural network structures can be interpreted as various kinds of numerical discretizations of ordinary differential equations (ODEs). In particular, ResNet \cite{he2016deep} can be regarded as a form of discretize Euler solution of ODEs. Here is the residual network structure in Figure \ref{Resnet}:

\begin{figure}[H]
    \centering
    \includegraphics[width=0.7\textwidth]{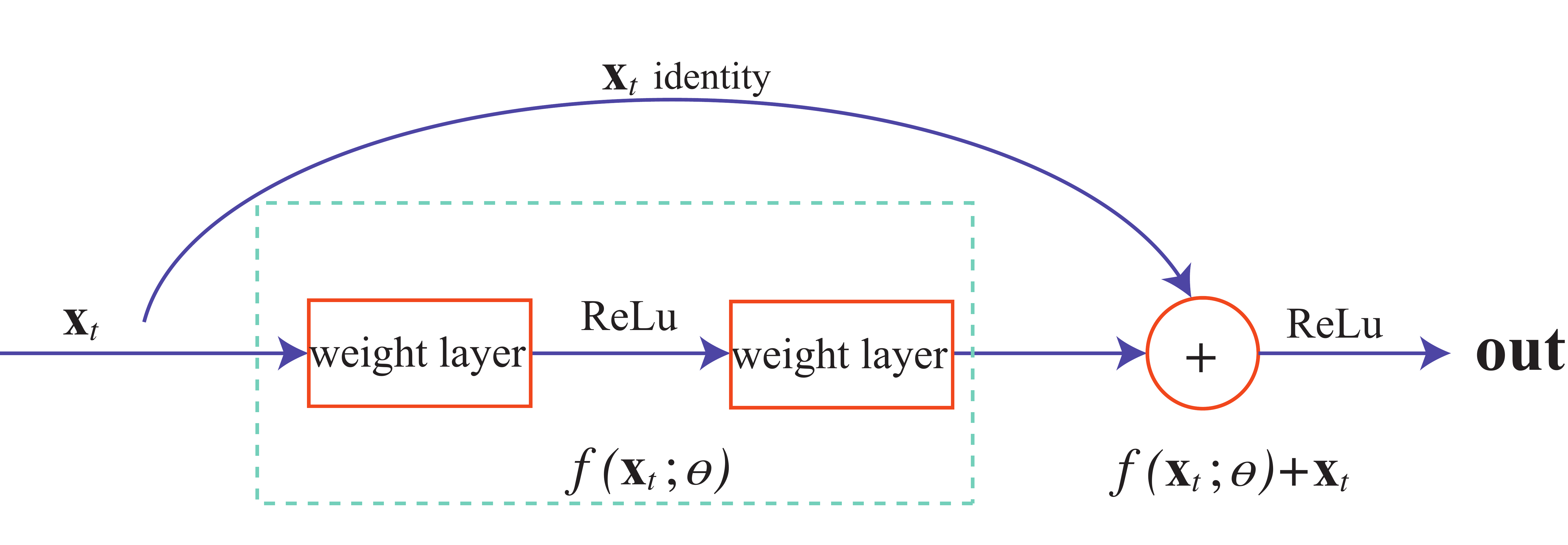}
    \caption{Resnet Structure.( Figure 2 in \cite{he2016deep} )}
    \label{Resnet}
  \end{figure}

The mapping of $f\left(\mathbf{x}_{t}; \theta\right)$ can be realized by feedforward neural networks and the output of identity mapping $\mathbf{x}$ is added to the output of the stacked layers \cite{he2016deep}. Therefore, the hidden state of the network can be described by the following equation:

\begin{equation}
       \mathbf{x}_{t+1}=\mathbf{x}_{t}+f\left(\mathbf{x}_{t}; \theta\right),
       \label{eq:10}
\end{equation}

where $\mathbf{x}_{t}$ is the hidden state of the t-th layer and $t \in\{0 \ldots T\}$. If we add more layers and take smaller steps, the continuous dynamics of hidden units can be shown as follows \cite{Chen2018NeuralOD}:
% where $\mathbf{x}_{t}$ is the hidden state of the t-th layer. If we let $\Delta t=1$, the above formula can be written as:

% \begin{equation}
%       \frac{\mathbf{x}_{t+\Delta t}-\mathbf{x}_{t}}{\Delta t}=f\left(\mathbf{x}_{t};\theta\right).
% \end{equation}

% Let $\Delta t \rightarrow 0$, then:

\begin{equation}
    %  \lim _{\Delta t \rightarrow 0} \frac{\mathbf{x}_{t+\Delta  t}-\mathbf{x}_{t}}{\Delta t}=
    \frac{d \mathbf{x}_{t}}{d t}=f\left(\mathbf{x}_{t}; \theta\right)
    % \Longleftrightarrow d \mathbf{x}_{t}=f\left(\mathbf{x}_{t}; \theta\right) d t.
       \label{eq:12}
\end{equation}

Now we can see that ResNet can indeed match the Euler discretization of ordinary differential equations. This analogy can effectively help us understand the intrinsic behavior of ResNet. 
\subsection{ Introduction to $\alpha$-stable L\'evy Motion} 
% Neural ODEs (Ordinary Differential Equations) explain the state of the hidden layer through a deterministic dynamic system. Although this way is clever, it cannot capture the potential uncertainty caused by the model. In fact, this ability is essential for tasks that value risk. Therefore, It is necessary for us to quantify uncertainties. Kong et al.'s \cite{Kong2020SDENetED} research shows that the uncertainty of model prediction comes from two aspects: (1) aleatoric uncertainty: it comes from the natural randomness in the task itself (such as label noise, etc.) (2) epistemic uncertainty: due to lack of training data, the model's estimate of the input data is inaccurate. For aleatory uncertainty, it is naturally determined by the task itself. You can imagine a training set in which all labels are noisy. A model trained with such a data set will obviously have unreliable prediction results, that is, the uncertainty is very high. For epistemic uncertainty, it is caused by insufficient cognition of the model. When faced with poor training and insufficient training data, the prediction result of the model will have a higher degree of uncertainty. 

In many real-world scientific and engineering applications, stochastic differential equations (SDE) models are widely used to investigate complex phenomena in various noisy systems. For financial market systems, SDE models can be used to quantify the set of optimal pricing barriers \cite{han2016optimal}. In addition, different financial models can be obtained by changing the form of the drift function. For instance, if the drift function $f(x) \equiv x$ in reflected SDE models, it can describe the price dynamics of the reference goods or service \cite{bo2011some}. However, when $f(x) \equiv x^2$, the SDE models represent the risk-neutral term structure of the interest rate model \cite{bo2011some}. In biological systems, SDE models can structurally identify the cell proliferation and death rates \cite{browning2020identifiability}.

Moreover, stochastic differential equations also can model the complex data behaviors with potential uncertainty. SDE-Net \cite{Kong2020SDENetED} uses the stochastic differential equation with Brownian motion to replace the original hidden layer structure of ResNet. However, as many complex real world phenomena exhibit abrupt, intermittent or jumping behaviors, like impulses, using Brownian motion to capture data uncertainty may not be appropriate. Thus, we utilize a class of non-Gaussian noise induced stochastic differential equations with L\'evy motion to improve the universality of predictive models. As we know, L\'evy motion has drawn tremendous attention in various application, such as option pricing model \cite{yuan2022total}, Arctic sea ice system \cite{yang2020tipping}, bearing degradation \cite{song2022long}, climate change \cite{zheng2020maximum}, agricultural product prices \cite{kurumatani2020time} etc.. Now, we use $\alpha$-stable L\'evy motion to capture cognitive uncertainty, which is the core of our proposed LDE-Net model. 

\paragraph{L\'evy Motion}
Let  $X=(X(t), t \geq 0)$  be a stochastic process defined on a probability space  $(\Omega, \mathcal{F}, P)$ . We say that  X  is a L\'evy motion if:
(i)  $X(0)=0$  (a.s);
(ii)  X  has independent and stationary increments;
(iii)  X  is stochastically continuous, i.e. for all  $a>0$  and all  $s \geq 0$ , $\lim _{t \rightarrow s} P(|X(t)-X(s)|>a)=0$.

\paragraph{Symmetric $\alpha$-Stable Random Variables}
$X \sim S_{\alpha}(\sigma, \beta, \gamma)$  is called a symmetric  $\alpha$-stable random variable if  $\beta=0$  and  $\gamma=0$ , that is,  $X \sim S_{\alpha}(\sigma, 0,0)$. When $\sigma=1$, it is called a standard symmetric  $\alpha$-stable random variable, and we denote this by  $X \sim S_{\alpha}(1,0,0)$.

\paragraph{Symmetric $\alpha$-Stable L\'evy Motions }
A symmetric $\alpha$-stable scalar L\'evy motion  $L_{t}^{\alpha}$ , with  $0<\alpha<2$ , is a stochastic process with the following properties:
(i)  $L_{0}^{\alpha}=0$, a.s.
(ii) $L_{t}^{\alpha}$ has independent increments.
(iii) $L_{t}^{\alpha}-L_{s}^{\alpha} \sim S_{\alpha}\left((t-s)^{\frac{1}{\alpha}}, 0,0\right)$.
(iv)  $L_{t}^{\alpha}$  has stochastically continuous sample paths, that is, for every  $s>0$, $L_{t}^{\alpha} \rightarrow   L_{s}^{\alpha}$  in probability, as  $t \rightarrow s$.

\begin{figure}[H]
    \centering
    \includegraphics[width=0.4\textwidth]{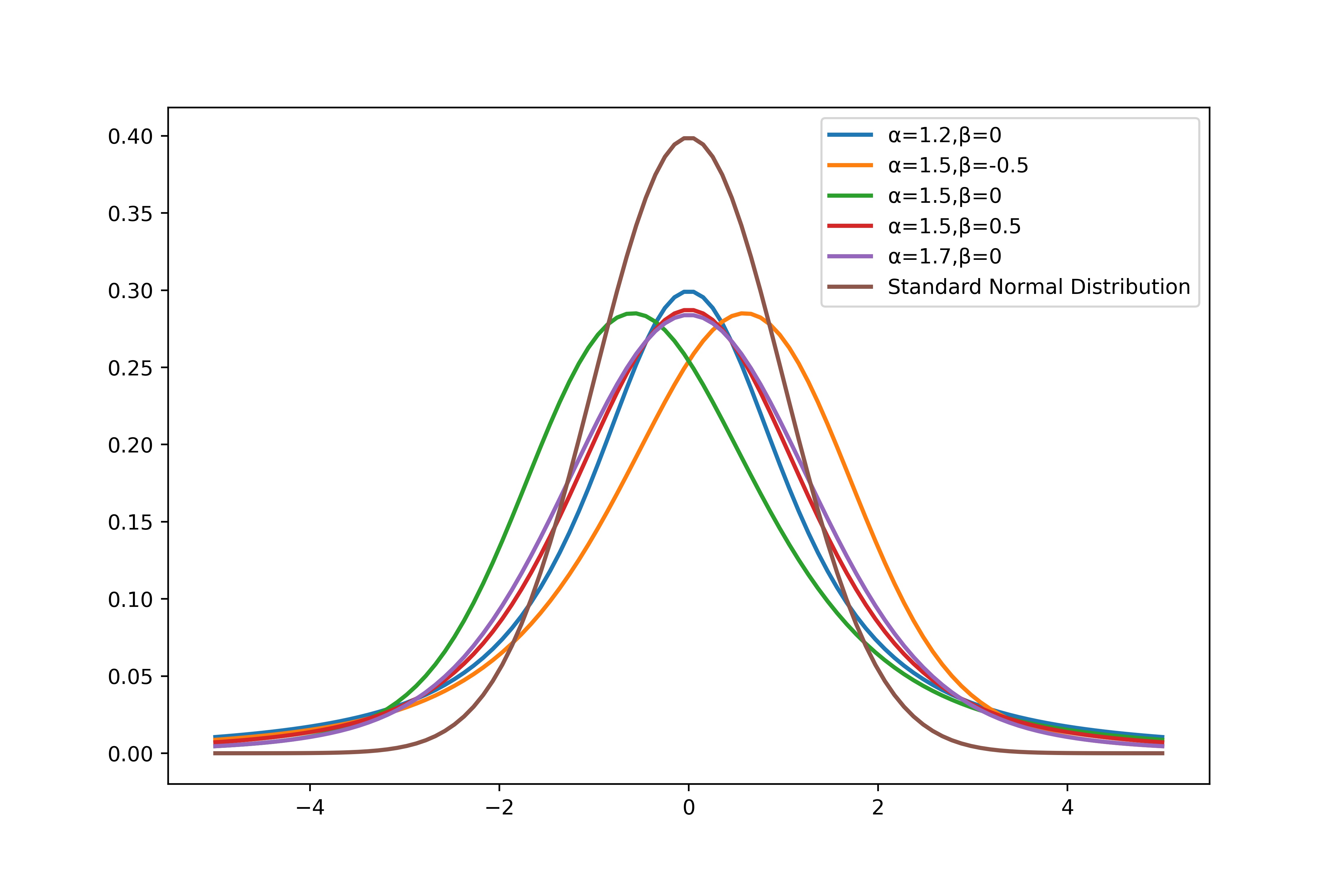}
    \includegraphics[width=0.4\textwidth]{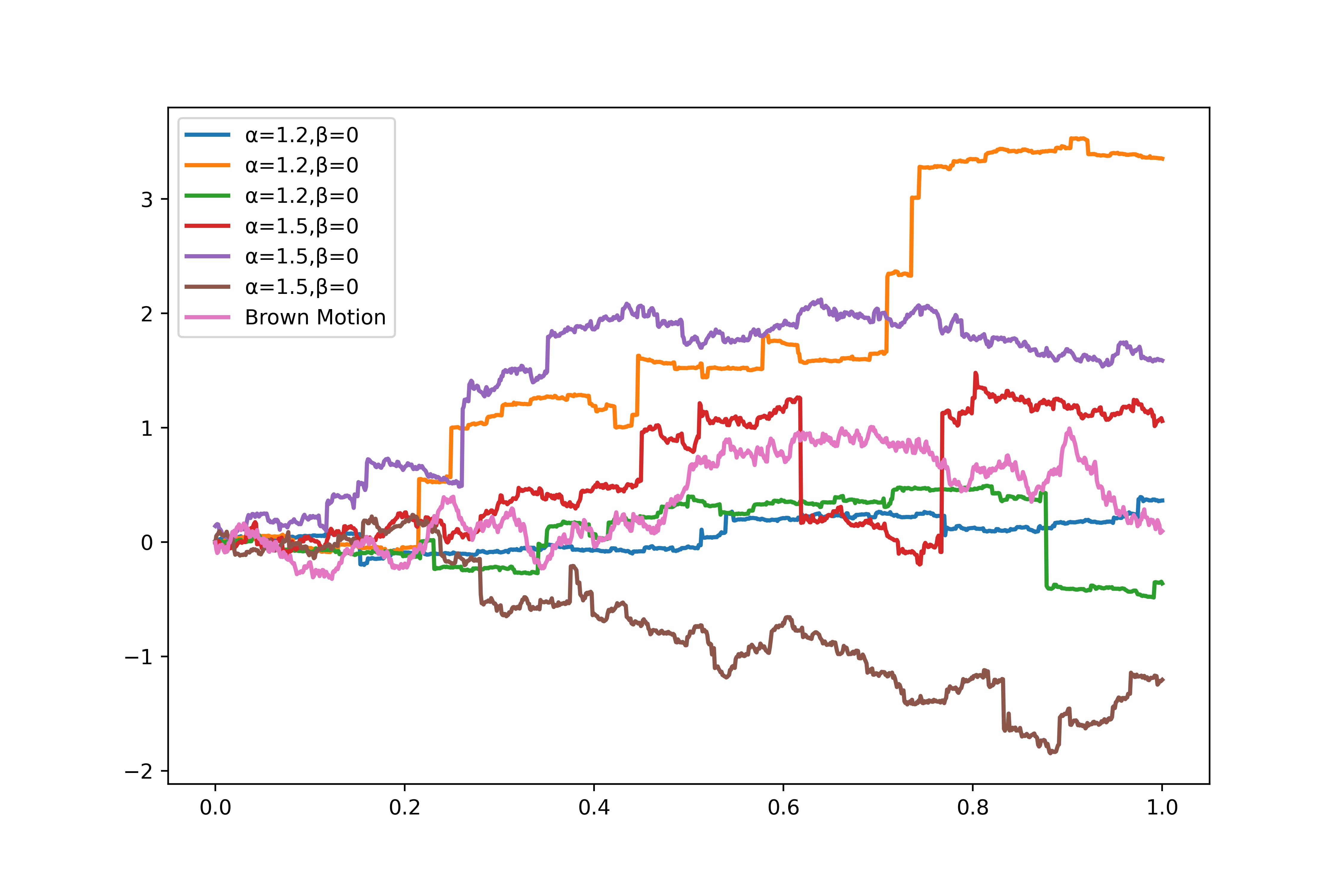}
    \caption{ left: Probability distribution functions of standard normal distribution and L\'evy motion with different $\alpha$ and $\beta$. right: Brownian motion and L\'evy motion with different $\alpha$ and $\beta$}
    \label{levypdf}
\end{figure}

We add  L\'evy motion to the right side of equation(\ref{eq:10}) to make it the Euler discretization form of the stochastic differential equation. The continuous-time dynamics of the system (\ref{eq:12}) are then expressed as:

\begin{equation}
       d x(t)=f(x(t);\theta_f) d t + g(x(t);\theta_g) dL_{t}^{\alpha}, \quad x(0)=x_{0}.
\label{SDE}
\end{equation}
Here, the drift term $f(\cdot;\theta_f): \mathbb{R}^d \to \mathbb{R}^d$ is a locally  bounded Lipschitz continuous function. The diffusion matirx $g(\cdot;\theta_g): \mathbb{R}^d  \to \mathbb{R}^d \times \mathbb{R}^d$  is a diagonal matrix with the same diagonal elements. We then regard $g(\cdot;\theta_g)$ as a scalar function on $\mathbb{R}^d$ which is a locally bounded Lipschitz continuous function.
% are locally bounded and Lipschitz continuous function.
The initial value $x_0 \in \mathbb{R}^d$ is a constant and $x(\cdot):[0,T] \to \mathbb{R}^d$. Here $L^\alpha_t$ is a symmetric $\alpha$-stable L\'evy process in $\mathbb{R}^d$ with $1<\alpha<2$. Moreover,  $\mathbf{\theta}_{f}$ and $\mathbf{\theta}_{g}$ are parameters of the two neural networks.
Next, we will construct two neural networks for the above drift and diffusion coefficients and compound them in a discretized differential equation to accurately predict long-term time series.

\subsection{LDE-Net}
There are various numerical methods for solving differential equations with different accuracy, according to the expansion order of the unknown function. Considering that higher-order numerical methods will increase the computational cost and complicate the input structure, we use a simple Euler-Maruyama scheme with a fixed step size for efficient network training. Based on the Euler-Maruyama scheme, the approximation for equation \eqref{SDE} is
% form of \hl{\bf Euler-Maruyama scheme for} Eq.(\ref{SDE}) is:
\begin{equation}
       \mathbf{x}_{k+1}=\mathbf{x}_{k}+\underbrace{f\left(\mathbf{x}_{k} ; \mathbf{\theta}_{f}\right)}_{\text {drift\, neural\, net }} \Delta t+\underbrace{g\left(\mathbf{x}_{0} ; \mathbf{\theta}_{g}\right)}_{\text {diffusion\, neural\, net }}({\Delta t})^{1/\alpha}L_{k}, \,\,\,\,k=0,1,...,N-1.
\label{EM}
\end{equation}
To further explain equation(\ref{EM}), suppose the real data in our time series is collected on a fixed time interval $T$, that is, the whole time series is obtained at timestamps $[0,\, T,\, 2T, \, 3T, \cdots]$. When predicting at each timestamp, the time interval $T$ is divided into $N$ sub-intervals and we get $\triangle t=\frac TN$. Now we consider the diffusion as N-step random walk with L\'evy noise, here $L_k\sim S_{\alpha}(1,0,0) \, \, (k=0,1,...,N-1)$ are independent standard symmetric $\alpha$-stable random variables, which are denoted as the last term in equation(\ref{EM}). This means we insert $N$ discretized stochastic differential equations between every two consecutive data points.
Here, $\mathbf{x}_{0} = x_0$ is the initial input value of the neural network in $\mathbb{R}^d$ and $x(k\Delta t)$ is approximated by $\mathbf{x}_{k}$.

Moreover, both drift and diffusion coefficients in equation (\ref{EM}) are estimated by two different multi-layer perceptron (MLP) neural networks which are called the drift net and diffusion net respectively.  
% For the diffusion net, the parameters are updated by the same training data. 
In addition, we can also collect noisy data by adding noise to the training set. Then, we train the diffusion net through a binary classification problem using both the original training data and the noisy data.
% and noisy data, which is favorable to improve the robustness of the model. 
The drift net $f$ and diffusion net $g$ are devoted to capture the aleatoric uncertainty and epistemic uncertainty respectively. When applying the above networks into the iteration of equation(\ref{EM}), in the process of moving through the hidden layer, each of our network layers share the same parameters. Simultaneously, the output of the previous layer is transferred to the next layer as the input. And to reduce the computational burden, our diffusion coefficient is only determined by the initial value $x_{0}$. When compounding this collection of discretized Euler iterations, we could predict the time series by the mean and variance of the final iteration $x_N$. The structure of our neural network architecture is shown in Figure \ref{LDE-Net}.

\begin{figure}[htbp]
    \centering
    \includegraphics[width=0.7\textwidth]{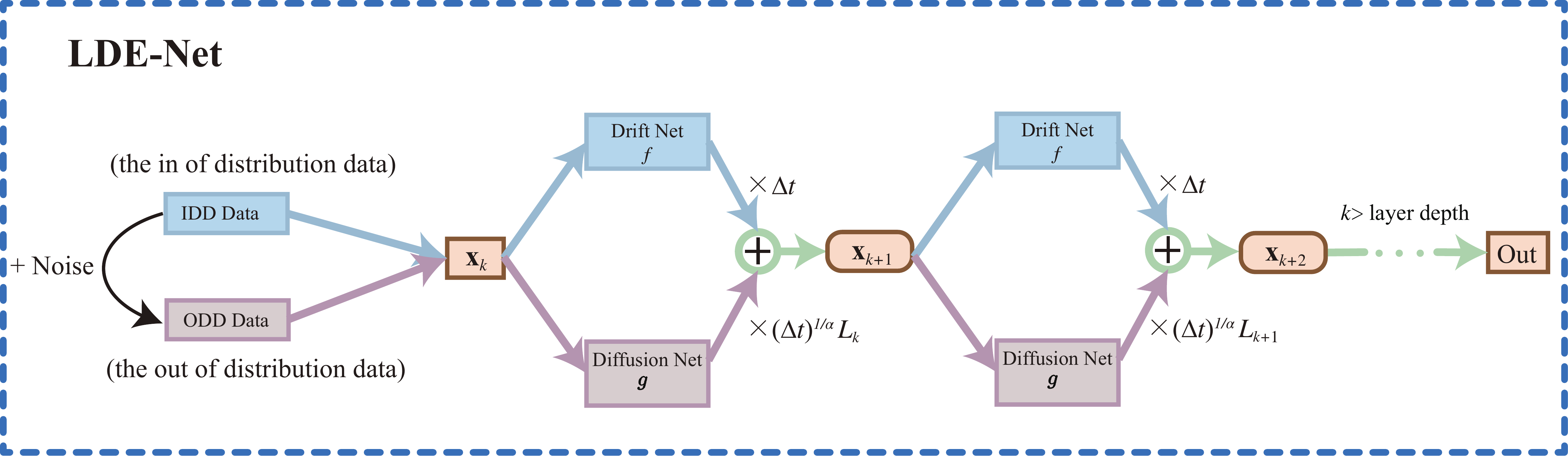}
    \caption{Structure of LDE-Net.}
    \label{LDE-Net}
  \end{figure}

% When compounding this collection of discretized Euler iterations, we could predict the time series by the mean and variance of the final iteration $x_N$.

% Now we first construct two neural networks for both drift and diffusion coefficients in Eq.(\ref{EM}). The drift net $f$ and diffusion net $g$ are devoted to capture the aleatoric uncertainty and epistemic uncertainty respectively. When applying the above networks into the iteration of Eq.(\ref{EM}), in the process of moving through the hidden layer, each of our network layers share the same parameters. And to reduce the computational burden, our diffusion coefficient is only determined by the initial value $x_{0}$. 

Based on the above settings, we have the training loss of our proposed model as follows:

% Since our goal is to capture the uncertainty of the model through stochastic differential equations, the algorithm equipped with the above of the numerical solution of the differential equation as the structure of the neural network.

% Like Kong et al.\cite{Kong2020SDENetED}, the diffusion of the system also should meet the following conditions: (1) For the region in the training distribution, the noise intensity of the L\'evy motion should be small (low diffusion). The system state is dominated by the drift term in this area, and the output noise intensity should be small; (2) For the area outside the training distribution, the noise intensity of the L\'evy motion should be large (high diffusion). 

% Besides, we keep the attack mechanism from SDE-Net \cite{Kong2020SDENetED} for diffusion identification. We also added noise to obtain the out-of-distribution (OOD) data which helps to optimize the parameters of the diffusion term, that is, $\tilde{\mathbf{x}}_{0}=\mathbf{x}_{0} + noise$. 

\begin{equation}
    \min_{\mathbf{\theta}_f} \mathrm{E}_{ P_{\text {train}}}Loss\left(\mathbf{x}_{N}, y_{true} \right) +\min _{\mathbf{\theta}_{g}} \mathrm{E}_{ P_{\text {train}}}g\left(\mathbf{x}_{0};\mathbf{\theta}_{g}\right) + \max _{\mathbf{\theta}_{g}} \mathrm{E}_{ P_{\text {out}}}g\left(\mathbf{\tilde{x}}_{0};\mathbf{\theta}_{g}\right)
    \label{loss}
\end{equation}

Here, the symbol $P_{train}$ represents the distribution for original training set. $P_{out}$ means the distribution of the training data with added noise (see Figure \ref{LDE-Net}). Hence, $\mathbf{\tilde{x}}_{0} = \mathbf{x}_{0} + noise$. Since the training process of the first term is a regression problem, we define $Loss(\mathbf{x}_N, y_{true})$ as the loglikehood loss between the predicted output and true label $y_{true}$ on training data. For the diffusion net, the weights are trained by  the training data with and without added noise. We employ the Binary Cross Entropy as the loss functions in the last two items in equation \eqref{loss}.
 
\paragraph{Remark 1:} Since we utilize non-Gaussian L\'evy motion for modeling our neural network, we will show in the following sessions the better accuracy for time series data prediction both theoretically and experimentally. 

\paragraph{Remark 2:} To better generalize SDE-Net \cite{Kong2020SDENetED}, we also use the attention mechanism to make the proposed model perform multi-step prediction in parallel. 

\paragraph{Remark 3:} This method actually learns $N$ ($= T/\Delta t$) pairs of different drift and diffusion coefficients, corresponding to $N$ ($= T/\Delta t$) different stochastic dynamical systems, where the former equation's output is the input of the next equation. 

% \subsection{Model Architecture}
In summary, our model for the whole training process including data preparation is as follow: first, we check whether the time series data is chaotic or not by Lyapunov exponent (see Appendix \ref{chaos}). Then if this is true, we will calculate intrinsic embedding dimension of input feature with phase space reconstruction (see Appendix \ref{embedding}). Finally, we parallely predict $n$ days' label values for our training data through $n$ different LDE-Nets with attention mechanism. The flow chart of the process is shown in Figure \ref{LDE}. 

\begin{figure}[htbp]
    \centering
    \includegraphics[width=0.7\textwidth]{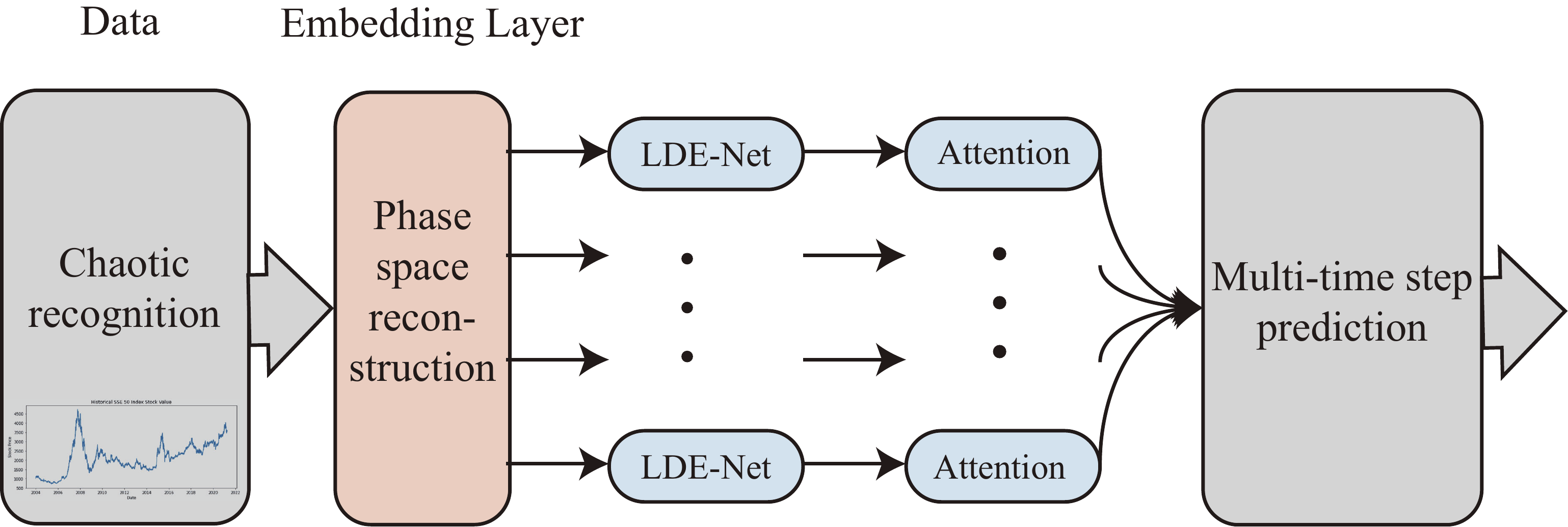}
    \caption{Model construction.}
    \label{LDE}
  \end{figure}

\section{Theoretical Analysis}

In this section, we will prove the convergence of our LDE-Net algorithm. First of all, the existence and uniqueness of the solution $X_{t}$ to a stochastic differential equation with $\alpha$-stable L\'evy noise are proved in Theorem \ref{Uniq}. The proof of the convergence follows three steps: First, under Assumption \ref{Assump1} and Lemma \ref{AL1}, we can get Lemma \ref{AL2} so that we gain the mean uniform error bound of Euler-Maruyama method for stochastic differential equations with Lipschitz continuous drift coefficient by $\alpha$-stable motion.
Then, according to Lemma \ref{2layer}, any function in Barron space can be approximated by a two-layer neural network \cite{weinan2021barron}. Moreover, stochastic gradient descent (SGD) can be proved to be convergent theoretically under some conditions \cite{rotskoff2018trainability}. Based on these two results, we give Assumption \ref{Assump2} and Assumption \ref{Assump3}. Under the assumptions and Lemma \ref{uniconv}, we obtain that our neural networks satisfy the conditions in Proposition \ref{Probconv}, which is crucial to the convergence proof of our LDE-Net model. Then, combining Lemma \ref{AL2} and Proposition \ref{Probconv}, we get Theorem \ref{MR}, which shows that the approximated solution of our algorithm converges in probability to the solution of a corresponding stochastic differential equation.

% Our proof procedure is: Theorem \ref{Uniq} ensures the existence and uniqueness of the solution $X_{t}$ to a stochastic differential equation with $\alpha$-stable L\'evy motion so that we can design an effective network architecture. 

% Since the numerical solution obtained by Euler-Maruyama method contains hyperparameters of the neural network, we hope to prove that the algorithm converges to a solution without hyperparameters.

 First, we introduce Theorem 7.26 in Ch7 in \cite{Duan2015AnIT} and Theorem 9.1 in Ch 4 in \cite{Nobuyuki1989SDEAEP}. Here the function $b(\cdot)$ and $\sigma(\cdot)$ can be viewed as functions as well as two-layer neural networks.

\begin{theorem}
\label{Uniq}
Let $\left\{\boldsymbol{U}, \mathscr{B}_{\boldsymbol{U}}\right\}$ be a measurable space and $\nu$ be a $\sigma$-finite measure on it. Let $U_{0}$ be a set in $\mathscr{B}_{\boldsymbol{U}}$ such that $\nu\left(\boldsymbol{U} \backslash U_{0}\right)<\infty$. If functions $b(\cdot)$ and $\sigma (\cdot)$ satisfy the Lipschitz condition:
\begin{equation*}
    \|b(x_{1})-b(x_{2})\|^{2}+\int_{U_{0}}\|\sigma(x_{1})z-\sigma(x_{2})z\|^{2} \nu(d z) \\
    \leq K|x_{1}-x_{2}|^{2}, \quad x_{1}, x_{2}, z \in R^{d},
\end{equation*}
there exists a unique $\left(\mathscr{F}_{t}\right)$ -adapted right-continuous process  $X(t)$  with left limits which satisfies equation
\begin{equation*}
    X(t)=X(0)+\int_0^tb(X(s)) ds+\int_0^t\int_{\boldsymbol{U}} \sigma (X(s))u\widetilde N( ds, du).
\end{equation*}
Here $\widetilde N$ stands for the compensated Poisson measure. The measure $\nu$ is the L\'evy measure of random poisson measure $N$.
\end{theorem}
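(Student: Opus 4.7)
The plan is to prove existence via a Picard iteration scheme and uniqueness via a Gronwall argument, with an interlacing construction to handle the large jumps. The natural functional-analytic setting is the space of adapted càdlàg processes on $[0,T]$ with finite $L^2$ norm $\|X\|_{T}^2 := \sup_{t \le T} E|X(t)|^2$; the Itô isometry for compensated Poisson random measures, namely
\begin{equation*}
E\left|\int_0^t\!\!\int_{U_0} \sigma(X(s))u\,\widetilde N(ds,du)\right|^2 = E\int_0^t\!\!\int_{U_0} \|\sigma(X(s))u\|^2\,\nu(du)\,ds,
\end{equation*}
is the main linear-algebraic tool that allows the Lipschitz hypothesis to convert into a contraction on this space.

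First I would truncate the problem to the small-jump part. Write $N = N_1 + N_2$ where $N_1$ is the restriction of $N$ to $[0,T]\times U_0$ and $N_2$ is its restriction to $[0,T]\times (U\setminus U_0)$; since $\nu(U\setminus U_0)<\infty$, the process $N_2$ has only finitely many atoms $(\tau_k,u_k)$ in any bounded time window, so it acts as a compound Poisson perturbation. Between two consecutive jump times $\tau_k < \tau_{k+1}$ one must solve the SDE driven only by $b$ and the compensated small-jump integral against $N_1$; at each $\tau_k$ one simply adds the deterministic jump $\sigma(X(\tau_k-))u_k$. Iterating this interlacing produces a global solution from the local one, so it suffices to construct a solution on a fixed interval $[0,T]$ driven by $b$ and $\widetilde N_1$.

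On this reduced problem I would run the Picard iteration $X^{(0)}(t) \equiv X(0)$ and $X^{(n+1)}(t) = X(0) + \int_0^t b(X^{(n)}(s))\,ds + \int_0^t\!\int_{U_0} \sigma(X^{(n)}(s))u\,\widetilde N_1(ds,du)$. Applying $(a+b)^2 \le 2a^2+2b^2$, Cauchy--Schwarz to the drift integral, and the Itô isometry to the jump integral, the hypothesis gives
\begin{equation*}
E|X^{(n+1)}(t)-X^{(n)}(t)|^2 \le 2(T+1)K \int_0^t E|X^{(n)}(s)-X^{(n-1)}(s)|^2\,ds.
\end{equation*}
Iterating this bound yields the geometric tail $\|X^{(n+1)}-X^{(n)}\|_T^2 \le (CT)^n/n!\cdot C_0$, which is summable, so $X^{(n)}$ converges in $L^2$ uniformly on $[0,T]$ to some adapted limit $X$. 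Passing to the limit in the iteration (using dominated convergence for the drift and the $L^2$ isometry for the martingale part) shows $X$ solves the equation. For uniqueness, subtracting two solutions and repeating the same estimate gives $E|X(t)-Y(t)|^2 \le C\int_0^t E|X(s)-Y(s)|^2\,ds$, and Gronwall's lemma forces $X\equiv Y$.

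The main obstacle, and the place where I would spend the most care, is exactly the splitting step: the Lipschitz hypothesis is stated only for the integral over $U_0$, yet the SDE is driven by the full measure $\widetilde N$ on $U$. One must check that the compound-Poisson interlacing is consistent with the given (compensated) formulation of the equation on $U\setminus U_0$, which requires rewriting the large-jump compensator as an absolutely continuous drift correction $-\int_{U\setminus U_0}\sigma(X(s))u\,\nu(du)\,ds$ that can be absorbed into a modified (still Lipschitz) drift $\tilde b$. Once this reformulation is in place, the remaining work reduces to the clean $L^2$ Picard argument above, and the càdlàg, $(\mathscr F_t)$-adapted property of $X$ is inherited from the iterates (drift integrals are continuous and compensated Poisson integrals are càdlàg martingales).
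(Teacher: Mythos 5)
The paper does not prove this statement at all: it is imported verbatim as Theorem 7.26 of Duan (2015) and Theorem 9.1, Ch.~IV of Ikeda--Watanabe (1989), so there is no in-paper argument to compare against. Your Picard-iteration-plus-interlacing proof is precisely the classical argument those references give, and it is correct in outline: the $L^2$ contraction estimate $2(T+1)K\int_0^t E|X^{(n)}(s)-X^{(n-1)}(s)|^2\,ds$ follows as you say from Cauchy--Schwarz and the isometry (with the small standard additions that one needs linear growth of $b,\sigma$ --- which follows from Lipschitz plus finiteness at one point --- to start the iteration, that the integrand should be the left limit $\sigma(X(s-))$ for predictability, and that uniqueness for the full equation is obtained by induction over the finitely many large-jump times rather than by a single global Gronwall bound). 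The one place where your proof genuinely cannot be completed as stated is the one you flagged yourself: absorbing the large-jump compensator into the drift requires $\int_{U\setminus U_0}\|u\|\,\nu(du)<\infty$, which does not follow from $\nu(U\setminus U_0)<\infty$; in the original Ikeda--Watanabe formulation the integral over $U\setminus U_0$ is taken against the uncompensated measure $N$, not $\widetilde N$, exactly to avoid this. So the obstruction lies in the theorem as transcribed in the paper rather than in your argument, and under the standard formulation your proof goes through.
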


Next, we will show the convergence of our model with respect to the number of neurons in hidden layer. In  equation \eqref{EM}, the parameters $\theta_{f}$ and $\theta_{g}$ are only relevant to the width  $m$ of the nerual networks. The parameter $m$ is also called the number of neurons in the hidden layer. Hence, $f(\cdot; \mathbf{\theta}_{f})$ and $g(x_0; \mathbf{\theta}_{g})$ are obtained through neural network training. For simplicity, we use $f_{m}(\cdot)$ to represent $f(\cdot; \mathbf{\theta}_{f})$ and $g_{m}(x_0)$ to represent $g(x_0; \mathbf{\theta}_{g})$. Since $x_0$ is a constant, $g_{m}(x_0)$ is also a constant in $\mathbb{R}$.
We rewrite equation (\ref{EM}) in the following form:
\begin{equation}
      X^{(m)}_{k+1}=X^{(m)}_{k}+f_{m}\left(X^{(m)}_{k} \right) \Delta t+g_{m}\left(X^{(m)}_{0} \right)({\Delta t})^{1/\alpha}L_{k},\quad X^{(m)}_{0}=x_0
      \label{EMNEW}
\end{equation}

where $L_k\sim S_{\alpha}(1,0,0) \, \, (k=0,1,...,N-1)$. To emphasize that functions and variables are from neural network, we add the superscript $(m)$ on them which also denotes the width of the neural network.
% We just change the notation of the variables to distinguish whether  the data of the variables is obtained by neural network training or not. For instance, the superscript $(m)$ represents the variable from neural network training.} 
Let $\Delta t \in (0,1)$ and $N=\frac T{\Delta t}$. We also define the continuous version of equation (\ref{EMNEW}) as follows:
\begin{equation}
  X^{(m)}(t)=X^{(m)}(0)+\int_{0}^{t} f_{m}(\overline{X^{(m)}}(s)) d s+g_{m}\left(X^{(m)}(0) \right)L_{t}^{\alpha}, \quad X^{(m)}(0)=x_0,
  \label{continuousEM}
\end{equation}
where $\overline{X^{(m)}}(t)=X^{(m)}_{k}$ for $t \in[k \Delta t,(k+1) \Delta t)$ and $L_{t}^{\alpha}$ is a symmetric $\alpha$-stable  L\'evy motion. This is similar to the method used in the proof of Liu \cite{liu2019strong}. Then, equation\eqref{SDE} can be correspondingly expressed by:
\begin{equation}
      d x^{(m)}(t)=f_{m}\left(x^{(m)}(t)\right) d t + g_{m}\left(x^{(m)}(0)\right) dL_{t}^{\alpha}.
\label{SDEnew}
\end{equation}
Where $x^{(m)}(0)=X^{(m)}(0)=x_{0}$ and $x_0$ is a constant. We use $\mathcal{B}$ and $\|\cdot\|_{\mathcal{B}}$ to denote the Barron space and Barron norm. $\left | \cdot \right | $ is the Euclidean norm.
\begin{assumption}
\label{Assump1}
Assume that there exists a constant $K>0$ such that
\begin{equation*}
    \vert f_{m}(x_1)-f_{m}(x_2)\vert\leq K\vert x_1-x_2\vert,
\end{equation*}
for any $x_1,x_2 \in \mathbb{R}^d$ and $m \in \mathbb{N}^+$.
\end{assumption}
Note that Assumption \ref{Assump1} actually requires $\{f_m(\cdot)\}_{m \geq 1}$ to be uniformly Lipschitz continuous. Before proving  Lemma \ref{AL2}, we are ready to state the following lemma.

\begin{lemma}\label{AL1}
Suppose that Assumption \ref{Assump1} holds, for every $t \in [0,T]$, the mean absolute difference between the continuous version $X^{(m)}(t)$ and discrete version $\overline{X^{(m)}}(t)$ of Euler-Maruyama method  is
\begin{equation*}
  \mathbb{E}\vert X^{(m)}(t)-\overline{X^{(m)}}(t)\vert\leq C_6{(\Delta t)}^\frac1\alpha,
\end{equation*}
where constant $C_6=4K_1(1+C_5)+2\vert g_m(x_0)\vert C_3$ with $C_5=C_4+e^{6K_1T}$, $C_4=3(\mathbb{E}\vert x_0\vert+2K_1T+C_3T^\frac1\alpha\vert g_m(x_0)\vert)$ and $K_1=\max(K,f_m(x_0))$. Note that $K$ is the Lipschitz constant for $f_{m}(\cdot; \mathbf{\theta}_{f_m})$ and $C_{3}$ is a constant dependent on index $\alpha$ (see the exact expression of $C_{3}$ in \rm{\cite{Samorodnitsky1995StableNR})}. The proof of Lemma \ref{AL1} can be found in Lemma 3.3 in \cite{liu2019strong}.
\end{lemma}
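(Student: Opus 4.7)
The plan is to localize the Euler--Maruyama error onto a single step of length $\Delta t$, split it into a drift contribution and an $\alpha$-stable noise contribution, and control each using a uniform first-moment bound on the continuous process $X^{(m)}$ that I first obtain via a Gronwall-type argument. For the moment bound, I would integrate \eqref{continuousEM}, use Lipschitz continuity anchored at $x_0$ to write $|f_m(y)| \leq |f_m(x_0)| + K|y-x_0| \leq K_1(1 + |y| + |x_0|)$, and invoke the stable-scaling identity $\mathbb{E}|L^\alpha_t| = C_3\, t^{1/\alpha}$ (finite since $1<\alpha<2$) to produce an integral inequality for $\phi(t) := \mathbb{E}|X^{(m)}(t)|$; because the drift is evaluated at the piecewise-constant $\overline{X^{(m)}}(s)$, I would pass to the running supremum $\sup_{u \leq s}\phi(u)$ before applying Gronwall, which yields $\phi(t) \leq C_5$ with the ingredients of $C_4$ (namely $|x_0|$, $K_1 T$, and $C_3 T^{1/\alpha}|g_m(x_0)|$) exactly matching the initial, drift, and noise contributions prior to the exponential accumulation $e^{6K_1 T}$.

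Next, for $t \in [k\Delta t,(k+1)\Delta t)$, the two versions satisfy
\begin{equation*}
X^{(m)}(t) - \overline{X^{(m)}}(t) = \int_{k\Delta t}^{t} f_m\bigl(\overline{X^{(m)}}(s)\bigr)\,ds + g_m(x_0)\bigl(L^\alpha_t - L^\alpha_{k\Delta t}\bigr).
\end{equation*}
Taking absolute value and expectation, the stationary-independent-increment property gives $L^\alpha_t - L^\alpha_{k\Delta t} \sim S_\alpha((t-k\Delta t)^{1/\alpha},0,0)$ and hence $\mathbb{E}|L^\alpha_t - L^\alpha_{k\Delta t}| \leq C_3 (\Delta t)^{1/\alpha}$, while the drift integral is bounded by $K_1(1 + C_5)\,\Delta t$ using the Lipschitz estimate together with the step-one uniform bound. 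Since $0 < \Delta t < 1$ and $1/\alpha < 1$ we have $\Delta t \leq (\Delta t)^{1/\alpha}$, so adding the two contributions yields the claimed $(\Delta t)^{1/\alpha}$ rate; the numerical prefactors $4$ and $2$ in $C_6$ come from the coarser triangle-inequality bookkeeping $|y - x_0| \leq |y| + |x_0|$ used inside the drift moment estimate when it is reassembled as a bound involving $C_5$ alone.

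The main obstacle is the Gronwall step itself: because the drift in \eqref{continuousEM} is frozen at $\overline{X^{(m)}}(s)$ rather than at $X^{(m)}(s)$, the standard inequality does not apply verbatim, and one must bootstrap through the running supremum of $\phi$ (or couple estimates for $\mathbb{E}|X^{(m)}|$ and $\mathbb{E}|\overline{X^{(m)}}|$ simultaneously) to close the argument. A secondary delicacy is that first moments of $\alpha$-stable increments exist only barely for $1<\alpha<2$, so there is no $L^2$-type It\^o isometry available and everything must be done at the $L^1$ level via the explicit stable scaling $\mathbb{E}|L^\alpha_t| = C_3 t^{1/\alpha}$; once that is in hand, all remaining steps reduce to elementary bookkeeping of the constants.
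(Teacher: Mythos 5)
Your proposal is correct and follows essentially the same route as the cited source: the paper does not prove this lemma itself but defers to Lemma 3.3 of the Liu reference, whose argument is exactly the one you sketch --- a uniform first-moment bound on $X^{(m)}$ via the Lipschitz estimate anchored at $x_0$ plus Gronwall, followed by a single-step decomposition into a drift piece of order $\Delta t$ and a stable increment of order $(\Delta t)^{1/\alpha}$, with $\Delta t \leq (\Delta t)^{1/\alpha}$ absorbing the former. Your handling of the two delicate points (working with $\sup_{u\le s}\mathbb{E}|X^{(m)}(u)|$ rather than $\mathbb{E}\sup$, since $\overline{X^{(m)}}(s)$ is $X^{(m)}$ evaluated at a deterministic grid time, and staying at the $L^1$ level because only first moments of the $\alpha$-stable increments exist for $1<\alpha<2$) is precisely what makes the cited argument close.
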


\begin{lemma}\label{AL2}
Suppose that Assumption \ref{Assump1} holds, the mean uniform error bound of the Euler-Maruyama method for equation \eqref{SDEnew} satisfies
\begin{equation*}
   \mathbb{E}\underset{0\leq t\leq T}{\sup}\vert x^{(m)}(t)-X^{(m)}(t)\vert\leq KC_6{(\Delta t)}^\frac1\alpha T\exp(KT),
\end{equation*}
where $x^{(m)}(t)$ is the solution of equation \eqref{SDEnew} and $X^{(m)}(t)$ is the continuous version of the corresponding Euler-Maruyama method in equation \eqref{continuousEM}.
\end{lemma}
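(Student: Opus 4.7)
The plan is to derive a Grönwall-type inequality for $\phi(r) := \mathbb{E}\sup_{0\leq t\leq r}|x^{(m)}(t)-X^{(m)}(t)|$, exploiting the crucial simplification that the diffusion coefficient in both \eqref{continuousEM} and \eqref{SDEnew} is the constant $g_m(x_0)$, so the $\alpha$-stable noise term cancels when we subtract the two equations. This reduces the whole argument to handling a Lipschitz drift integral, with the help of Lemma \ref{AL1} to control the piecewise-constant interpolation error.

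First I would write both processes in integral form and subtract. Because $g_m(X^{(m)}(0)) = g_m(x^{(m)}(0)) = g_m(x_0)$ is a deterministic scalar and the same driving process $L^\alpha_t$ appears in both, the diffusion contributions are identical and the difference reduces to
\begin{equation*}
x^{(m)}(t) - X^{(m)}(t) = \int_0^t \bigl[f_m(x^{(m)}(s)) - f_m(\overline{X^{(m)}}(s))\bigr]\,ds.
\end{equation*}
Taking absolute values, applying Assumption \ref{Assump1}, and inserting $\pm X^{(m)}(s)$ inside the Lipschitz bound yields
\begin{equation*}
|x^{(m)}(t) - X^{(m)}(t)| \leq K\int_0^t |x^{(m)}(s) - X^{(m)}(s)|\,ds + K\int_0^t |X^{(m)}(s) - \overline{X^{(m)}}(s)|\,ds.
\end{equation*}

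Next I would pass to the supremum over $t\in[0,r]$ on the left, noting that the right-hand side is non-decreasing in $t$, so the inequality remains valid if we replace $|x^{(m)}(s)-X^{(m)}(s)|$ by $\sup_{0\le u\le s}|x^{(m)}(u)-X^{(m)}(u)|$. Taking expectations and using Fubini together with Lemma \ref{AL1}, which gives $\mathbb{E}|X^{(m)}(s)-\overline{X^{(m)}}(s)| \leq C_6(\Delta t)^{1/\alpha}$ uniformly in $s$, produces
\begin{equation*}
\phi(r) \leq KC_6(\Delta t)^{1/\alpha}\,T + K\int_0^r \phi(s)\,ds, \qquad r\in[0,T].
\end{equation*}
Finally I would apply the classical Grönwall inequality to this scalar integral inequality to obtain $\phi(T) \leq KC_6(\Delta t)^{1/\alpha}\,T\exp(KT)$, which is exactly the claimed estimate.

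The argument is essentially bookkeeping; there is no real obstacle provided one notices that the diffusion terms cancel. The only mild subtlety is justifying the sup-inside-integral step and ensuring $\phi(r)$ is finite so that Grönwall applies — this follows from the stationary-increment moment control of the $\alpha$-stable process on $[0,T]$ combined with the linear growth of $f_m$ implied by Assumption \ref{Assump1} (with $K_1 = \max(K, f_m(x_0))$ as in Lemma \ref{AL1}), which guarantees $\mathbb{E}\sup_{0\le t\le T}|x^{(m)}(t)| < \infty$ and likewise for $X^{(m)}$; this is the same finiteness used implicitly in Lemma \ref{AL1}, so no new work is required beyond citing it.
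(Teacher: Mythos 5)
Your proposal is correct and follows essentially the same route as the paper's proof: cancel the common constant-diffusion noise term, insert $\pm f_m(X^{(m)}(s))$ to split the drift difference, pass to the running supremum, take expectations with Fubini and Lemma \ref{AL1}, and close with Gr\"onwall. The only addition is your explicit remark on the finiteness of $\phi$ needed for Gr\"onwall, which the paper leaves implicit but which is the right point to flag.
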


\begin{proof}[\bf Proof of Lemma \ref{AL2}]
For every $0< \tau \le t \leq T$,
\begin{equation*} 
\begin{aligned}
     \vert x^{(m)}(\tau)-X^{(m)}(\tau)\vert&=\vert\int_0^\tau \lbrack f_m(x^{(m)}(s))-f_m(\overline {X^{(m)}}(s))\rbrack ds\vert\\
     &\leq \int_0^t \vert f_m(x^{(m)}(s))-f_m(X^{(m)}(s))\vert ds+\int_0^t\vert f_m(X^{(m)}(s))-f_m(\overline{X^{(m)}}(s))\vert ds\\
     &\leq K\int_0^t\vert x^{(m)}(s)-X^{(m)}(s)\vert ds+K\int_0^t\vert X^{(m)}(s)-\overline{X^{(m)}}(s)\vert ds \\
     & \leq K\int_0^t \underset{0\leq r\leq s}{\sup} \vert x^{(m)}(r)-X^{(m)}(r)\vert ds+K\int_0^T\vert X^{(m)}(s)-\overline{X^{(m)}}(s)\vert ds.
\end{aligned}
\end{equation*}
Then, we get
\begin{equation*} 
  \underset{0\leq \tau\leq t}{\sup}\vert x^{(m)}(\tau)-X^{(m)}(\tau)\vert\leq K\int_0^t \underset{0\leq r\leq s}{\sup} \vert x^{(m)}(r)-X^{(m)}(r)\vert ds+K\int_0^T\vert X^{(m)}(s)-\overline{X^{(m)}}(s)\vert ds.
\end{equation*}
Let $v(t)=\underset{0\leq r\leq t}{\sup}\vert x^{(m)}(r)-X^{(m)}(r)\vert$. Then, taking expectations on both sides and by Fubini Theorem, we have 
\begin{equation*} 
  \mathbb{E}v(t)\leq K\int_0^t \mathbb{E}v(s)ds+K\int_0^T \mathbb{E} \vert X^{(m)}(s)-\overline{X^{(m)}}(s)\vert ds.
\end{equation*}
By Lemma \ref{AL1}, we can get
\begin{equation*} 
  \mathbb{E}v(t)\leq K\int_0^t \mathbb{E}v(s)ds+KTC_6{(\Delta t)}^\frac1\alpha.
\end{equation*}
Using Gronwall’s inequality and taking $t=T$, we get
\begin{equation*} 
  \mathbb{E}v(T)\leq KC_6{(\Delta t)}^\frac1\alpha T\exp(KT),
\end{equation*}
which means
\begin{equation*}
  \mathbb{E}\underset{0\leq r\leq T}{\sup}\vert x^{(m)}(r)-X^{(m)}(r)\vert\leq KC_6{(\Delta t)}^\frac1\alpha T\exp(KT).
\end{equation*}
\end{proof}

\begin{lemma}
\label{2layer}
For every $f \in \mathcal{B}$ and constant $m>0$, there exists a two-layer neural network $f_{m}(\cdot ; \Theta)$, $f_{m}(\mathbf{x} ; \Theta)= \sum_{k=1}^{m} \tilde{a}_k \sigma\left(\bm{b_{k}}^{T} \mathbf{x}+c_{k}\right)$, where $\tilde{a}_k = \frac{a_{k}}{m}$. ($\Theta$ denotes the parameters $\{(a_{k}, \bm{b_{k}}, c_{k}), k = 1, \dots, m\}$, such that
\begin{equation*}
    \left\|f(\cdot)-f_{m}(\cdot ; \Theta)\right\|_{L^2}^{2} \leq \frac{3\|f\|_{\mathcal{B}}^{2}}{m}.
\end{equation*}

Furthermore, we have
\begin{equation*}
    \|\Theta\|_{\mathcal{P}}:= \sum_{j=1}^{m}\left|\tilde{a}_{j}\right|\left(\left\|\bm{b_{j}}\right\|_{1}+\left|c_{j}\right|\right) \leq 2\|f\|_{\mathcal{B}}.
\end{equation*}
\end{lemma}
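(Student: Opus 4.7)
The plan is to execute the standard Maurey-type probabilistic construction that underlies Barron approximation results. By the definition of Barron space, one can recast $f$ as an expectation: for any $\varepsilon>0$, there exists a probability measure $\rho$ on the parameter space such that $f(\mathbf{x}) = \mathbb{E}_{(a,\bm{b},c)\sim\rho}[a\,\sigma(\bm{b}^T\mathbf{x}+c)]$ and, after the standard magnitude-absorption trick (replacing the original signed measure by its total-variation version with density $|a|(\|\bm{b}\|_1+|c|)$ and rescaling $a$ accordingly), one may further arrange $|a|(\|\bm{b}\|_1+|c|) \leq (1+\varepsilon)\|f\|_{\mathcal{B}}$ pointwise on the support of $\rho$. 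I would then draw i.i.d.\ samples $(a_k,\bm{b}_k,c_k) \sim \rho$ for $k=1,\dots,m$ and set
\begin{equation*}
f_m(\mathbf{x};\Theta) = \frac{1}{m}\sum_{k=1}^m a_k\,\sigma(\bm{b}_k^T\mathbf{x}+c_k) = \sum_{k=1}^m \tilde{a}_k\,\sigma(\bm{b}_k^T\mathbf{x}+c_k),
\end{equation*}
which is exactly the form required by the statement with $\tilde{a}_k = a_k/m$.

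Next I would compute the two expected quantities separately. For each fixed $\mathbf{x}$, the scalar $f_m(\mathbf{x})$ is the average of $m$ i.i.d.\ random variables with mean $f(\mathbf{x})$, so by independence
\begin{equation*}
\mathbb{E}\bigl(f(\mathbf{x}) - f_m(\mathbf{x})\bigr)^2 = \tfrac{1}{m}\,\mathrm{Var}[a\,\sigma(\bm{b}^T\mathbf{x}+c)] \leq \tfrac{1}{m}\,\mathbb{E}[a^2\sigma(\bm{b}^T\mathbf{x}+c)^2].
\end{equation*}
For the usual 1-Lipschitz activation with $\sigma(0)=0$ on the normalized input domain $\|\mathbf{x}\|_\infty\le 1$, one has $|\sigma(\bm{b}^T\mathbf{x}+c)| \leq \|\bm{b}\|_1+|c|$, so the integrand is bounded by $|a|^2(\|\bm{b}\|_1+|c|)^2 \leq \|f\|_{\mathcal{B}}\cdot|a|(\|\bm{b}\|_1+|c|)$ using the pointwise bound from the absorption step; integrating in $\mathbf{x}$ against the data measure via Fubini and letting $\varepsilon\downarrow 0$ yields $\mathbb{E}\|f-f_m\|_{L^2}^2 \leq \|f\|_{\mathcal{B}}^2/m$. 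Analogously, $\mathbb{E}\|\Theta\|_{\mathcal{P}} = \mathbb{E}_\rho[|a|(\|\bm{b}\|_1+|c|)] \leq \|f\|_{\mathcal{B}}$.

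To finish, I would run a Markov-plus-union-bound argument: by Markov's inequality $\mathbb{P}(\|f-f_m\|_{L^2}^2 > 3\|f\|_{\mathcal{B}}^2/m) \leq 1/3$ and $\mathbb{P}(\|\Theta\|_{\mathcal{P}} > 2\|f\|_{\mathcal{B}}) \leq 1/2$, and since $1/3+1/2<1$ there is a strictly positive probability that neither event occurs. Hence there exists at least one sample realization $\Theta$ for which the two required inequalities hold simultaneously, which is the desired network. The main obstacle, and the step on which the whole argument hinges, is the magnitude-absorption reformulation of the Barron norm so that $|a|(\|\bm{b}\|_1+|c|)$ is pointwise bounded (rather than merely bounded in expectation); once this is in place, the variance estimate telescopes and the two concentration-by-first-moment bounds combine cleanly to give the constants $3$ and $2$ appearing in the statement.
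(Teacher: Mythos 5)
The paper does not prove this lemma itself but defers entirely to Theorem~1 of the cited Barron-space reference, and your argument is precisely the Maurey/Monte-Carlo construction used there: normalize the representing measure so that $|a|(\|\bm{b}\|_1+|c|)$ is essentially constant, bound the variance by $\|f\|_{\mathcal{B}}^2$ via $|\sigma(\bm{b}^T\mathbf{x}+c)|\le\|\bm{b}\|_1+|c|$, and combine Markov bounds of $1/3$ and $1/2$ on the two quantities so that a good realization exists. The proof is correct; the only point worth stating explicitly is that the $(1+\varepsilon)$ inflation from the absorption step is harmless because the union bound $1/3+1/2=5/6<1$ leaves slack, so one fixes $\varepsilon$ small rather than literally passing to the limit after selecting the realization.
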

Here, $m$ is the number of neurons in the hidden layers and  $\{(\tilde{a}_k, \bm{b_k}, c_k), k = 1, \dots, m\}$ are the parameters of our neural network $f_m$. The proof of Lemma \ref{2layer} can be found in Theorem 1 in \cite{weinan2021barron}.

Based on Lemma \ref{2layer}, two-layer neural networks in our model approximate function $f$ in the following sense:
\begin{equation}
\label{ineq}
   \vert\vert f_m(\cdot)-f(\cdot) \vert\vert_{L^2} \lesssim \frac{\|f \|_{\mathcal{B}}^{2}}{m^{\frac12}} \leq C_1m^{-\frac12},
\end{equation}
where $C_{1}$ is a constant. Considering the input of the diffusion neural network $g_m$ is always the initial value $x_0$ and the stochastic gradient descent (SGD) algorithm ensures its convergence \cite{rotskoff2018trainability}, we assume that $g_{m}(x_0)\to  g(x_0)$ when updating the parameters of the neural network.
\begin{assumption}
    \label{Assump2}
    Assume that there is an underlying SDE:
    \begin{equation}\label{mainSDE}
        Y(t)=Y(0)+\int_0^t f(Y(s)) ds+ g(x_0)L_{t}^{\alpha}, \quad Y(0)=x_0,
    \end{equation}
    where $L_{t}^{\alpha}$ is a symmetric $\alpha$-stable  L\'evy motion with $1<\alpha<2$. Continuous function $f$ and constant $g(x_0)$ satisfy the Lipschitz condition in Theorem \ref{Uniq}.
    % is bounded Lipschitz continuous with Lipschitz constant $L$. $g(x_0)$ is a constant relative to initial point $x_0$.
\end{assumption} 

\begin{assumption}
\label{Assump3}
Assume for every compact domain $D \subset \mathbb{R}^d$, the neural networks trained functions $f_m(\cdot)$ are uniformly Lipschitz continuous and converge to $f(\cdot)$ in $L^2(D)$. Assume constants $g_m(x_0)$ converge to $g(x_0))$.
% Assume neural network trained functions $f_m(\cdot)$ are the approximation function stated in Lemma \ref{2layer} for function $f(\cdot)$ in \eqref{mainSDE}. Assume constants $g_m(x_0)$ converge to $g(x_0))$.
\end{assumption} 
Based on Lemma \ref{2layer}, we can gain the convergence of $f_m(\cdot)$ in $L^2(D)$. Meanwhile, since the Barron norm of $f$ is bounded, it ensures the uniformly Lipschitz continuity of $f_m(\cdot)$ on every compact sets $D$. In practice, we choose a sufficiently large $D$ that real data varies within $D$ to guarantee this assumption.

Here, equation (\ref{mainSDE}) can be written as:

\begin{equation*} 
    Y(t)=Y(0)+\int_0^t f(Y(s))ds +\int_0^t\int_{|u|>0}g(x_0)u \widetilde{N}(ds,du), \quad Y(0)=x_0,
\end{equation*}
 where $\widetilde{N}(ds,du)$ is the Poisson random measure with compensator $ds \otimes \nu_\alpha(du)$. The $\alpha$-stable L\'evy measure $\nu_\alpha$ is 
\begin{equation*} 
    \nu_\alpha(du)=\frac{1}{|u|^{1+\alpha}}du, \quad 1<\alpha<2.
\end{equation*}

\begin{definition}
     Continuous functions $f_n(\cdot)$ \emph{locally uniformly converge} to function $f(\cdot)$ if for every compact subset $D$ in $\mathbb{R}^d$,
    \begin{equation*}
        \sup_{z \in D}|f_n(z)-f(z)| \to 0, \quad n \to \infty.
    \end{equation*}
\end{definition}

\begin{lemma}\label{uniconv} 
    For a sequence of locally uniformly Lipschitz continuous functions $\eta_n(\cdot) \in L^2_{\textrm{Loc}}(\mathbb{R}^d \cap C(\mathbb{R}^d))$ and a locally Lipschitz continuous function $\eta(\cdot) \in L^2(\mathbb{R}^d)$, suppose for every compact domain $D \subset \mathbb{R}^d$, 
    \begin{equation*} 
        \lim_{n \to \infty} \|\eta_n(\cdot)-\eta(\cdot)\|_{L^2(D)} = 0.
    \end{equation*}
     Then, functions $\eta_n(\cdot)$ also uniformly converge to function $\eta(\cdot)$ on every compact domain in $\mathbb{R}^d$.
    % which means for every bounded closed domain $D$ in $\mathbb{R}^d$, functions $\eta_n$ are uniformly Lipschitz continuous and bounded on $D$. 
\end{lemma}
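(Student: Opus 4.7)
\bigskip

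\noindent\textbf{Proof plan for Lemma \ref{uniconv}.} The plan is to argue by contradiction, using compactness of $D$ together with the uniform Lipschitz bound to convert a pointwise failure of uniform convergence into a fixed positive $L^2$-mass that cannot dissipate, contradicting the hypothesized $L^2$ convergence on compacta.

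First I would fix a compact set $D \subset \mathbb{R}^d$ and let $L$ denote a common Lipschitz constant that works for both $\eta$ and all $\eta_n$ on the slightly enlarged compact $D_1 := \{x : \mathrm{dist}(x,D)\le 1\}$; such an $L$ exists because $\{\eta_n\}$ is locally uniformly Lipschitz continuous and $\eta$ is locally Lipschitz. Suppose uniform convergence fails on $D$. Then there exist $\varepsilon>0$, a subsequence (relabelled $\eta_n$) and points $z_n \in D$ with
\begin{equation*}
|\eta_n(z_n)-\eta(z_n)| \ge \varepsilon.
\end{equation*}
By compactness of $D$, I can pass to a further subsequence along which $z_n \to z^\star \in D$.

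Next I would use the Lipschitz bound to propagate this pointwise gap to a ball of positive radius. Set $r := \varepsilon/(8L)$, so that for every $y \in B(z_n,r)$,
\begin{equation*}
|\eta_n(y)-\eta(y)| \ge |\eta_n(z_n)-\eta(z_n)| - |\eta_n(y)-\eta_n(z_n)| - |\eta(y)-\eta(z_n)| \ge \varepsilon - 2Lr = \tfrac{3\varepsilon}{4}.
\end{equation*}
For all sufficiently large $n$ the ball $B(z_n,r)$ is contained in the fixed compact set $D' := \{x : \mathrm{dist}(x,D)\le r\} \subset D_1$. Integrating over $B(z_n,r)$ I obtain
\begin{equation*}
\|\eta_n - \eta\|_{L^2(D')}^2 \;\ge\; \int_{B(z_n,r)} |\eta_n(y)-\eta(y)|^2\, dy \;\ge\; \bigl(\tfrac{3\varepsilon}{4}\bigr)^2 \omega_d\, r^d,
\end{equation*}
where $\omega_d$ is the volume of the unit ball in $\mathbb{R}^d$. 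The right-hand side is a strictly positive constant independent of $n$, contradicting the hypothesis $\|\eta_n-\eta\|_{L^2(D')}\to 0$. This contradiction shows that $\eta_n\to\eta$ uniformly on $D$, and since $D$ was an arbitrary compact subset of $\mathbb{R}^d$ the lemma follows.

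The only subtle step, and the one I would be most careful about, is the choice of a \emph{single} Lipschitz constant $L$ that controls all $\eta_n$ and $\eta$ simultaneously on a compact neighbourhood of $D$; this is exactly what the locally uniform Lipschitz assumption on $\{\eta_n\}$ provides, and it is essential because the radius $r = \varepsilon/(8L)$ must not shrink with $n$. Everything else is a routine compactness-plus-volume argument.
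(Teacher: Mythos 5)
Your proof is correct, and it takes a genuinely different route from the paper's. The paper argues in two stages: it first extracts, from any subsequence, a further subsequence converging almost everywhere (via the $L^2$ convergence), upgrades this to uniform convergence on $D$ through a finite $\delta$-net of balls of radius $\epsilon/3L$ together with the uniform Lipschitz bound, and then invokes the subsequence principle to conclude for the full sequence. You instead run a single direct contradiction: a pointwise gap $|\eta_n(z_n)-\eta(z_n)|\ge\varepsilon$ is propagated by the common Lipschitz constant to a lower bound $3\varepsilon/4$ on a ball of fixed radius $r=\varepsilon/(8L)$, which forces a positive $L^2$-mass $\bigl(\tfrac{3\varepsilon}{4}\bigr)^2\omega_d r^d$ on a fixed compact neighbourhood of $D$, contradicting the hypothesis. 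Your computation checks out ($\varepsilon-2Lr=3\varepsilon/4$), the only cosmetic issues being that you should take $r=\min(1,\varepsilon/(8L))$ so that $D'\subset D_1$ is guaranteed, and that the extraction of $z_n\to z^\star$ is never actually used and can be dropped. Your approach buys something the paper's does not: it is quantitative, giving an explicit estimate of the form $\sup_D|\eta_n-\eta|\lesssim_{L,d}\|\eta_n-\eta\|_{L^2(D')}^{2/(d+2)}$, whereas the paper's subsequence argument yields convergence with no rate. Conversely, the paper's structure (every subsequence has a uniformly convergent sub-subsequence) is the more standard template when one only has compactness rather than a pointwise-to-integral transfer, but here the uniform Lipschitz hypothesis makes your shorter argument available and preferable.
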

\begin{proof}
    Given a compact domain $D \subset \mathbb{R}^d$, we first prove that every subsequence of $\eta_n(\cdot)$ has a further subsequence that converges uniformly on the domain $D$.

    For every subsequence of $\{\eta_n(\cdot)\}_{n \geq 1}$, still denoted by $\{\eta_n(\cdot)\}_{n \geq 1}$, we know that $\eta_n(\cdot)$ converges to $\eta(\cdot)$ in $L^2(D)$. Then there is a subsequence of $\eta_n(\cdot)$, denoted by $\eta_{n_k}(\cdot)$, that converges to $\eta(\cdot)$ for $z \in D\backslash D_0$. $D_0 \subset D$ is a set of Lebesgue measure zero. 
    
    For every $\epsilon>0$, let $\delta=\epsilon/3L$, where $L$ is the largest Lipschitz constant of functions $\eta_{n_k}(\cdot)$ and $\eta(\cdot)$. 
    The open balls of radius $\delta$, denoted by $\{B(z,\delta),z\in D\backslash D_0\}$, form a open cover of domain $D$. And there is a finite sub-cover of domain $D$, denoted by $\{B(z_n,\delta)\}_{1 \leq n\leq M}$. Let $N$ be the largest number that for all $j>N$ and $\{z_n\}_{1 \leq n\leq M}$, $|\eta_{n_j}(z_n)-\eta(z_n)|<\epsilon/3$. Then, for every $z \in  D$, there exists a point $z_{l} \in \{z_n\}_{1 \leq n\leq M} $ such that $\|z_l-z\|<\delta$. According to the uniform Lipschitz continuity of $\eta_{z_k}(\cdot)$ on domain $D$, 
    \begin{equation*} 
        |\eta_{n_j}(z)-\eta(z)| \leq |\eta_{n_j}(z)-\eta_{n_j}(z_l)|+|\eta_{n_j}(z_l)-\eta(z_l)|+|\eta(z_l)-\eta(z)|<\epsilon.
    \end{equation*}
    It means that the subsequence $\eta_{n_k}(\cdot)$ converges uniformly to $\eta(\cdot)$ on $ D$.
   
    Next, we will show that $\eta_n(\cdot)$ converges uniformly to $\eta(\cdot)$ on $D$ with proof by contradiction. 

    Suppose there is $\tilde{\delta}>0$ and for all $k>0$, there exists $\tilde{z}_k\in  D$ and $n_k>k$ such that 
    \begin{equation*} 
        |\eta_{n_k}(\tilde{z}_k)-\eta(\tilde{z}_k)|>\tilde{\delta}.
    \end{equation*}
     For above $k$, $\{\eta_{n_k}(\cdot)\}_{k \geq 1}$ form a subsequence of $\{\eta_n(\cdot)\}_{n \geq 1}$. The previous argument shows the subsequence of $\{\eta_{n_k}(\cdot)\}_{k \geq 1}$ converges uniformly to $\eta(\cdot)$ on $ D$, which contradicts with the assumption. As a result, we have proved that $\eta_n(\cdot)$ converges uniformly to $\eta(\cdot)$ on $D$, which means functions $\eta_n(\cdot)$ uniformly converge to function $\eta(\cdot)$ on every compact domain in $\mathbb{R}^d$.
\end{proof}

Let $\mathbb{D}_{\mathbb{R}^d}[0,\infty)$ denote the Skorokhod space of functions from $[0,\infty)$ to $\mathbb{R}^d$. The following proposition is the cornerstone for proving the convergence of our model.

\begin{proposition}\label{Probconv}
    For the $d$-dimensional approximating SDE 
    \begin{equation}\label{approxSDE}
        y_n(t)=y_n(0)+\int_0^t b_n(y_{n}(s))ds + \sigma_n L^\alpha_t, \quad y_n(0)=x_0,
    \end{equation}
   where $L_t^\alpha$ is the $d$-dimensional symmetric $\alpha$-stable L\'evy process. 
    Assume the $d$-dimensional functions $b_n(\cdot)$ 
    satisfy the Lipschitz condition in Theorem \ref{Uniq} for all $n \geq 1$ and locally uniformly converge to $f(\cdot)$ of equation \eqref{mainSDE}.
    Suppose the diagonal entries of $d \times d$ diagonal matrix $\{\sigma_n\}_{n\geq 1}$ converge to $g(x_0)$ of equation \eqref{mainSDE} . Then the solution $y_n(\cdot)$ converge to $Y(\cdot)$ of equation \eqref{mainSDE} in probability with respect the Skorokhod distance in $\mathbb{D}_{\mathbb{R}^d}[0,\infty)$.
\end{proposition}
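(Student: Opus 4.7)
The strategy is to drive all equations by the \emph{same} symmetric $\alpha$-stable L\'evy motion $L^\alpha_t$ and with the same initial value $x_0$, so that both $y_n(\cdot)$ and $Y(\cdot)$ live on a common probability space. Subtracting \eqref{mainSDE} from \eqref{approxSDE} gives the pathwise identity
$$ y_n(t) - Y(t) = \int_0^t \bigl[ b_n(y_n(s)) - f(Y(s)) \bigr] \, ds + (\sigma_n - g(x_0) I)\, L_t^\alpha. $$
Adding and subtracting $b_n(Y(s))$ in the drift integrand splits it into a Lipschitz piece (to be absorbed by Gronwall) and a coefficient-error piece (which vanishes by locally uniform convergence). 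Crucially, since both paths are driven by a single fixed L\'evy path, their jumps occur at exactly the same times; hence uniform convergence on compact intervals in probability automatically implies Skorokhod $J_1$ convergence in probability, which is strictly stronger than the statement claimed.

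The next step is a localization. Fix $T > 0$ and for $R > 0$ set
$$ \tau_R^{(n)} = \inf\{t \geq 0 : |y_n(t)| \vee |Y(t)| \geq R\}. $$
On $[0, T \wedge \tau_R^{(n)}]$ both paths lie in the ball $B_R \subset \mathbb{R}^d$. By Assumption \ref{Assump3} and the hypothesis of the proposition, $\{b_n\}$ is uniformly Lipschitz on $B_R$ with some constant $L_R$, and by Lemma \ref{uniconv} the $b_n$ converge uniformly to $f$ on $B_R$. Writing $\varepsilon_n(R) = \sup_{x \in B_R} |b_n(x) - f(x)|$, $\delta_n = \|\sigma_n - g(x_0) I\|$, and $M_T = \sup_{0 \leq s \leq T} |L_s^\alpha|$ (a.s. finite since $L^\alpha$ is c\`adl\`ag), the triangle inequality followed by Gronwall's lemma yields
$$ \sup_{0 \leq t \leq T \wedge \tau_R^{(n)}} |y_n(t) - Y(t)| \leq \bigl( T\, \varepsilon_n(R) + \delta_n\, M_T \bigr)\, e^{L_R T}. $$
The right-hand side tends to $0$ in probability as $n \to \infty$ because $\varepsilon_n(R) \to 0$ and $\delta_n \to 0$ while $M_T$ is a.s. finite.

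To remove the localization, given $\epsilon > 0$ I would first choose $R$ so that $\sup_n P(\tau_R^{(n)} \leq T) < \epsilon/2$ and then choose $n$ large enough that the Gronwall bound is below any prescribed threshold with probability at least $1 - \epsilon/2$. The main obstacle is precisely this uniform-in-$n$ non-explosion estimate: because an $\alpha$-stable motion only has moments of order $p < \alpha < 2$, the familiar $L^2$ energy method for It\^o diffusions is unavailable. I would handle it by extracting a uniform linear-growth bound $|b_n(x)| \leq C(1 + |x|)$ (which follows from the uniform Lipschitz constants on compacts, together with the boundedness of $b_n(x_0) \to f(x_0)$), then combining Theorem \ref{Uniq} with a Khasminskii-type truncation and a Gronwall estimate in $L^p$ with $p \in (1, \alpha)$ to obtain $\sup_n \mathbb{E} \sup_{0 \leq s \leq T} |y_n(s)|^p < \infty$. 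Markov's inequality then supplies $\sup_n P(\tau_R^{(n)} \leq T) \to 0$ as $R \to \infty$, completing the argument.
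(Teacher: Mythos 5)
Your argument is correct in substance, but it takes a genuinely different route from the paper. The paper does no pathwise estimation at all: it rewrites \eqref{approxSDE} as $y_n=U+\int_0^\cdot F_n(y_n(s),s)\,dZ_n(s)$ with $F_n=(b_n,\sigma_n)$ and $Z_n=(s,L^\alpha_s)$, verifies the semimartingale tightness condition C2.2(i) of \cite{Kurtz1990} via the L\'evy--It\^o decomposition of $L^\alpha$ into a square-integrable small-jump martingale plus a finite-variation big-jump part, checks locally uniform convergence $F_n\to F$ (condition C5.4), and concludes by Corollary 5.6 of \cite{Kurtz1990} together with the uniqueness from Theorem \ref{Uniq}. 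Your coupling-plus-Gronwall argument instead exploits the additive structure of the noise (constant matrices $\sigma_n$ multiplying one common path of $L^\alpha$) to subtract the two equations pathwise, and it yields locally uniform convergence in probability, which is strictly stronger than $J_1$-convergence in probability; the paper's route, by contrast, generalizes to settings where the driving noises themselves only converge weakly. Two remarks. First, the ``main obstacle'' you identify in the de-localization step is not actually an obstacle here, and the Khasminskii/$L^p$ machinery is unnecessary: since the noise is additive with $\sup_n\|\sigma_n\|<\infty$, a pathwise Gronwall estimate gives $\sup_{t\le T}|y_n(t)|\le \bigl(|x_0|+CT+\sup_n\|\sigma_n\|\,M_T\bigr)e^{CT}$ with $M_T=\sup_{s\le T}|L^\alpha_s|$ a.s.\ finite and the bound independent of $n$, so $\sup_n\mathbb{P}(\tau_R^{(n)}\le T)\to 0$ as $R\to\infty$ follows directly, with no moment computations. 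Second, the one point where your route is less robust: the factor $e^{L_RT}$ requires the Lipschitz constants of the $b_n$ to be bounded uniformly in $n$ on $B_R$, and locally uniform convergence alone does not supply this (consider $b_n(x)=n^{-1}\sin(n^2x)$); you are implicitly reading the hypothesis that the $b_n$ ``satisfy the Lipschitz condition in Theorem \ref{Uniq}'' as providing a single constant $K$ valid for all $n$. That reading is consistent with how the proposition is applied (Assumption \ref{Assump1} gives a uniform $K$), but it should be stated explicitly, whereas the Kurtz--Protter argument needs no such uniformity.
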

\begin{proof}
    The proof relies on Theorem 5.4 and Corollary 5.6 of \cite{Kurtz1990} and we will verify conditions C2.2(i) and C5.4 therein. Let
    \begin{equation*}
        F_n(z,s)=\left(\begin{array}[]{cc}
            b_n(z) & \sigma_n
        \end{array}\right), \quad Z_n(s)=\left(\begin{array}[]{c}
            s \\
            L_s^\alpha
        \end{array}\right) \quad \text{and } U_n(s)=U(s)=x_0,
    \end{equation*}
    % : \mathbb{R}^d \times [0,\infty) \to \mathbb{R}^{d\times(d+1)}
     where $Z_n$ are a $(d+1)$-dimensional stochastic processes and $F_n$ are $d\times(d+1)$ matrix-valued functions. Then equation \eqref{approxSDE} can be written in the following form,
    \begin{equation} 
        y_n(t)=U(t)+\int_0^t F_n(y_n(s),s)dZ_n(s).
    \end{equation}
     We first verify condition C2.2(i) of \cite{Kurtz1990}. According to L\'evy-Ito decomposition, we have
    \begin{equation*} 
        L_t^\alpha  =\int_0^t\int_{|u|\leq 1} u \widetilde{N}(ds,du)+\int_0^t\int_{|u|> 1} u N(ds,du):= M_t+A_t,
    \end{equation*}
     and $Z_n$ has the following decomposition,
    \begin{equation*}
        Z_n(t)  =\left(\begin{array}[]{c}
            0 \\
            M_t
        \end{array}\right)+\left(\begin{array}[]{c}
            t \\
            A_t
        \end{array}\right) :=\widetilde{M}_t+\widetilde{A}_t.
    \end{equation*}
We know process $M_t$ is a martingale and $A_t$ is of finite total variation. We denote the total variation of $A_t$ by $T_t(A)$. The quadratic variation of $M_t$, denoted by $[M]_t$, and its expectation are
    \begin{equation*} 
        [M]_t=\int_0^t \int_{|u|\leq 1} |u|^2 N(ds,du) \quad \text{and } \quad \mathbb{E}([M]_t)=\int_0^t \int_{|u|\leq 1} |u|^2 ds\nu_\alpha(du)<\infty.
    \end{equation*}
    It means that for every stopping time $\tau$ and time $t$, we have 
    \begin{equation*} 
        \mathbb{E}([\widetilde{M}]_{t\wedge \tau}+T_{t\wedge \tau} (\widetilde{A}))\leq \mathbb{E}([M]_{t}+T_{t}(A)+t) < \infty.
    \end{equation*}
    Then, condition C2.2(i) is satisfied.
    Next, we will verify condition C5.4.

    Let $T_1[0,\infty)$ denote the collection of non-decreasing mappings $\lambda$ of $[0,\infty)$ onto $[0,\infty)$ satisfying $\lambda(t+h)-\lambda(t) \leq h$ with respect to all $t,h \geq 0$. By Example 5.3 of \cite{Kurtz1990}, we know continuous functions $F_n(\cdot)$ and $F(\cdot)$ define mappings $G_n$ and $G$ through 
    \begin{equation*} 
       G_n(\xi,\lambda,s)=F_n(\xi(\lambda(s))), \quad  G(\xi,\lambda,s)  =F(\xi(\lambda(s))), \quad \lambda \in T_1[0,\infty),
    \end{equation*}
    where function $\xi\in \mathbb{D}_{\mathbb{R}^{(d+1)}}[0,\infty)$. With a given compact set $\mathscr{H} \subset D_{\mathbb{R}^{d+1}}[0, \infty) \times T_1[0, \infty)$, we know the projections
    \begin{equation*}
        \mathscr{H}_1=\{\xi: (\xi, \lambda) \in \mathscr{H}\}, \quad \mathscr{H}_2=\{\lambda: (\xi, \lambda) \in \mathscr{H}\} 
    \end{equation*}
    are compact sets in $D_{\mathbb{R}^{d+1}}[0, \infty)$ and $T_1[0, \infty)$ respectively. Then, for every $t>0$, there exists $M>0$ such that, 
    \begin{equation*} 
        \sup_{(\xi,\lambda)\in\mathscr{H}} \sup_{s \leq t}(|\xi(\lambda(s))|+|\lambda(s)|) \leq M. 
    \end{equation*}
     As a result, we have
    \begin{equation*}
    \begin{aligned}
        \sup_{(\xi,\lambda)\in\mathscr{H}} \sup_{s \leq t}|G_n(\xi,\lambda,s)-G(\xi,\lambda,s)|& =\sup_{(\xi,\lambda)\in\mathscr{H}} \sup_{s \leq t}(|b_n(\xi(\lambda(s)))-b(\xi(\lambda(s)))|+|\sigma_n-\sigma|) \\
        & \leq \sup_{|u|\leq M}(|b_n(u)-b(u)|+|g_n-g|) \to 0, \quad \textrm{as } n \to \infty.
    \end{aligned}
    \end{equation*}
     Condition C5.4(i) of \cite{Kurtz1990} is then verified. Condition C5.4(ii) is a direct result of the continuity of functions $F_n$ and condition C5.4 is verified. By Theorem \ref{Uniq}, we know that solution $Y$ to equation \eqref{mainSDE} exists uniquely. Then, Corollary 5.6 in \cite{Kurtz1990} can be applied to ensure that solutions $y_n$ of equations \eqref{approxSDE} converge in probability to solution $Y$ of equation \eqref{mainSDE} with respect to the Skorokhod distance in $\mathbb{D}_{\mathbb{R}^d}[0,\infty)$.
\end{proof}

 The following theorem is the main result of this section. It proves the convergence of our model in $d$-dimension.
\begin{theorem}\label{MR}
     Suppose the two-layer neural network trained $d$-dimensional functions $f_m(\cdot)$ and $d \times d$ diagonal matrices $g_m(x_0)$ satisfy the Assumption \ref{Assump1} and Assumption \ref{Assump3}. Assume further the Assumption \ref{Assump2}. Then, the continuous dynamics of our LDE-Net model $X^{(m)}$ converge in probability to the solution $Y$ of equation \eqref{mainSDE}.
\end{theorem}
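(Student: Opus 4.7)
The plan is to realize the convergence $X^{(m)}\Rightarrow Y$ as a composition of two independent limiting arguments, one that sends the width $m\to\infty$ (coefficient approximation) and one that sends the step size $\Delta t\to 0$ (time discretization), and then to glue them by a triangle inequality in the Skorokhod metric on $\mathbb{D}_{\mathbb{R}^d}[0,\infty)$. Throughout, I will work with the continuous-time SDE $x^{(m)}(t)$ of equation \eqref{SDEnew} as an intermediate object between the Euler--Maruyama process $X^{(m)}(t)$ of equation \eqref{continuousEM} and the target $Y(t)$ of equation \eqref{mainSDE}.

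\textbf{Step 1 (coefficient limit).} First I would apply Proposition \ref{Probconv} with $b_n(\cdot)=f_m(\cdot)$ and $\sigma_n=g_m(x_0)I_d$, so that its solution $y_n$ corresponds to $x^{(m)}$. The hypotheses of the proposition must be verified: Assumption \ref{Assump1} supplies the uniform Lipschitz constant $K$ for the family $\{f_m\}$, and Assumption \ref{Assump3} gives $L^2$-convergence of $f_m$ to $f$ on every compact set together with convergence $g_m(x_0)\to g(x_0)$. Combining the uniform Lipschitz property with the $L^2$ convergence lets me invoke Lemma \ref{uniconv} to upgrade $L^2$-convergence to locally uniform convergence, which is exactly the hypothesis of Proposition \ref{Probconv}. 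The conclusion is that $x^{(m)}\to Y$ in probability in $\mathbb{D}_{\mathbb{R}^d}[0,\infty)$ as $m\to\infty$.

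\textbf{Step 2 (discretization limit, uniform in $m$).} Lemma \ref{AL2} yields
\begin{equation*}
\mathbb{E}\sup_{0\le t\le T}\bigl|x^{(m)}(t)-X^{(m)}(t)\bigr|\le K\,C_6(m)\,(\Delta t)^{1/\alpha}\,T\exp(KT).
\end{equation*}
The delicate point, and what I expect to be the main obstacle, is controlling $C_6(m)$ uniformly in $m$: recall $C_6=4K_1(1+C_5)+2|g_m(x_0)|C_3$ with $K_1=\max(K,|f_m(x_0)|)$ and $C_5$ built from $|f_m(x_0)|$ and $|g_m(x_0)|$. Assumption \ref{Assump1} already pins $K$ down uniformly, and the locally uniform convergence $f_m\to f$ and scalar convergence $g_m(x_0)\to g(x_0)$ from Step 1 ensure that $|f_m(x_0)|$ and $|g_m(x_0)|$ are bounded uniformly in $m$. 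Hence $\sup_m C_6(m)<\infty$ and, by Markov's inequality, $X^{(m)}-x^{(m)}\to 0$ in probability in the uniform metric on $[0,T]$, uniformly in $m$, provided $\Delta t=\Delta t(m)\to 0$.

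\textbf{Step 3 (gluing).} Let $d_S$ denote the Skorokhod metric on $\mathbb{D}_{\mathbb{R}^d}[0,T]$. Since $d_S$ is dominated by the uniform metric, I would combine Steps 1 and 2 through
\begin{equation*}
d_S\bigl(X^{(m)},Y\bigr)\;\le\;\sup_{0\le t\le T}\bigl|X^{(m)}(t)-x^{(m)}(t)\bigr|\;+\;d_S\bigl(x^{(m)},Y\bigr),
\end{equation*}
and for arbitrary $\varepsilon>0$ bound the probability that the left-hand side exceeds $\varepsilon$ by the sum of the probabilities that each of the two summands exceeds $\varepsilon/2$. Choosing $\Delta t=\Delta t(m)\to 0$ as $m\to\infty$, Step 2 makes the first probability vanish and Step 1 makes the second vanish. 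Letting $T\to\infty$ along a countable sequence transfers the result to $\mathbb{D}_{\mathbb{R}^d}[0,\infty)$, completing the proof that $X^{(m)}$ converges to $Y$ in probability with respect to the Skorokhod topology.
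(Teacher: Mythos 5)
Your proposal is correct and follows essentially the same route as the paper: the intermediate process $x^{(m)}$ of \eqref{SDEnew}, Lemma \ref{AL2} for the discretization error, Lemma \ref{uniconv} together with Proposition \ref{Probconv} for the coefficient limit, and a triangle inequality in the Skorokhod metric with $\Delta t\to 0$ and $m\to\infty$. Your Step 2 is in fact slightly more careful than the paper's argument, which invokes Lemma \ref{AL2} without remarking that $C_6$ depends on $m$ through $|f_m(x_0)|$ and $|g_m(x_0)|$; your observation that these quantities are uniformly bounded in $m$ closes that small gap.
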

\begin{proof}  
    According to Lemma \ref{AL2}, for  $X^{(m)}$  and $x^{(m)}$ defined in \eqref{continuousEM} and \eqref{SDEnew} respectively, we have  
    \begin{equation*} 
        \mathbb{E}(d_T(X^{(m)}, x^{(m)})) \leq \mathbb{E}\big(\sup_{0\leq t \leq T}\vert X^{(m)}-x^{(m)}\vert\big) \to 0, \quad \textrm{as } \Delta t \to 0,
    \end{equation*}
     where distance $d_T$ is the distance in the Skorokhod space $\mathbb{D}_{\mathbb{R}^d}[0,T]$. It means that $X^{(m)}$ converge to $x^{(m)}$ in probability. 
    % the distance $d(X^{(m)}, x^{(m)})$ converges in probability to $0$. 
    % It implies that $X^{(m)}$ converges weakly to  $x^{(m)}$ in the Skorokhod topology of $\mathbb{D}(0,T)$.

    % Note that the activation function for $f_m(\cdot)$ is ReLU function. According to Lemma \ref{2layer}, neural networks learned functions $f_m(\cdot)$ are locally uniformly bounded Lipschitz continuous. 
    With Assumption \ref{Assump3} and Lemma \ref{uniconv}, we know continuous functions $f_m(\cdot)$ converge to $f(\cdot)$ on every compact set  $D \subset \mathbb{R}^d$. Take $f_m(\cdot)$ and $g_m(x_0)$ as the drift term $b_n(\cdot)$ and the diffusion term $\sigma_n$ respectively in equation \eqref{approxSDE}. 
    Then, functions $f_m(\cdot)$ satisfy the condition proposed for the drift term $b_n(\cdot)$ in Proposition \ref{Probconv}. As constants $g_m(x_0)$ are bounded and also converge to $g(x_0)$, by Proposition \ref{Probconv}, we know that solution $x^{(m)}$ of equation \eqref{SDEnew} converges in probability to solution $Y$ of equation \eqref{mainSDE} with respect to the Skorokhod distance in $\mathbb{D}_{\mathbb{R}^d}[0,T]$. Then we know for every $\delta>0$,
    \begin{equation*} 
        \mathbb{P}(d_T(Y,X^{(m)})>\delta)\leq \mathbb{P}(d_T(X^{(m)},x^{(m)})>\frac{\delta}{2})+(d_T(Y,x^{(m)})>\frac{\delta}{2}) \to 0, \quad \textrm{as } m \to \infty \textrm{ and } \Delta t \to 0.
    \end{equation*}
    Then, we have proved that $X^{(m)}$ converge in probability to the solution $Y$ of equation \eqref{mainSDE} with respect the Skorokhod distance in $\mathbb{D}_{\mathbb{R}^d}[0,T)$.
    %  {\color{red} \bf $L^2$-convergence of $f_m$+locally uniformly boundedness and uniformly continuity imply conditions \eqref{convecond1} and \eqref{convecond2}? }

    % According to Lemma \ref{weakconv}, it is clear that $x^{(m)}$ converges weakly to $Y$ in the Skorokhod topology of $\mathbb{D}(0,T)$.
    % Applying Theorem 3.1 in Section 3, Chapter 1 of  \cite{Billingsley1999}, we obtain the desired result.
\end{proof}

\section{Experiments}
In this section, we first preprocess the real data through Phase Space Reconstruction with Cao's method in Sec. 4.1 (see Appendix \ref{appendixA}). After that, we compare our model performance under various scenarios from Sec 4.2 to Sec. 4.5. To be specific, in order to bring the real data into our LDE-Net model, we find the optimal delay time and embedding dimension for the input vector in Sec. 4.1, and then show the prediction accuracy for three different stock trending patterns in Sec. 4.2. Furthermore, we choose two appropriate patterns of them to compare our model with several baselines on the same test data with the same evaluation measurements in the following sections. In addition, we also numerically analyze the effect of different $\alpha$ values in  L\'evy motion and forecasting days on prediction accuracy. The code of all experiments has been open sourced on our GitHub repository: 
\href{github}{https://github.com/senyuanya/LDENet}.%\cite{Github}.

\subsection{Dataset}
Time series in many real scenarios usually appear to be in irregular and hard to understand, so it is important to uncover the  governing laws hidden under a non-linear mapping. Chaotic time series can be exactly a type of sequence that restores its original system through mapping. The chaotic theory of nonlinear complex dynamics science holds that: all realistic nonlinear systems are in dynamic evolution, and may present periodic ordered states and non-periodic disordered states (that is, chaotic states) during their evolution \cite{Zhanhui2009ChaosCO}. \cite{tian2020point} just employs phase space reconstruction to reconstruct the carbon price data for feature selection. \cite{stergiou2021application} takes Lyapunov time as the safe horizon and compares the long-term prediction effects of four different neural network models inside and outside the safe horizon. It was found that even outside the safe horizon, high accuracy can also be obtained. Considering the effectiveness of this methodology, we also use phase space reconstruction to effectively extract data feature representation by viewing the dynamics of monitoring data.

Due to the importance of the financial market, which is related to national economic development and social stability, we select stock data as time series. Certainly, our approach is still applicable to time series data such as weather and electricity. In this paper, we consider different types of stock data and divide them separately into a training set and a testing set according to the ratio of $4:1$. Then we use the Wolf method \cite{wolf1985determining} to calculate the maximum Lyapunov exponent of data and determine whether the financial time series is chaotic. As for the embedding time delay $\tau$, the autocorrelation function or mutual information \cite{karakasidis2009detection} is a universal and effective method for the stationary series. However, according to Zbilut \cite{zbilut2005use} and Yao \cite{yao2017recurrence}, choosing a lag value $\tau = 1$ is usually appropriate for non-stationary data like financial time series. So we also choose the delay time $\tau = 1$. Next, we analyze the effect of the embedding dimension on the maximum Lyapunov exponent. Then, we calculate the best embedding dimension of the stock data and reconstruct the state space with the Cao's method \cite{Cao1997PracticalMF}. Finally, we use the reconstructed data as the input of the LDE-Net model and perform a multi-step prediction, that is, using M (the best embedding dimension) days of data prior to the prediction date and forecast the trend for the next four days. In addition, we compare our proposed model with some other different models and find out the impact of different $\alpha$ on the prediction.

\subsubsection*{Three Types of stock Patterns}
According to the different fluctuations of data sets, we select three stocks corresponding to SSE Energy Index, SSE50 Index, and SSE Consumer Index. For SSE Energy Index, it fluctuates greatly on the training set, but behaves relatively smoothly on the testing set. For SSE50 Index, it shows large fluctuations on the training set and relatively small on the testing set. For SSE Consumer Index, it performs relatively smoothly on the training set but fluctuates greatly on the testing set. Furthermore, we select SSE Energy index with a total sample of 3086, SSE50 Index with a total sample of 4194 and SSE Consumer Index with a total sample of 2497 separately from 2009-2021, 2004-2021, and 2011-2021. (See Figure \ref{fig:TT})

\begin{figure}[H]
    \centering    
    \subfigure {
    \label{fig:SSEETT}     
    \includegraphics[width=5cm,height=3.5cm]{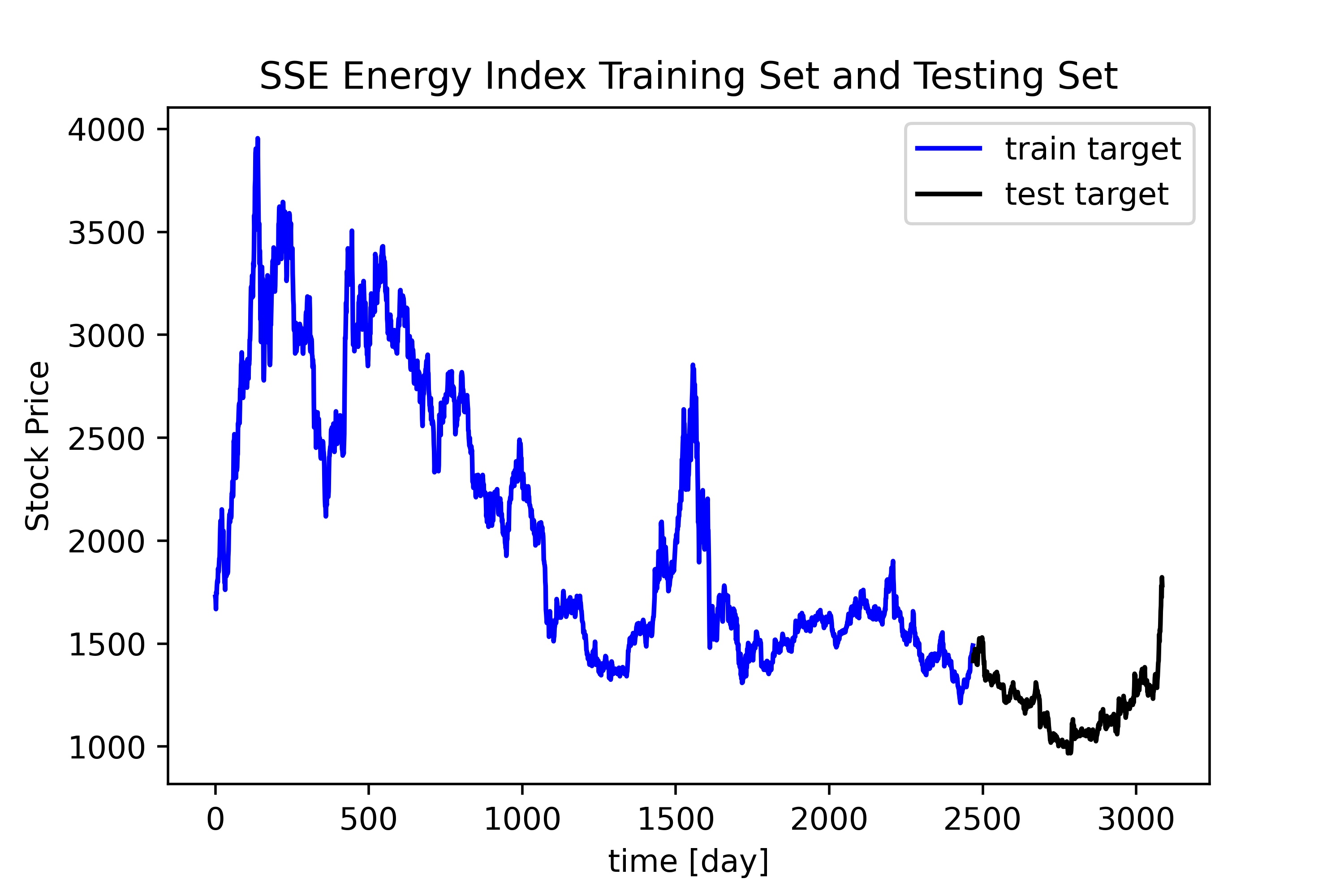}  
    }     
    \subfigure{ 
    \label{fig:SSE50TT}     
    \includegraphics[width=5cm,height=3.5cm]{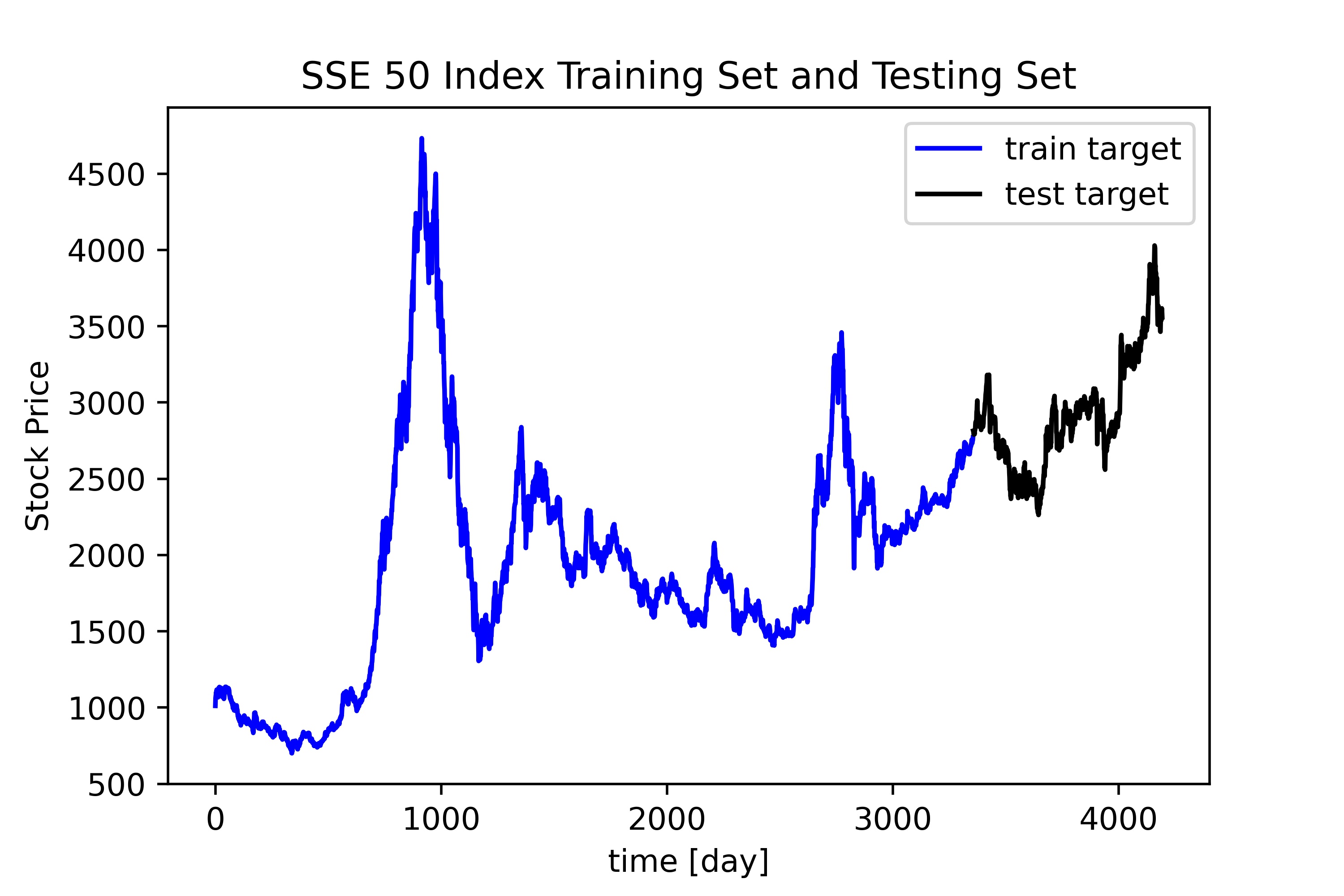}     
    }    
    \subfigure { 
    \label{fig:SSECTT}     
    \includegraphics[width=5cm,height=3.5cm]{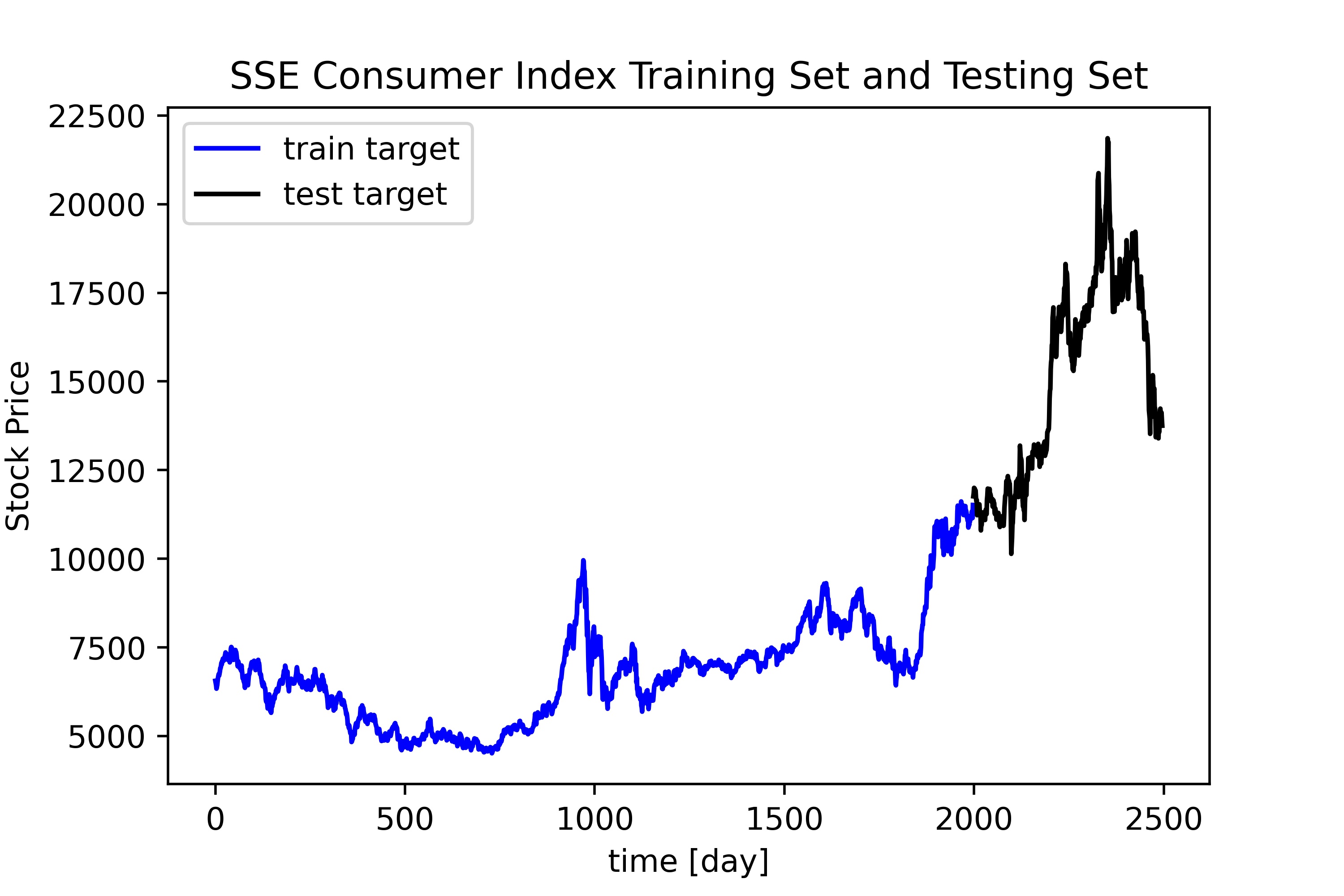}     
    }   
    \caption{ left: Data set of SSE Energy index. middle: Data set of SSE 50 index. right: Data set of SSE Consumer Index.}     
    \label{fig:TT}     
\end{figure}

\subsubsection*{ Lyapunov Exponent and Intrinsic Embedding Dimension}
After completing stock selection, we need to determine the input structure. This requires the use of the phase space reconstruction technique we mentioned earlier. Considering that stock data cannot involve future information in practical applications, we choose training data to identify chaotic characteristics and determine embedding dimensions. Therefore, the sample sizes used for phase space reconstruction for the above three stocks are 2469, 3356, and 1998 respectively. According to the calculation method of the maximum Lyapunov exponent (see Appendix \ref{chaos}), we conclude that the three stocks are all chaotic time series. Moreover, we also show the safe prediction horizons \cite{stergiou2021application} according to Lyapunov time in Table \ref{tab:The Maximum Lyapunov Exponent}.

\begin{table}[H]
	\caption{ Phase Space Reconstruction Indicators}
	\centering
	\begin{tabular}{ccc}
		\toprule
		Data Set     & Maximum Lyapunov Exponent     & Lyapunov Time ( Safe Prediction Horizon ) \\
		\midrule
		SSE Energy Index & 0.0231  & 43   \\
		SSE50 Index & 0.0242  & 41     \\
		SSE Consumer Index     & 0.0324 & 30     \\
		\bottomrule
	\end{tabular}
	\label{tab:The Maximum Lyapunov Exponent}
\end{table}

Subsequently, we explore the correlation between the maximum Lyapunov exponent and embedding dimension under the condition of fixed delay time. From Figure \ref{fig:EDL}, all the maximum Lyapunov exponents have descending trend and tend to be stable with the increment of embedding dimension. This indicates that the maximum Lyapunov exponent will converge over the embedding dimension. Frank explains that this phenomenon is caused by the fact that high-dimensional embedding can simultaneously reduce and diffuse the neighbor distribution, resulting in a reduced Lyapunov exponent estimation \cite{frank1990recovering}. To determine the optimal embedding dimension when the maximum Lyapunov exponent plateaus, we choose Cao's method (see Appendix \ref{embedding}). After calculation, the optimal embedding dimensions for the three stocks are: $m_{1}=20$, $m_{2}=23$, $m_{3}=15$ respectively. Based on these results, we obtain the input data structure. Moreover, considering that Lyapunov time can be regarded as the safe prediction horizon \cite{stergiou2021application}, we take the safe horizon as the benchmark input size and compare the performance of models inside and outside this benchmark. From the tables in Appendix \ref{Safe prediction horizon} (Table \ref{tab: SSEenergyhorizon}, Table \ref{tab: SSE50horizon}), we can know that our model has good prediction performance even outside the safe horizon, which is similar to the result in \cite{stergiou2021application}.

\begin{figure}[H]
    \centering    
    \includegraphics[width=16cm,height=4.5cm]{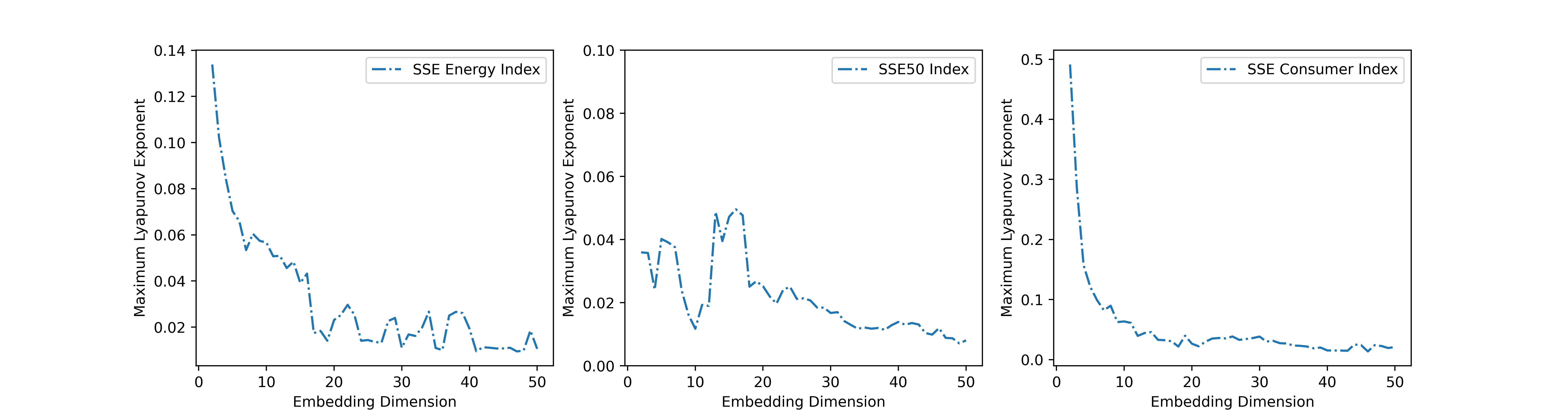}     
    \caption{ Relationship between embedding dimensions and maximum Lyapunov exponent. ( left: SSE Energy Index, middle: SSE50 Index, right: SSE Consumer Index) }
    \label{fig:EDL}   
\end{figure}

\begin{figure}[H]
    \centering    
    \subfigure {
    \label{fig:SSENYED}     
    \includegraphics[width=5cm,height=3.5cm]{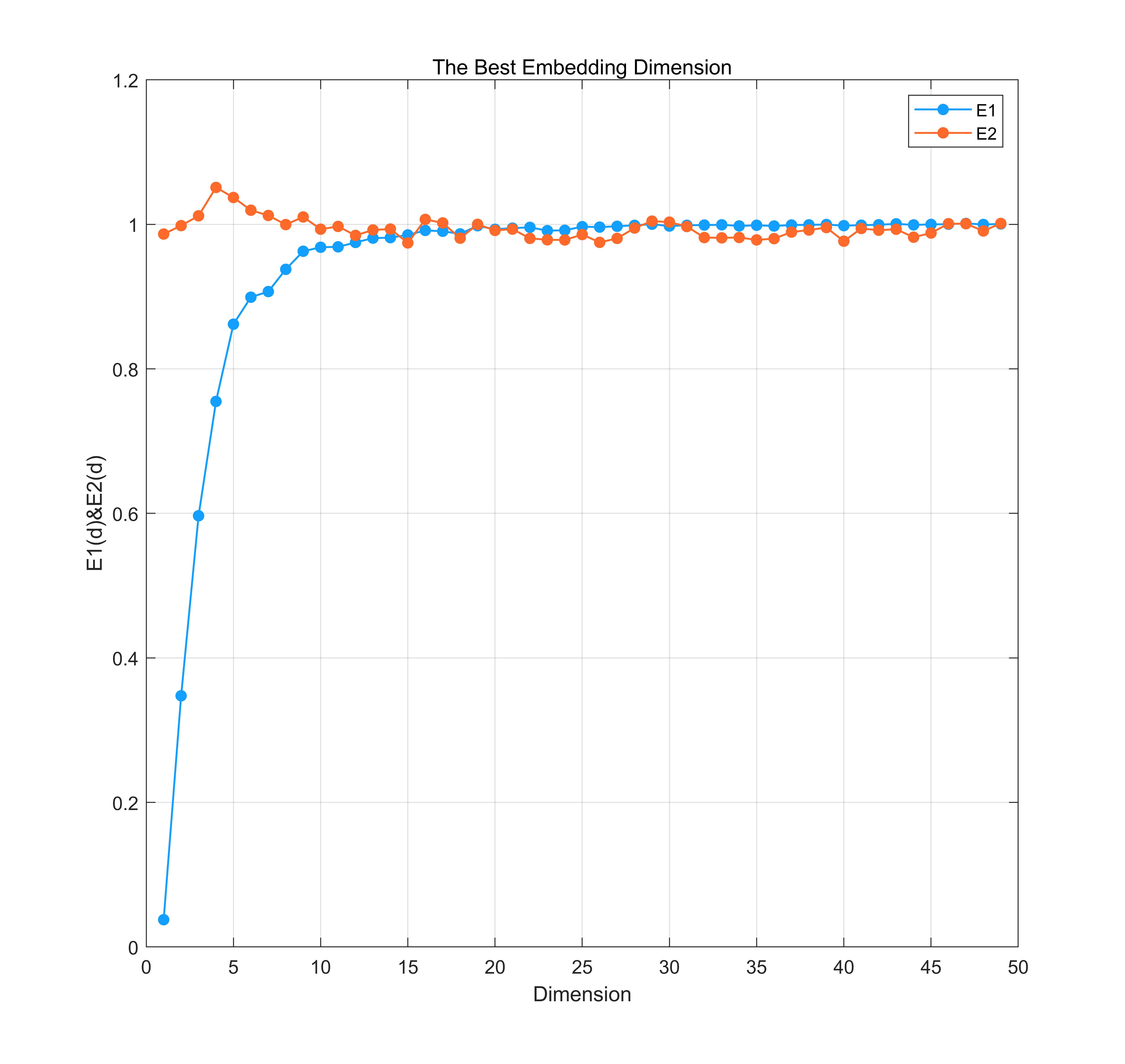}  
    }     
    \subfigure{ 
    \label{fig:SSE50ED}     
    \includegraphics[width=5cm,height=3.5cm]{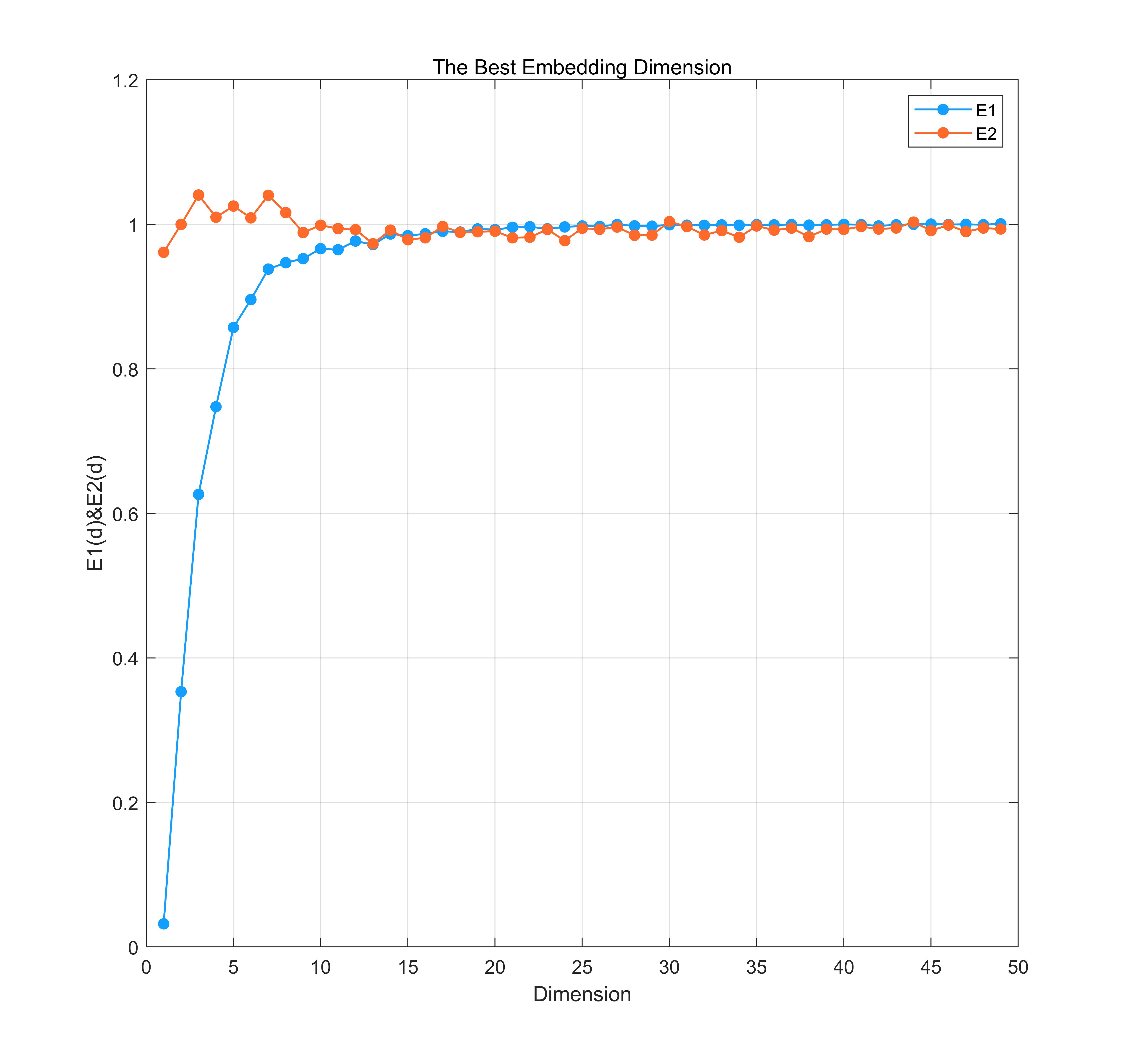}     
    }    
    \subfigure { 
    \label{fig:SSEXFED}     
    \includegraphics[width=5cm,height=3.5cm]{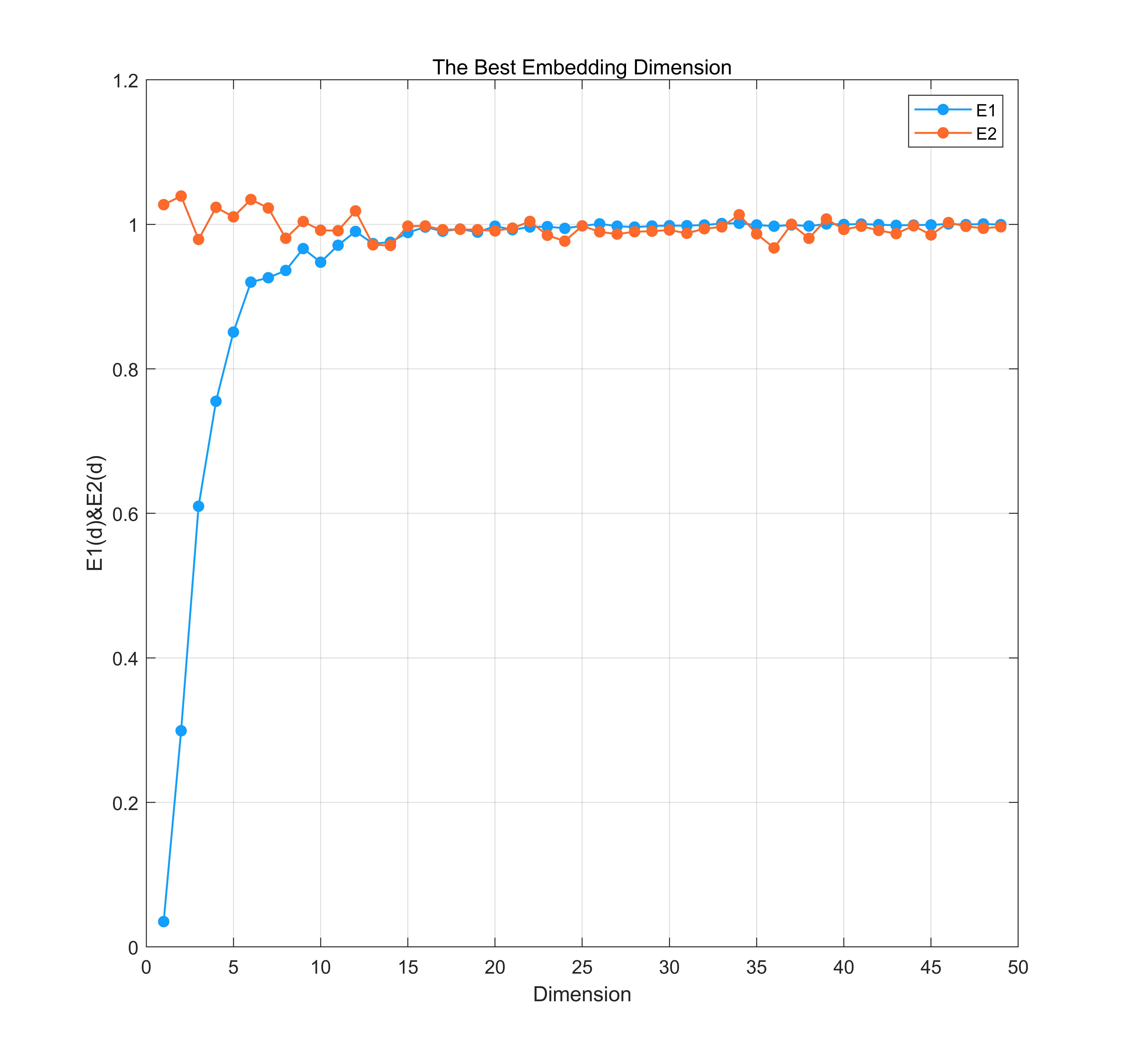}     
    }   
    \caption{ The Best Embedding Dimensions. ( left: SSE Energy Index, middle: SSE50 Index, right: SSE Consumer Index) }     
    \label{fig}     
\end{figure}

\subsection{Experiment 1: Forecasting with different stock patterns}\label{exp1}
When comparing different stock trend types, we only consider the forecast effect of the fourth day under the same $\alpha=1.5$. By observing the results of loss, we can determine which type of stock data our model is more suitable for. Here we mainly observe the prediction effect of the fourth day for judgment.

\begin{figure}[H]
    \centering    
    \subfigure {
     \label{fig:SSENYt+4-1.5}     
    \includegraphics[width=5cm,height=4cm]{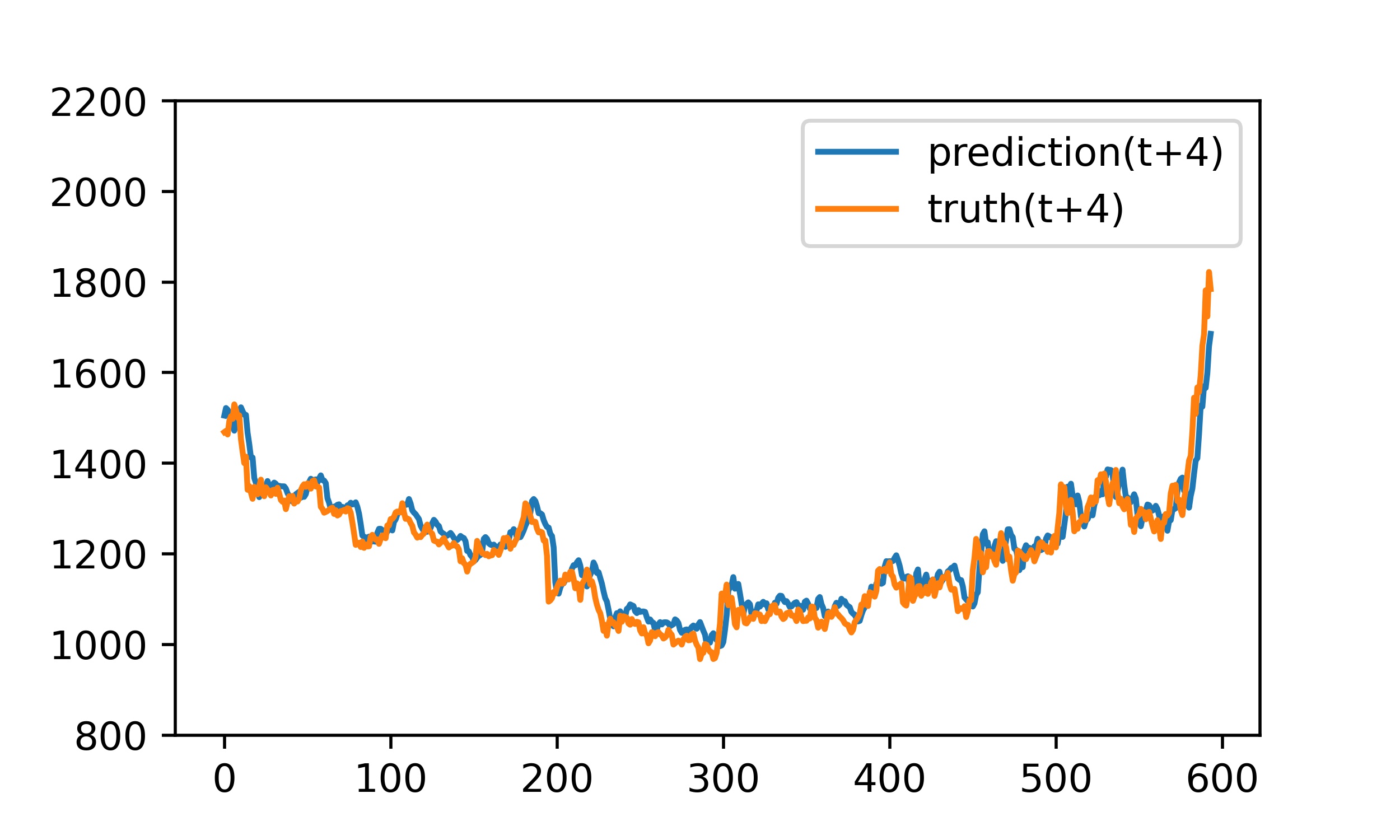}  
    }     
    \subfigure{ 
    \label{fig:SSEt+4-1.5}     
    \includegraphics[width=5cm,height=4cm]{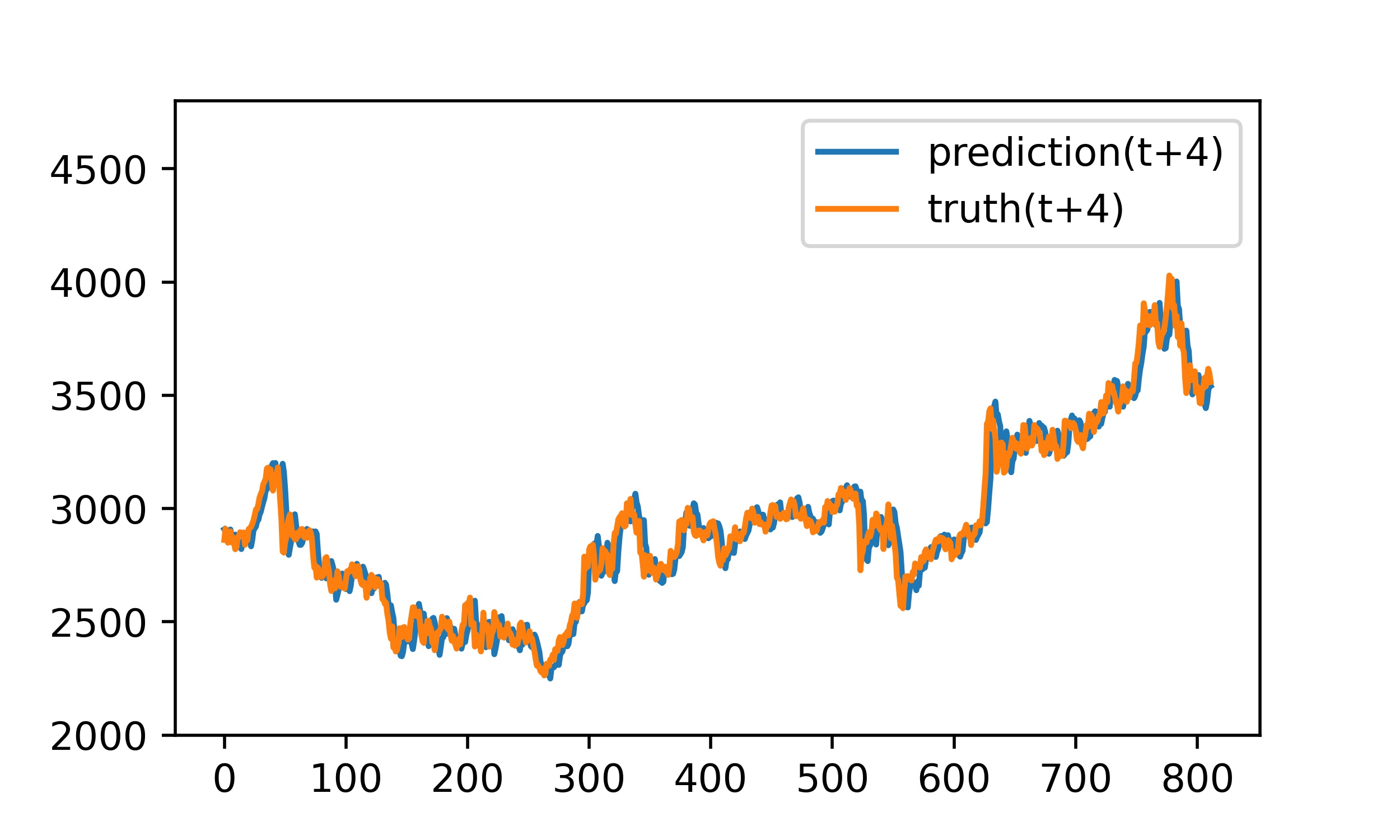}     
    }    
    \subfigure { 
    \label{fig:SZXFt+4-1.5}     
    \includegraphics[width=5cm,height=4cm]{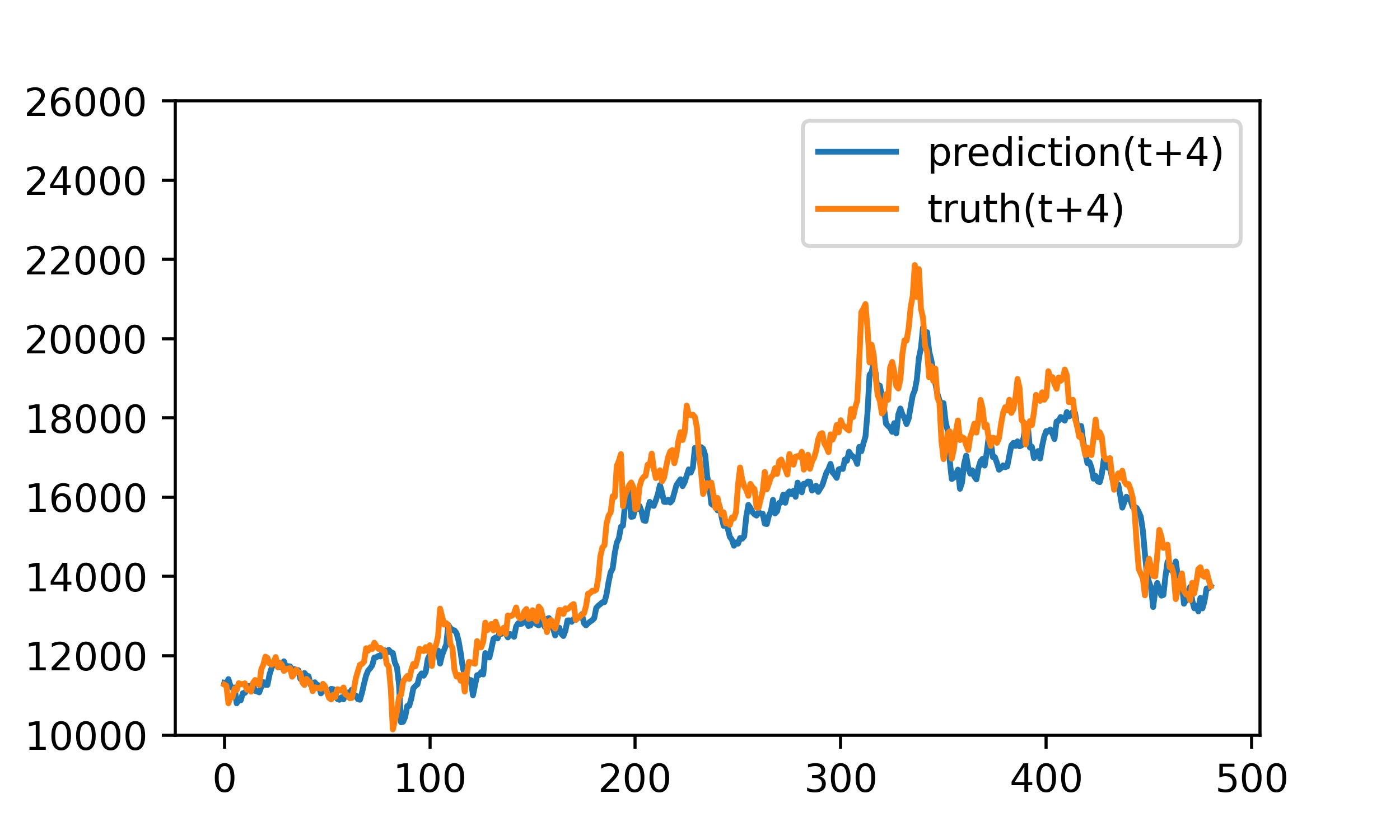}     
    }   
    \caption{ The prediction effect of different stocks on the fourth day. ( left:SSE Energy Index, middle: SSE50 Index, right: SSE Consumer Index) }     
    \label{fig:5}     
\end{figure}

\begin{table}[H]
	\caption{Forecast results of different stock ($\alpha=1.5$)}
	\centering
	\setlength{\textwidth}{15mm}{
	\begin{tabular}{cccccc}
		\toprule
		Data  & MSE(t+1) & MSE(t+2)& MSE(t+3)& MSE(t+4) \\
		\midrule
        SSE Energy Index &0.0010&0.0022&0.0034& 0.0045    
        \\
        SSE50 Index &0.0029&0.0058&0.0091&0.0125
        \\
        \rowcolor{mypink} SSE Consumer Index
        &0.2024&0.2210&0.1951&0.3301 
        \\
	\bottomrule
	\end{tabular}}
	\label{tab:different stock}
\end{table}

As can be seen from Figure \ref{fig:5}, the prediction effect of the SSE Consumer Index is the worst and the LDE-Net model is suitable for predicting the future value of stocks with large fluctuations in historical data. When the training set fluctuates more frequently with larger amplitude, L\'evy noise can better capture the uncertainty in the prediction process, so as to achieve more accurate prediction. However, historical data in Consumer Index flatten while the prediction set has bigger volatility, which is just the opposite of the trend situation described earlier. Therefore, the fitting effect between the predicted result and the true value is not ideal for Consumer Index. In the follow-up part, we will choose SSE Energy Index and SSE50 Index for experimentation to confirm the significance of LDE-Net.
%\vspace{-0.8cm}
\subsection{Experiment 2: Forecasting with different models}\label{exp2}
Under the condition of setting $\alpha = 1.5$, we compare the proposed model with the traditional SDE-Net \cite{Kong2020SDENetED} and commonly used forecasting models for stock data to check whether the proposed model makes sense. We have chosen LSTM \cite{hochreiter1997long} and ARIMA \cite{box2015time}, for common models. Since the accuracy of forecasting one day is the highest, we only compare the effect of forecasting one day. Here we only show the prediction effect images of the SSE Energy Index with the same coordinate scales.
\begin{figure}[H]
    \centering
    \begin{minipage}[t]{0.48\textwidth}
        \centering
        \includegraphics[width=6cm]{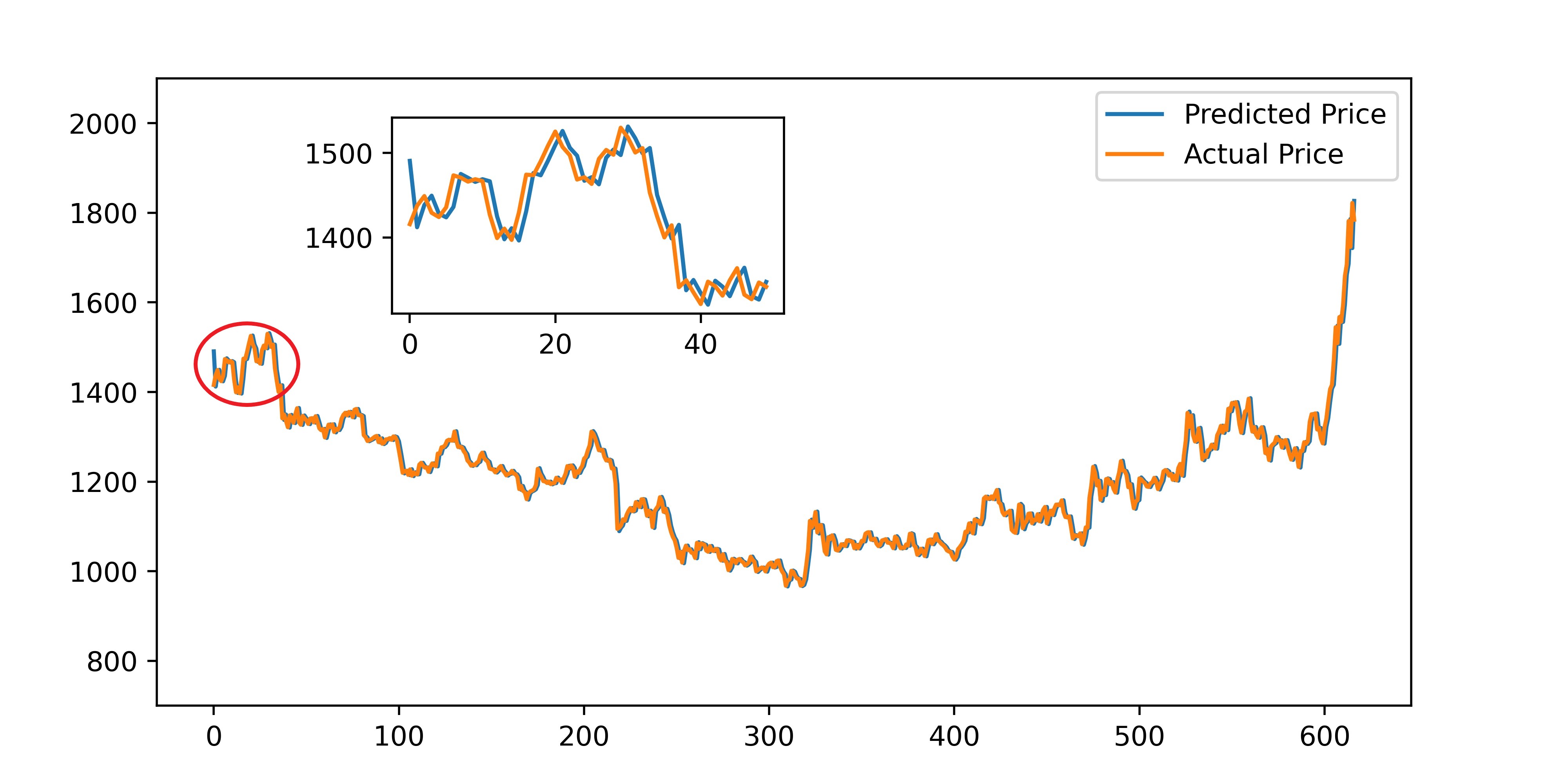}
        \label{fig:SZNYARIMA}
    \end{minipage}
    \begin{minipage}[t]{0.48\textwidth}
        \centering
        \includegraphics[width=6cm]{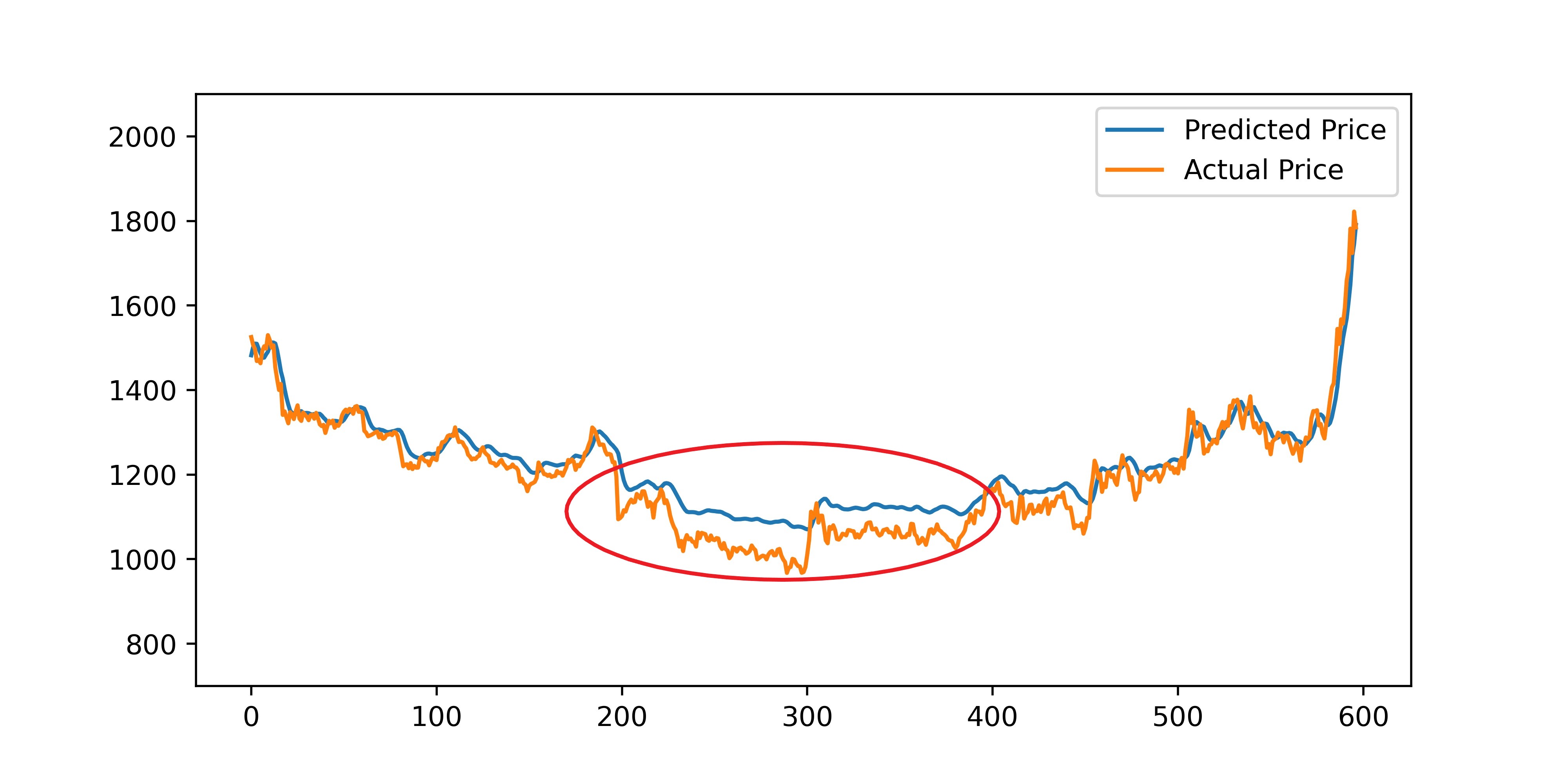}
        \label{fig:SZNYLSTM}
    \end{minipage}
    \begin{minipage}[t]{0.48\textwidth}
        \centering
        \includegraphics[width=6cm]{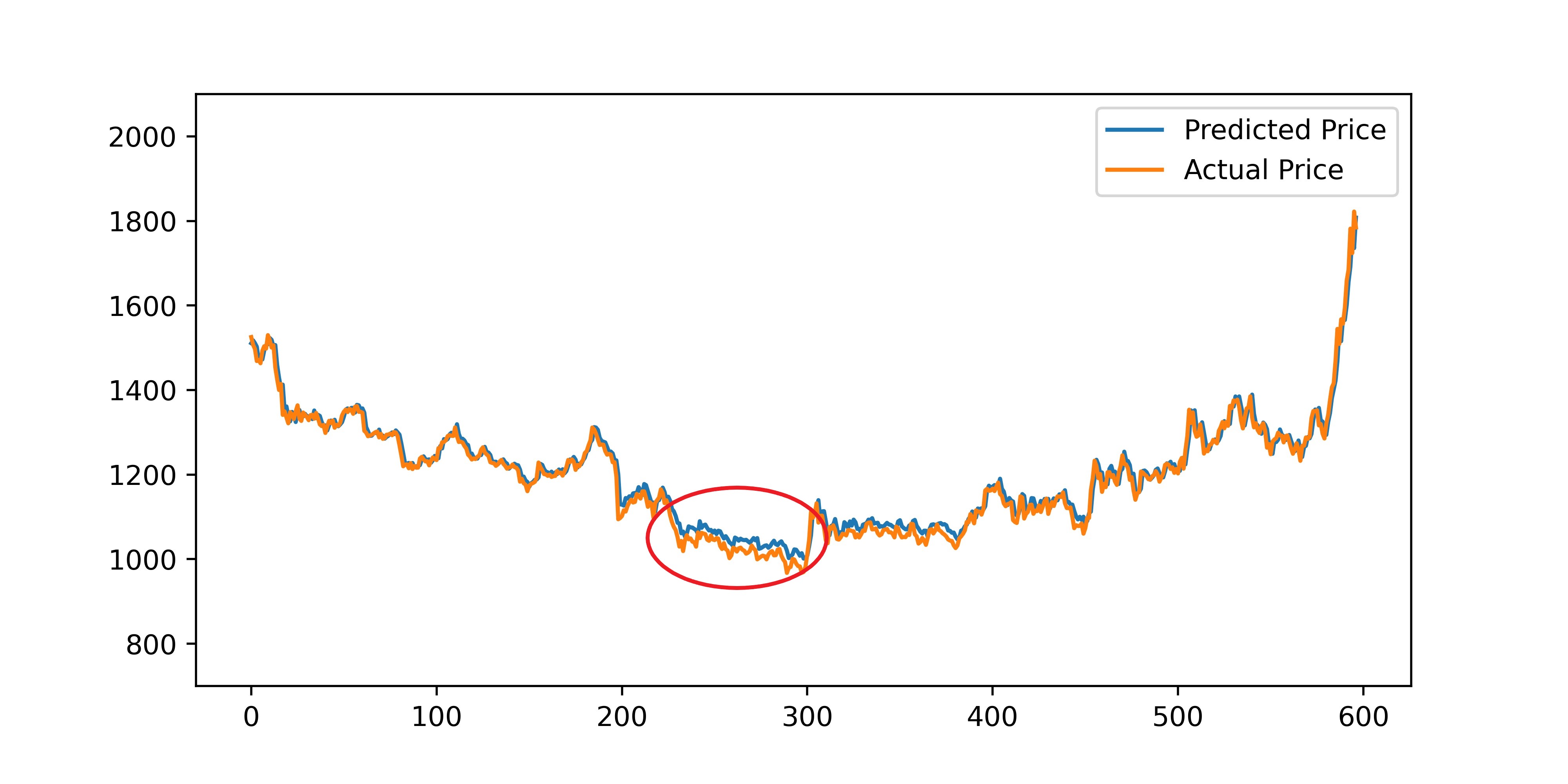}
        \label{fig:SZNYOLD}
    \end{minipage}
    \begin{minipage}[t]{0.48\textwidth}
        \centering
        \includegraphics[width=6cm]{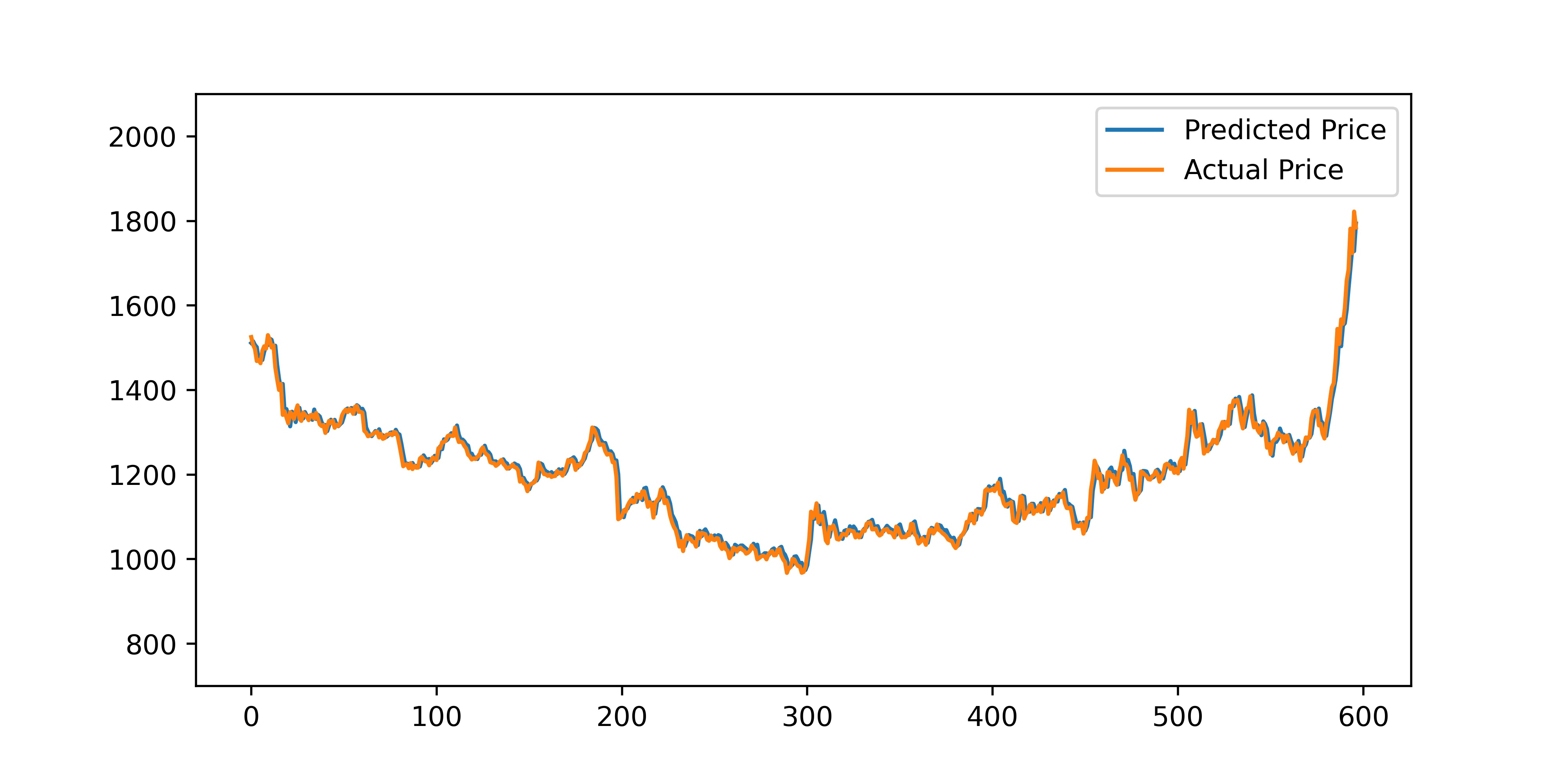}
        \label{fig:LDE-Net}-
    \end{minipage}
    \caption{For SSE Energy Index, the predictive effects of different models. (The upper left is ARIMA, the upper right is LSTM, the lower left is SDE-Net, and the lower right is LDE-Net.)}
    \label{fig:all models}
\end{figure}
%\vspace{-0.8cm} 
From the above figures (Figure \ref{fig:all models}) only for SSE Energy Index, we can intuitively find that the prediction effect of LDE-Net is the best. The predicted values of LSTM and SDE-Net in the next 200-300 days (testing set) are obviously not close to the true values. Although the predicted values of ARIMA seem to fit very well overall, it can be seen that the first few real values are not close to the predicted values. We select the first 50 data of the testing set and refine the coordinate scale to zoom in and observe the prediction effect. It can be found that the curve fitted by ARIMA has a significant time lag. Therefore, ARIMA seems to be accurate on the surface, but it is a visual error caused by the large coordinate scale. To see the difference more specifically, we give a comparison table of loss defined by MSE, RMSE, and MAE for the SSE Energy Index and SSE 50 Index. 

\begin{table}[H]
	\caption{Forecast results of different models}
	\centering
	\begin{tabular}{ccccc}
		\toprule
		Data      & Model     & MSE &RMSE & MAE \\
		\midrule
        \multirow{5}*{SSE Energy Index} & ARIMA & 0.0188&0.1371&0.0945   \\
		& LSTM & 0.0054& 0.0735&0.0597  \\
		& SDE-Net & 0.0011 & 0.0335& 0.0256     \\
		& \multicolumn{1}{>{\columncolor{mygreen}}l}{LDE-Net} &\multicolumn{1}{>{\columncolor{mygreen}}l}{ \textbf{0.0009}}  &\multicolumn{1}{>{\columncolor{mygreen}}l}{\textbf{0.0295}}& \multicolumn{1}{>{\columncolor{mygreen}}l}{\textbf{0.0209}}   \\
		\midrule
        \multirow{5}*{SSE 50 Index} & ARIMA & 0.0113&0.1064&0.0767     \\
		& LSTM  &  0.0031 &0.0554&0.0408   \\
		& SDE-Net & 0.0030&0.0546&0.0394      \\
		& \multicolumn{1}{>{\columncolor{mygreen}}l}{LDE-Net} & \multicolumn{1}{>{\columncolor{mygreen}}l}{\textbf{0.0029}}&\multicolumn{1}{>{\columncolor{mygreen}}l}{\textbf{0.0537}}&\multicolumn{1}{>{\columncolor{mygreen}}l}{\textbf{0.0386}}
		\\
		\bottomrule
	\end{tabular}
	\label{tab:different models}
\end{table}

From the three indicators in Table \ref{tab:different models}, it is verified that the conclusions we have obtained based on the images are correct. For the two types of stock representative data, the prediction effect of our LDE-Net is better than other models. For SSE Energy Index, the MSE value is about 0.08672$\%$, and the accuracy is 29$\%$ higher than the original SED-Net. In addition, RMSE is about 2.945$\%$, and the error is 0.401$\%$ less than SED-Net. Also, MAE is reduced by about 0.47$\%$. For SSE 50 Index, the values of RMSE and MAE are about 0.1$\%$ less than SED-Net. In general, the proposal of LDE-Net has a certain meaning. 

\subsection{Experiment 3: Forecasting with different \texorpdfstring {$\alpha$} .}\label{exp3}
Considering that different $\alpha$ will produce different jumps, we want to observe the effect of model predictive ability without a fixed $\alpha$=1.5. Therefore, we input the reconstructed data into the LDE-Net model with $\alpha=1.2,1.3,1.4,1.5,1.6,1.8$ for the same stock. 
%\vspace{-0.4cm} 
\begin{figure}[H]
    \centering
    \begin{minipage}[t]{0.3\textwidth}
        \centering
        \includegraphics[width=5cm]{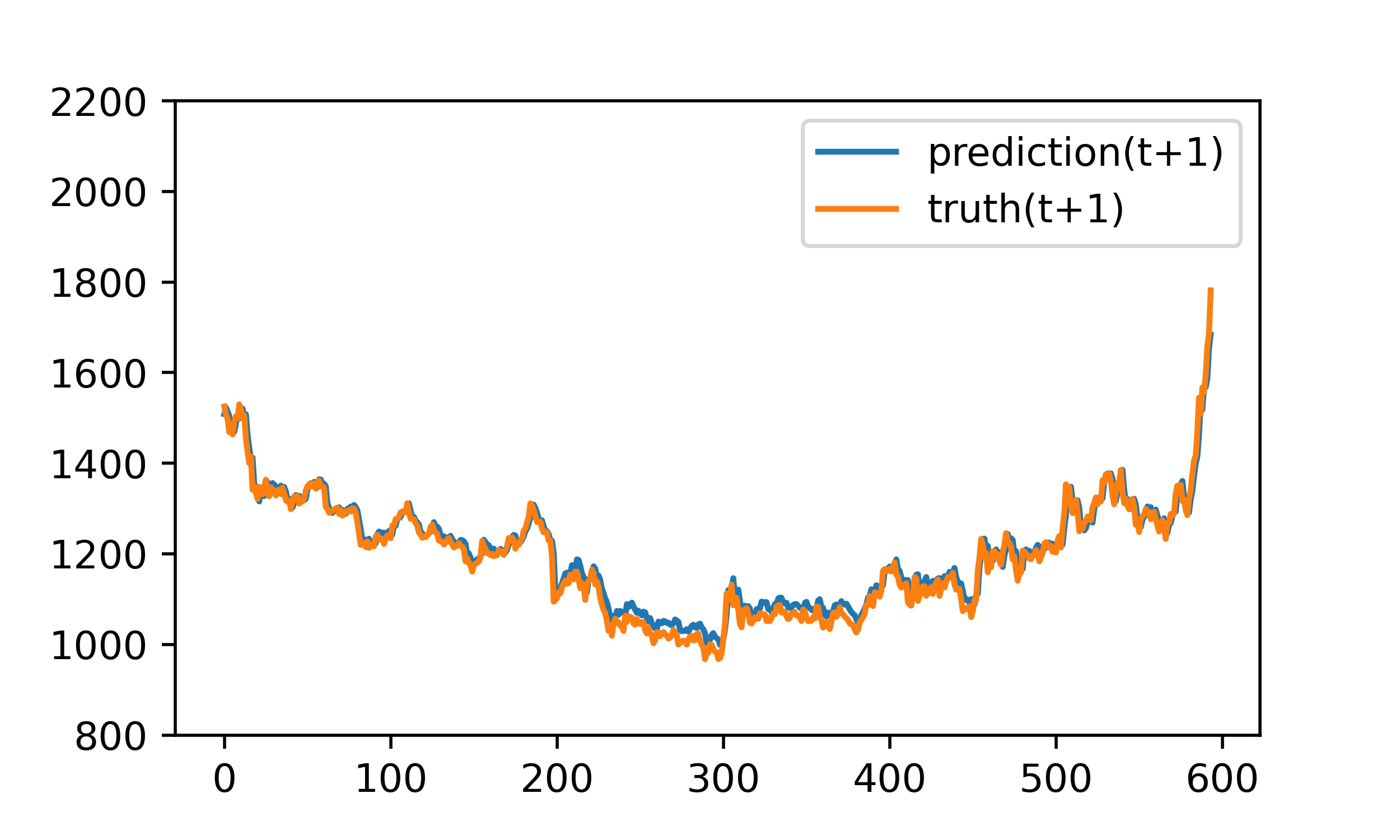}
        \label{fig:SZNY1.2}
    \end{minipage}
    \begin{minipage}[t]{0.3\textwidth}
        \centering
        \includegraphics[width=5cm]{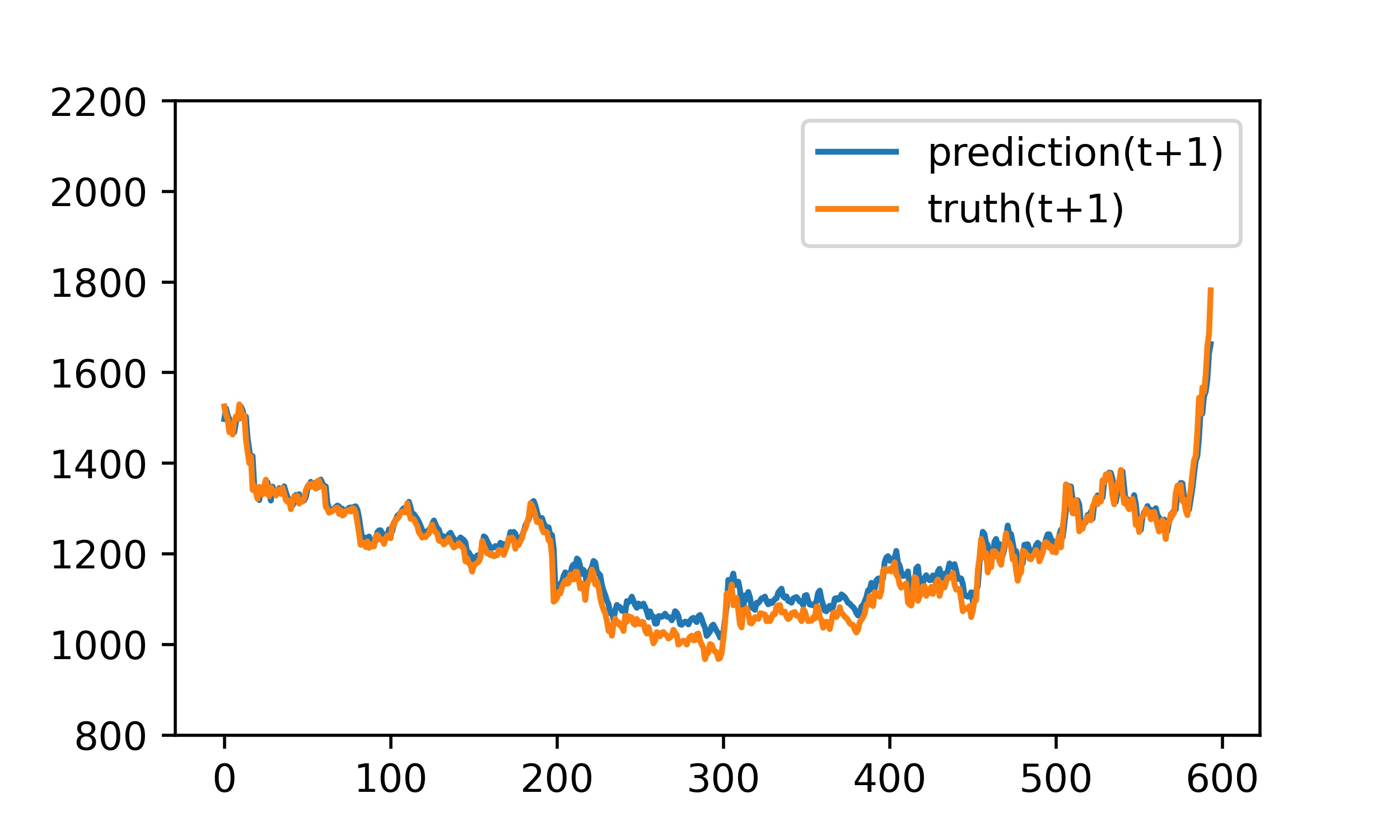}
        \label{fig:SZNY1.3}
    \end{minipage}
    \begin{minipage}[t]{0.3\textwidth}
        \centering
        \includegraphics[width=5cm]{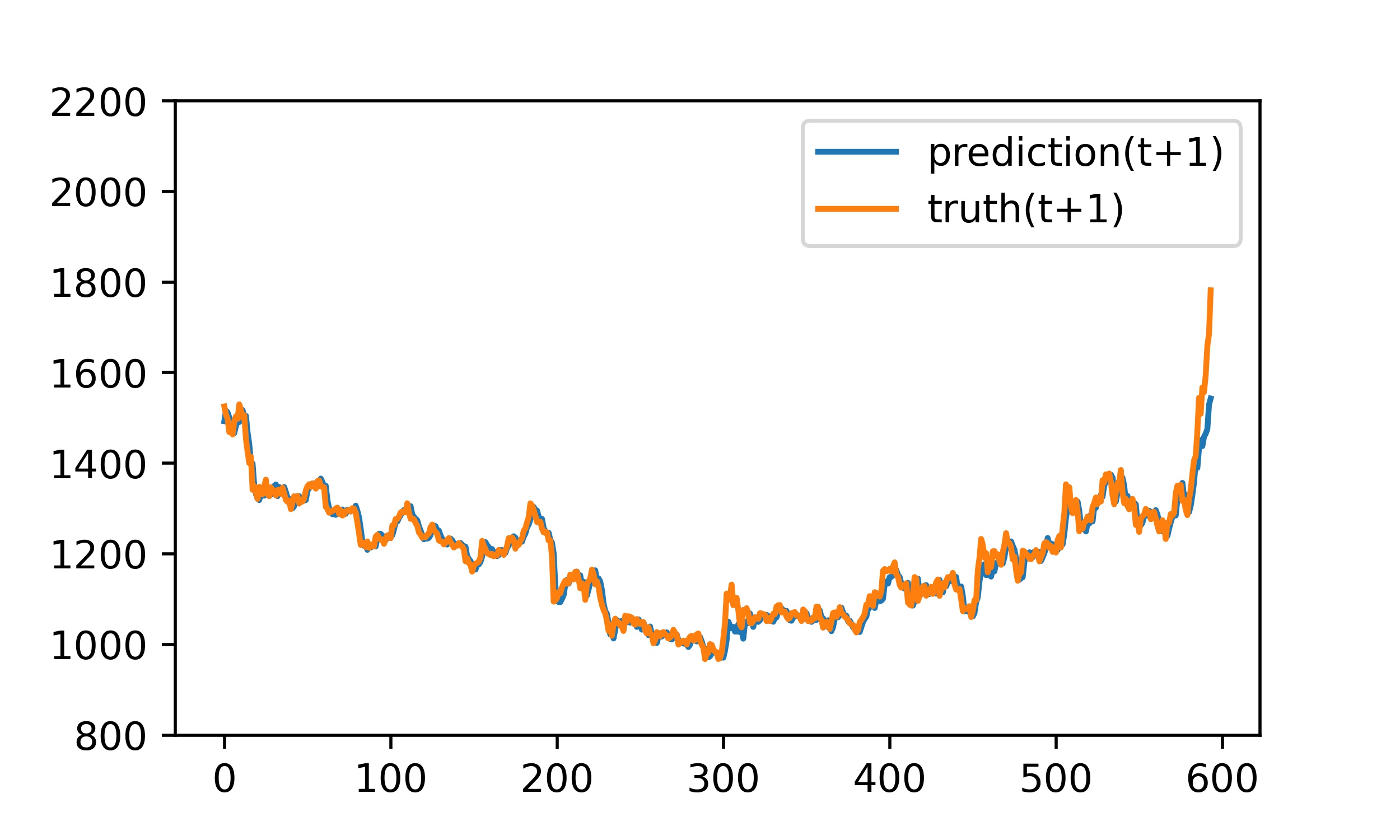}
        \label{fig:SZNY1.4}
    \end{minipage}
    \begin{minipage}[t]{0.3\textwidth}
        \centering
        \includegraphics[width=5cm]{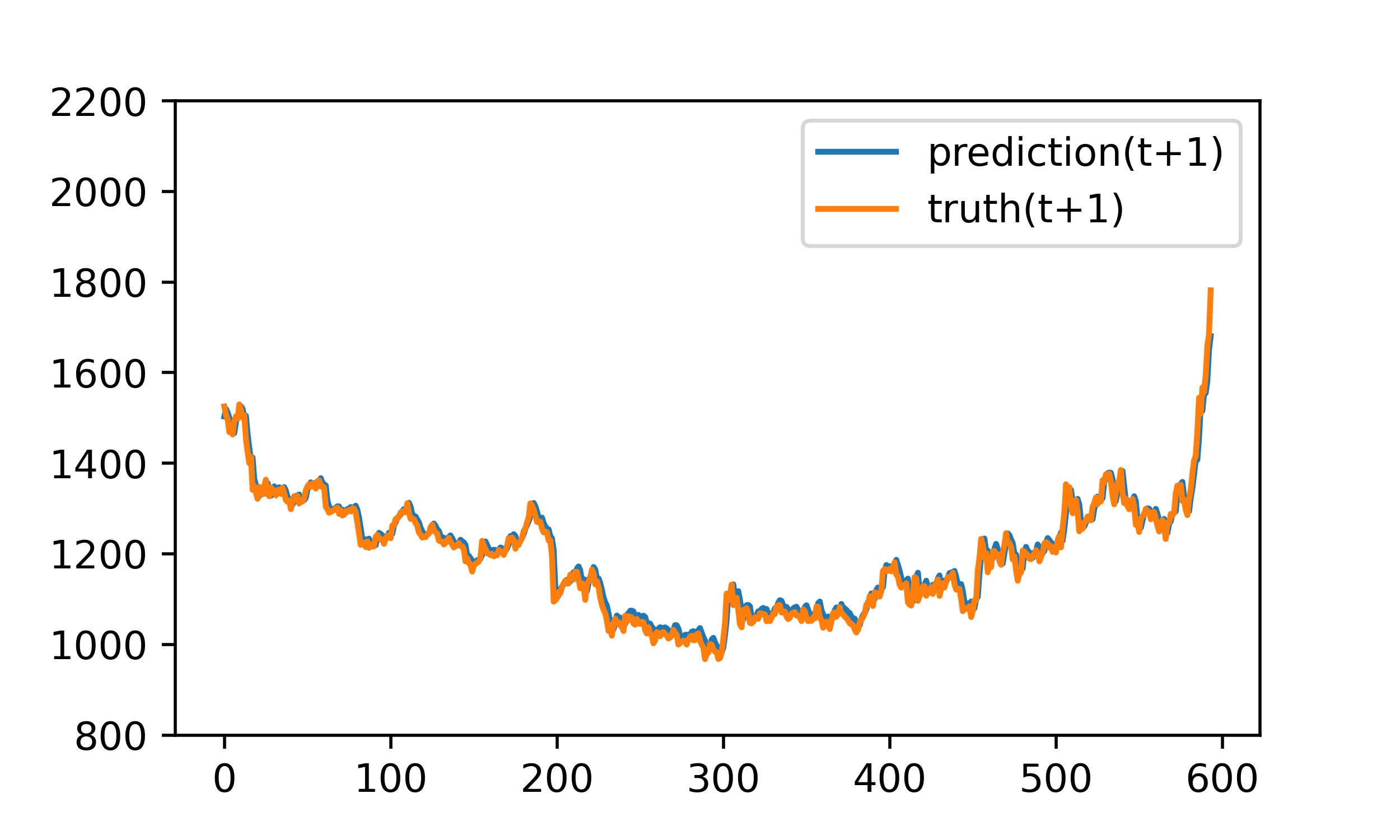}
        \label{fig:SZNY1.5}
    \end{minipage}
    \begin{minipage}[t]{0.3\textwidth}
        \centering
        \includegraphics[width=5cm]{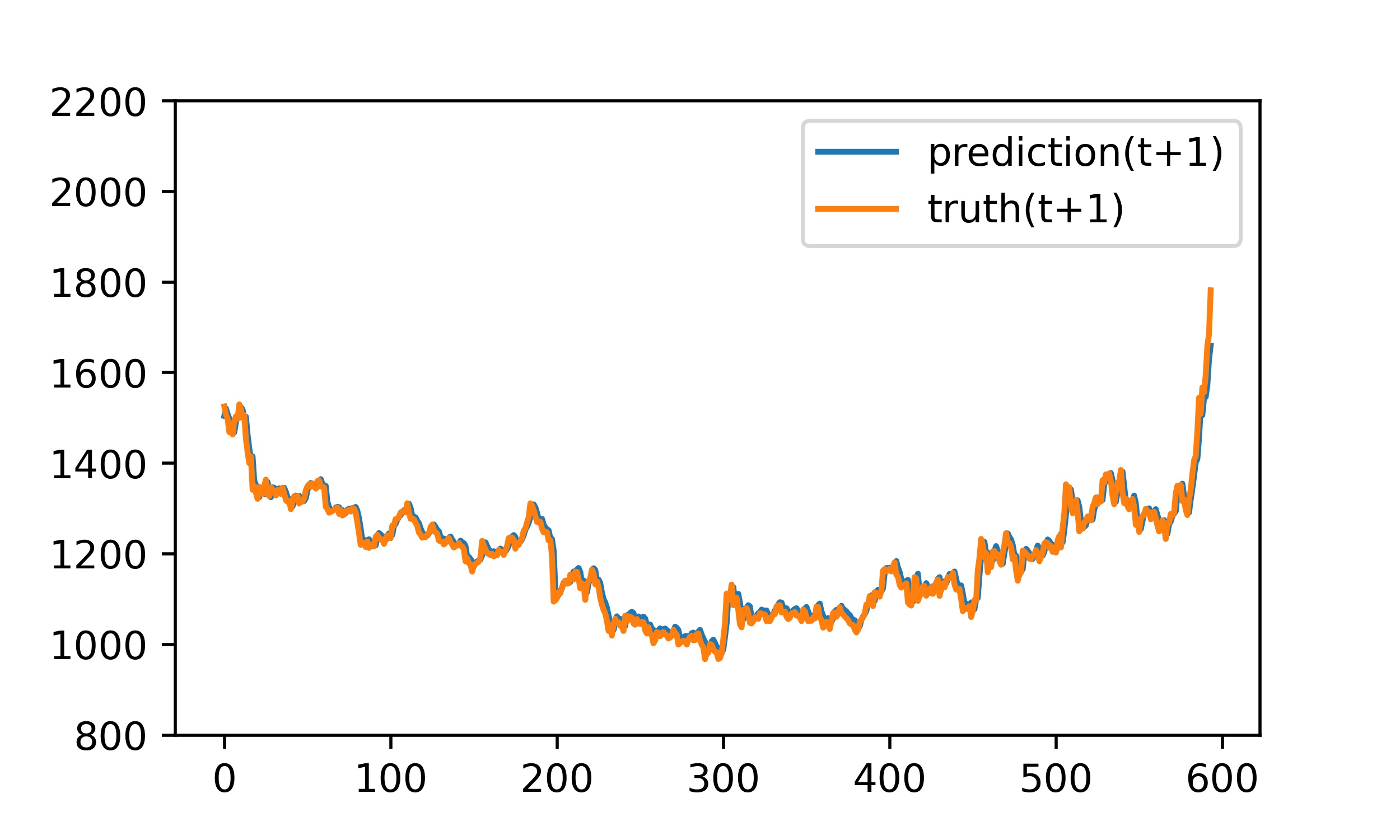}
        \label{fig:SZNY1.6}
    \end{minipage}
    \begin{minipage}[t]{0.3\textwidth}
        \centering
        \includegraphics[width=5cm]{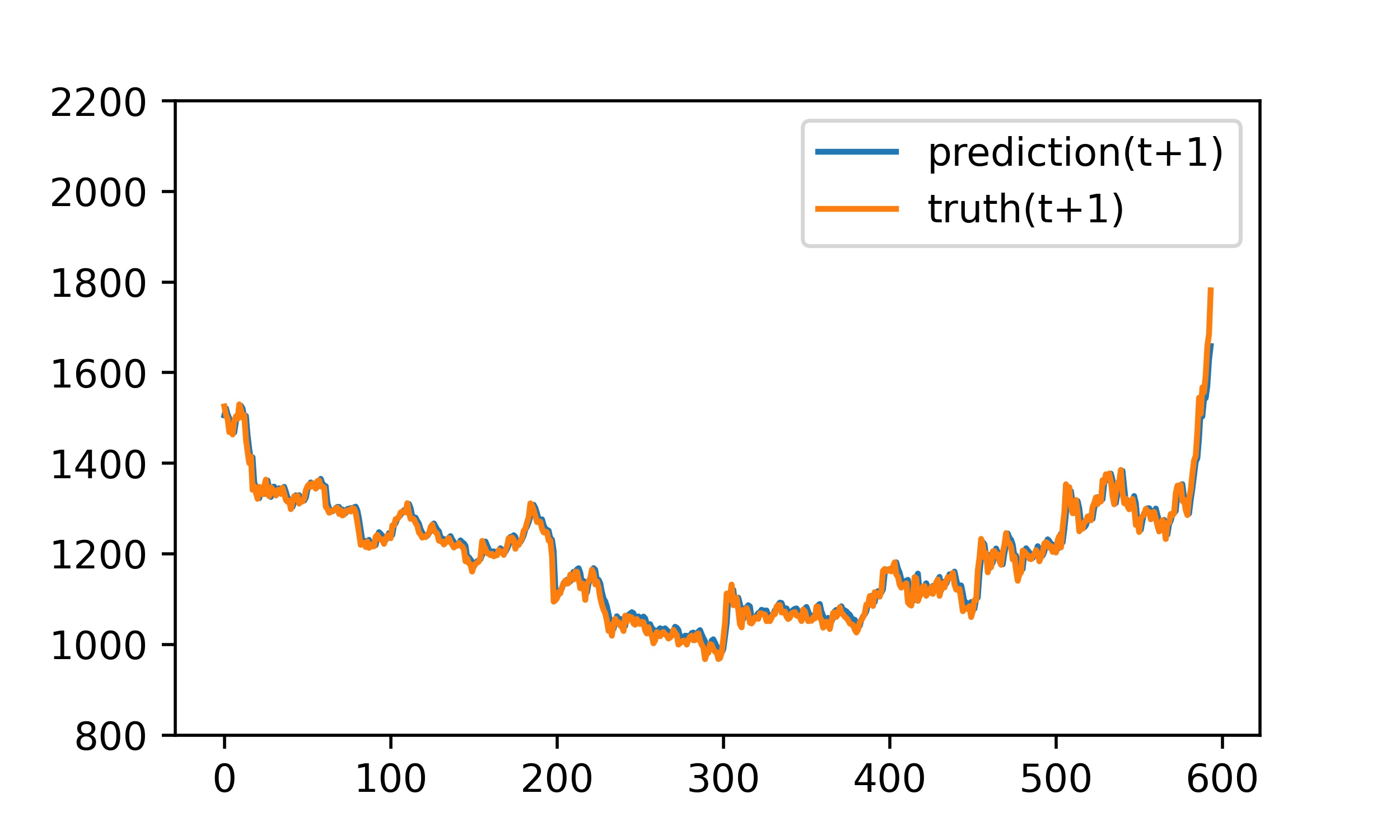}
        \label{fig:SZNY1.8}
    \end{minipage}
    \caption{For SSE energy index, the predictive effect of different $\alpha$. (The upper left is $\alpha=1.2$, the upper middle is $\alpha=1.3$, the upper right is $\alpha=1.4$, the lower left is $\alpha=1.5$, the lower middle is $\alpha=1.6$, and the lower right is $\alpha=1.8$)}
\end{figure}

\begin{table}[H]
	\caption{Forecast results of different $\alpha$}
	\centering
	\setlength{\textwidth}{15mm}{
	\begin{tabular}{ccccccc}
		\toprule
		Data      &  Value of $\alpha$     & MSE(t+1)1.0*1e-3 & MSE(t+2)1.0*1e-3& MSE(t+3)1.0*1e-3& MSE(t+4)1.0*1e-3 \\
		\midrule
		\multirow{6}*{SSE Energy Index} 
        & 1.2 &1.1555&2.8272&3.9941&5.2052    \\
        & 1.3 &2.0929&2.2900&4.1513&5.3823     \\
        & 1.4 &1.7146&2.4714&3.1843&4.6635
        \\
        & 1.5 &1.0317&2.2017&3.3650&4.5471     \\
        & \colorbox{green}{1.6}&\textbf{0.9455}&\textbf{2.1151}&\textbf{2.8728}&\textbf{5.4028}    
        \\
        & 1.8 &0.9890&2.0429&2.8211&5.5467  
        \\
		\midrule
        \multirow{6}*{SSE50 Index} 
        & 1.2 &3.2909&6.2504&9.3327&13.3091     \\
		& 1.3 &3.1861&6.0212&9.6572&12.8533     \\
		& 1.4 &2.9535&5.9002&9.4551&12.8543     \\
		& \colorbox{green}{1.5} &\textbf{2.9415}&\textbf{5.8059}&\textbf{9.1026}&\textbf{12.4943}     \\
		& 1.6 &2.9809&6.0121&9.6383&12.8007    
		\\
		& 1.8 &2.9804&6.0358&9.4742&12.6342  
		\\
	\bottomrule
	\end{tabular}}
	\label{tab:different alpha}
\end{table}

From the results of the MSE value in Table \ref{tab:different alpha}, it can be seen that the value of $\alpha$ can have a certain influence on the prediction of the model. For example, for the same stock, it can be seen that the best $\alpha$ of the SSE Energy Index is 1.6, while the best $\alpha$ of the SSE 50 Index is 1.5. For different stocks, the greater the data fluctuation, the smaller $\alpha$ value of the stock, which is in line with the nature of L\'evy motion that is smaller $\alpha$ can give greater jump.

\subsection{Experiment 4: Effect of forecasting days on prediction accuracy }\label{exp4}

We also investigate the order of accuracy over predicting days and get some interesting results. For example, the error is roughly proportional to the length of forecasting days, even if we change the value of $\alpha$. This linear relationship indicates that the error rate (increment of error vs. prediction time) is of $O(1)$. For longer forecasting days such as ten days, the error still has the same phenomenon (See Figure \ref{fig:B3} in Appendix \ref{different steps}). Moreover, from the figures (Figure \ref{fig:B4}, Figure \ref{fig:B5}), we can see that there will be a prediction shift with longer forecasting days. Hence, long-term prediction is our future improvement direction.

\begin{figure}[H]
    \centering    
    \subfigure {
     \label{fig:SZNYalpha}     
    \includegraphics[width=5cm,height=4cm]{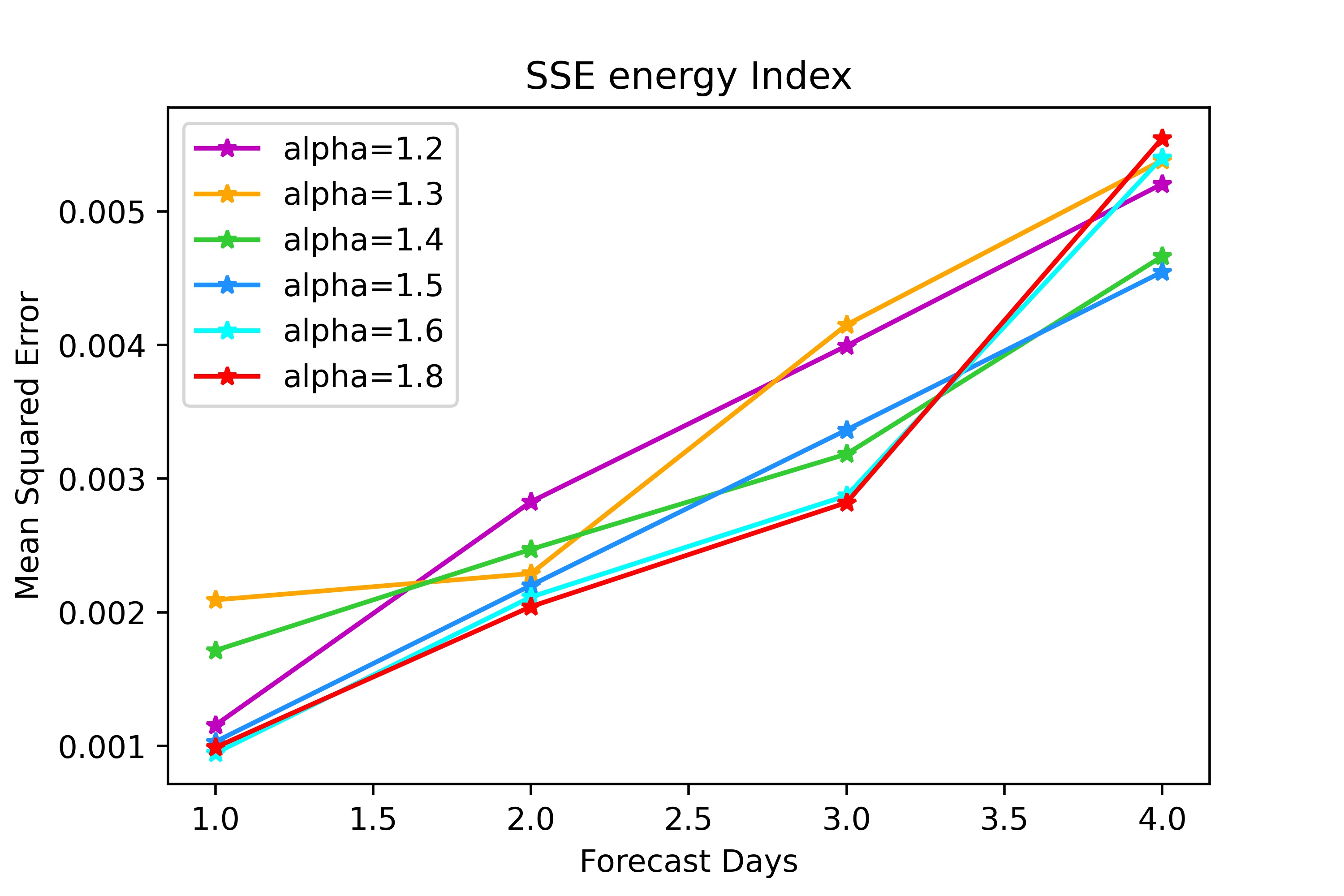}  
    \includegraphics[width=5cm,height=4cm]{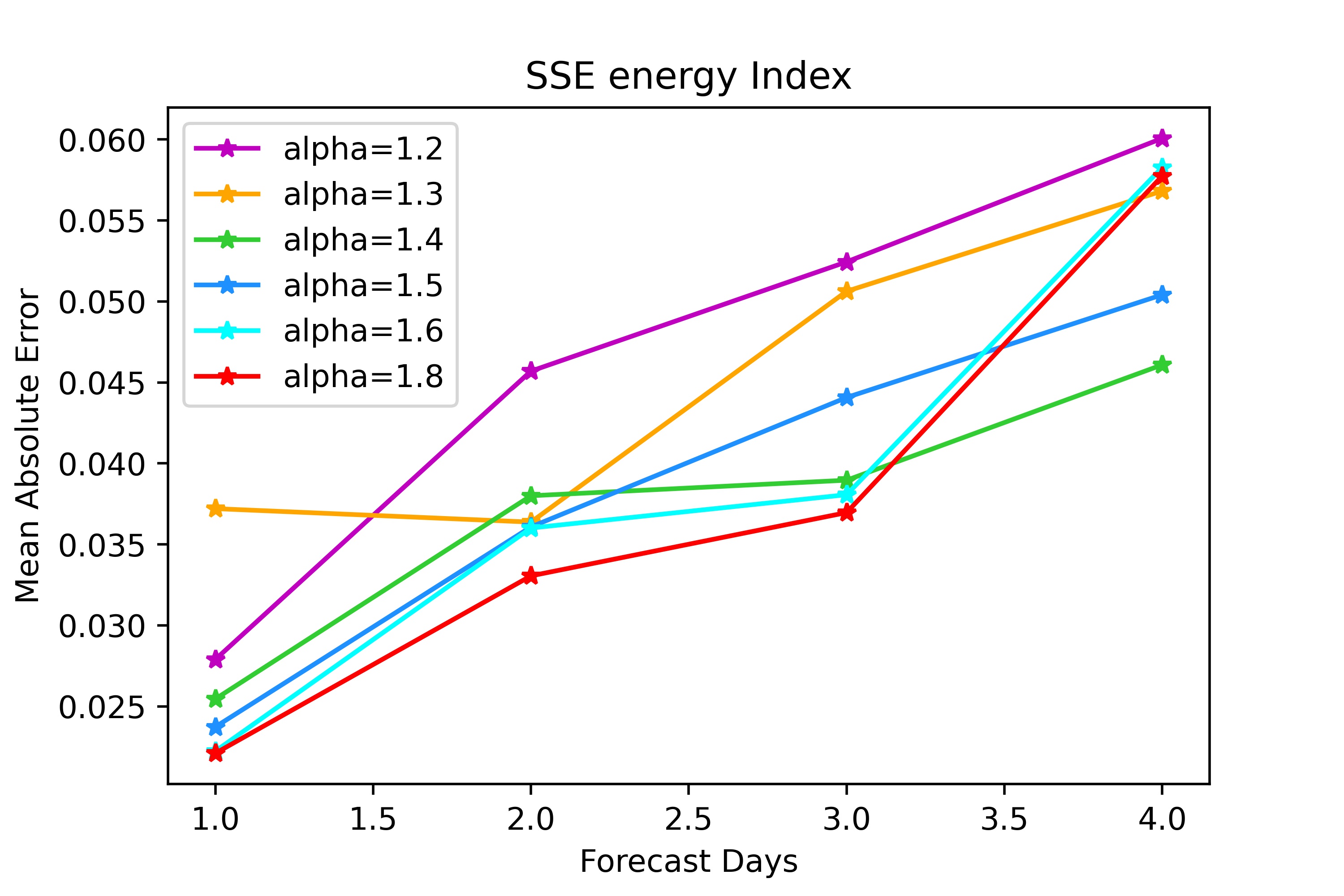}  
    }     
    \subfigure{ 
    \label{fig:SZ50alpha}     
    \includegraphics[width=5cm,height=4cm]{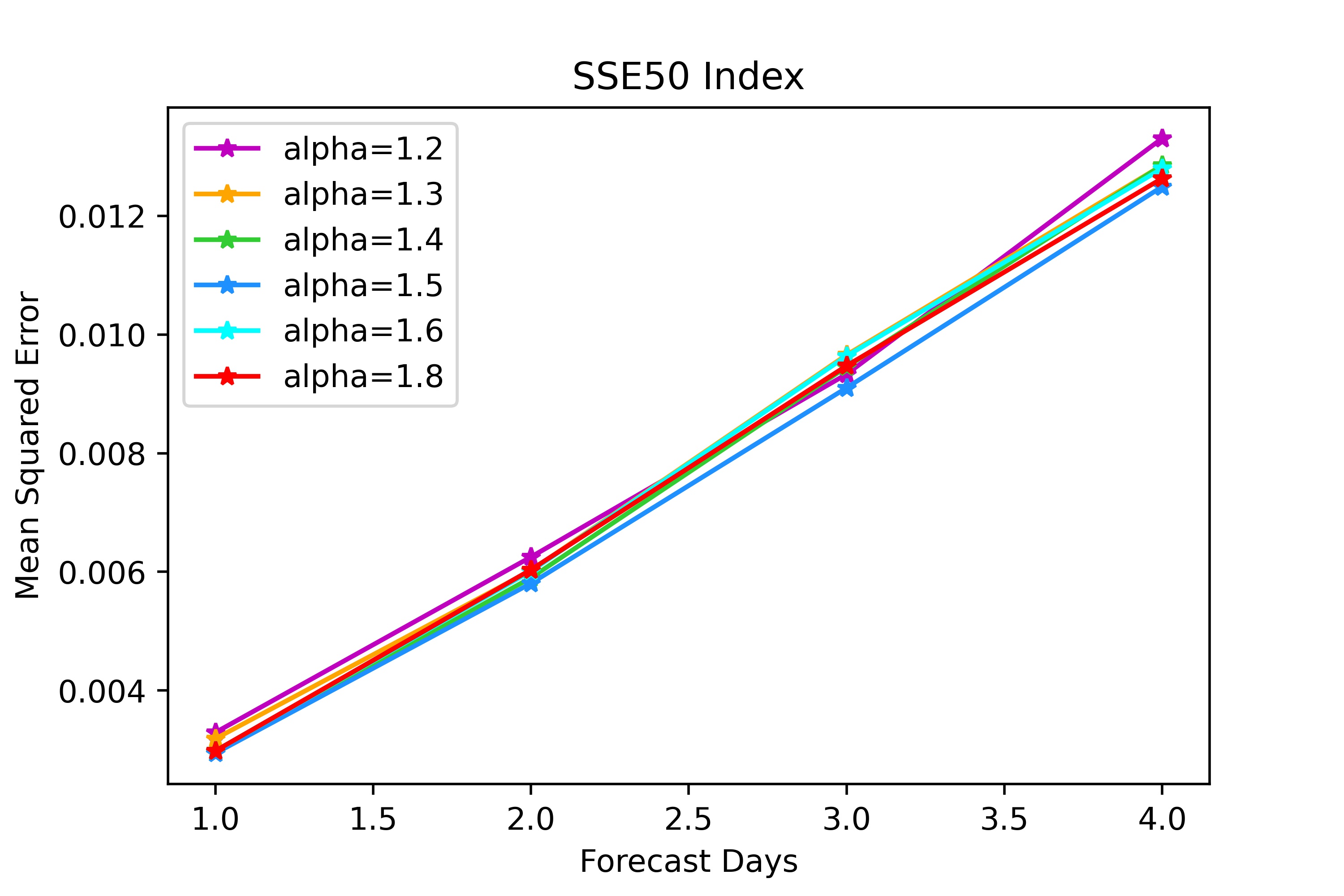}     
    \includegraphics[width=5cm,height=4cm]{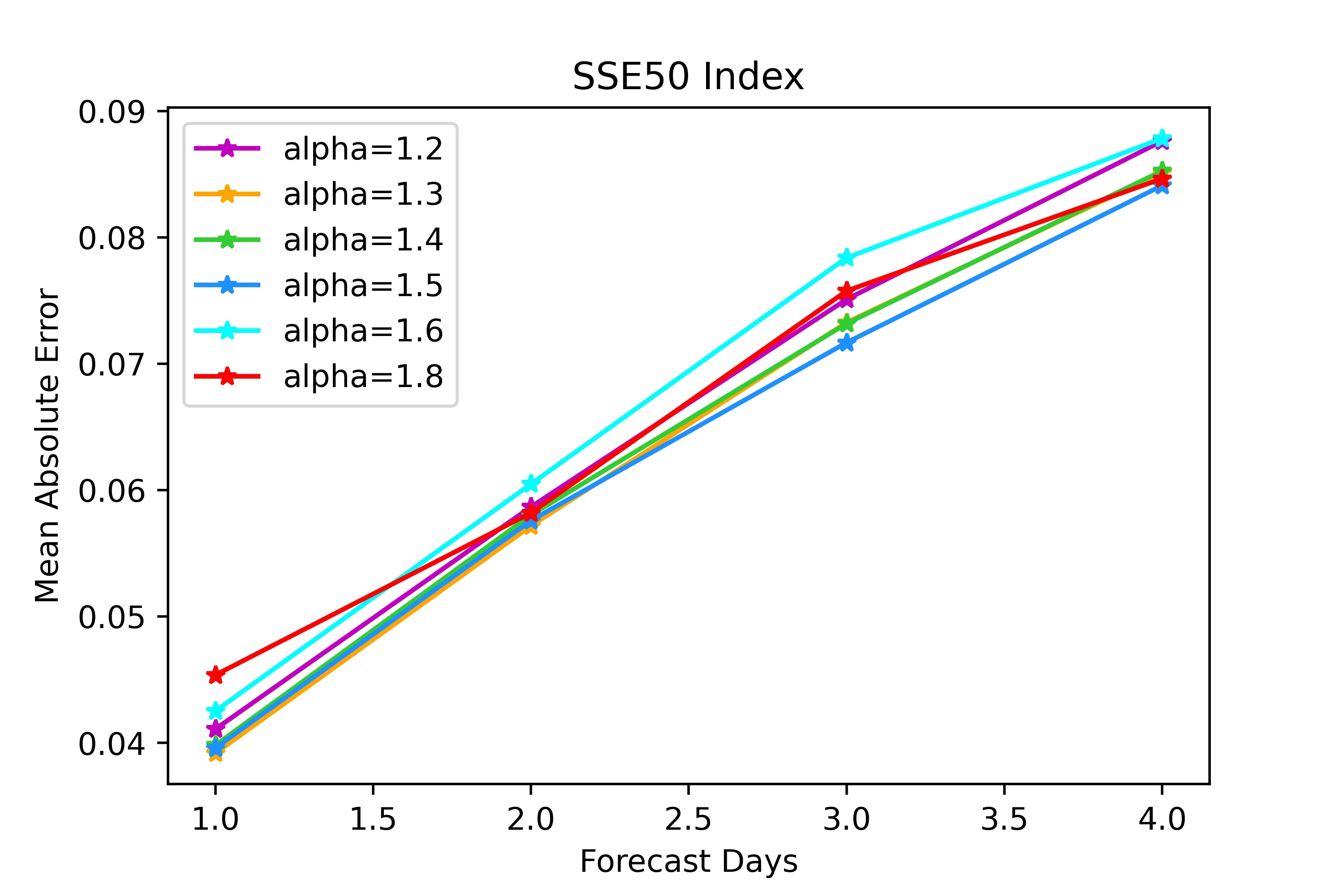} 
    }    
    \caption{ The prediction error rate over forecasting days: mean squared error of SSE Energy Index (upper left), the mean absolute error of SSE Energy Index  (upper right), mean squared error of SSE50 Index (lower left), mean absolute error of SSE50 Index (lower right).} 
    \label{fig:10}     
\end{figure}

\section{Conclusions and discussion}
In this paper, we present a series of latent stochastic differential equations equipped with neural networks to forecast financial data, which is non-stationary with noise.
The special properties of $\alpha$-stable L\'evy motion induced stochastic differential equations could help us to capture data and model uncertainty in a better way. To effectively utilize the information of real financial data, on the one hand, we explore its chaotic behavior and obtain the optimal intrinsic dimension of our input training data. And on the other hand, we use the attention mechanism to perform parallel operations on multiple models to achieve multi-step forecasting. All in all, we conclude our main findings in the following: First, the model LDE-Net we proposed is more suitable for predicting time series data that fluctuates with high volatility, as $\alpha$-stable L\'evy motion can better capture the training data trend with epistemic jumps. Second, we have theoretically proved the convergence of our algorithm, without curse of dimensionality. Finally, we also illustrated that the order of accuracy linearly changes over forecasting days.

Due to the fact that our model can better capture the underlying information in the case of historical data with jumps, LDE-Net algorithm with high accuracy can be applied to various chaotic time series predictions, such as index stocks, agricultural product prices, electricity load, climate change, remaining useful life prediction in bearing degradation, etc. For financial data, it can help investors to select the best portfolio investment as well as some indicators for timing strategies. 
%Considering that our modeling is based on real time series, our model can also be parallelized to other arbitrary time series applications with a certain authenticity and wide applicability. 
In addition, with multi-step forecasting in our model, people can obtain long-term information.

For future research work, one aspect can be optimizing the selection of $\alpha$ values and another direction is to estimate generalization error bound in terms of data complexity. Simultaneously, considering that self-attention has a good performance in long-term prediction, we hope it can improve our attention mechanism. In addition, the application of the adjoint sensitivity method to stochastic differential equations is also worthy of further consideration. Since the discretized Euler method is relatively rough, the neural ordinary differential equation (Neural ODE) applies the ODE Solver to the adjoint system, which can both reduce computational cost and improve model accuracy efficiently. Considering the advantages of ODE Solver and the benefits of time continuation, it might be interesting to extend our current model into a continuous version under a similar framework of Neural ODE. As in the neural stochastic differential equation (Neural SDE), the adjoint stochastic differential equation is constructed under the assumption of Stratonovich noise, which is easier compared with the It\^o case. Therefore, for stochastic differential equations induced by L\'evy motion, it is novel to extend the adjoint sensitivity method correspondingly.

Our method has similar leanings on drift coefficient and diffusion coefficient like the work of stochastic physics-informed neural networks (SPINN), yet they have different goals. The former one is aiming for prediction with implicit compounded drift and diffusion coefficients while the latter can identify an unknown physical system explicitly. Therefore, it is interesting to learn an inverse problem through the framework of our model.

%  Specifically, stochastic differential equation is a good description of the data we care about, and the kinetic properties of stochastic differential equation are in turn determined by drift and diffusion terms. Therefore, we implicitly learn drift coefficient and diffusion coefficient through the neural network, and then obtain a stochastic differential equation (no specific form) implicitly described by the neural network that can well describe the characteristics of the data.}

 %\href{code}{https://github.com/senyuanya/LDENet} .

\section*{Acknowledge}
This work was supported by National Natural Science Foundation of China (NSFC) 12141107. We would like to thank Wei Wei, Zibo Wang, Ao Zhang and Huifang Huang for helpful discussion.  We also thank the editor, the associated editor and the anonymous reviewers for numerous constructive
suggestions which help improve the presentation of this article.

\section*{Data Availability}
The data that support the findings of this study are available from the corresponding author upon reasonable request. Data source: \href{data}{https://www.joinquant.com/} .%\cite{JQ}.

\bibliographystyle{unsrt} 
\bibliography{references}  %%% Uncomment this line and comment out the ``thebibliography'' section below to use the external .bib file (using bibtex) .

%%% Uncomment this section and comment out the \bibliography{references} line above to use inline references.
\newpage
\begin{appendices}

\section{Appendix A }

\setcounter{equation}{0}
\renewcommand{\theequation}{A.\arabic{equation}}
\renewcommand{\thefigure}{B.\arabic{figure}}
\setcounter{equation}{0}
\setcounter{figure}{0}
\renewcommand{\thetheorem}{A.\arabic{theorem}}\label{appendixA}
% \section{Appendix B }

\subsection{ Lyapunov Exponent}
\label{chaos}
Based on Takens Theorem \cite{Takens1981DetectingSA}, we know that chaotic time series can restore the original system to establish a prediction model. Therefore, identifying the chaos of time series is the basis of prediction. There are many ways to identify whether a time series is chaotic, such as the largest Lyapunov exponent method, power spectrum method, system phase diagram structure method etc. Here we use the largest Lyapunov exponent method.

The Lyapunov exponent can describe the characteristics of the dynamics of the system. Assuming if that the value of Lyapunov is represented by $\lambda$, we have three cases: (i) $\lambda<0$, the motion state of the system tends to be stable and is not sensitive to the initial state of the system at this time. (ii) $\lambda=0$, the system is in a stable state. (iii) $\lambda>0$, the system movement will enter a chaotic state, and the corresponding mapping is called chaotic mapping. Therefore, judging the magnitude and sign of the Lyapunov exponent becomes a criterion for whether the system enters chaos.The calculation methods of the maximum Lyapunov exponent mainly include definition method, Wolf method \cite{wolf1985determining}, orthogonal method and small number method. For unknown nonlinear dynamic systems, we use the wolf method of orbit tracking to solve the maximum Lyapunov exponent of the sequence. 

\subsubsection*{Wolf Method}  %不带序号

Since we cannot find the exact differential equation of the original dynamic system, we only calculate the maximum Lyapunov exponent through univariate time series by using analysis and orbit tracking methods. In 1985, a method to estimate the maximum Lyapunov exponent was proposed \cite{wolf1985determining} based on the evolution of phase trajectories, phase planes, and phase volumes. This method is based on orbit tracking which is widely used in the prediction of chaotic time series.

Assuming that the chaotic time series is $x_1,x_2\dots x_n$, the embedding dimension is $m$, and the delay time is $\tau$, then the reconstructed phase space is

\begin{equation}
       Y(t_{i})=(x(t_{i}),x(t_{i+\tau}),\dots,x(t_{i+(m-1)\tau})),\,\,i=1,2,\cdots,N.
\end{equation}

Taking the initial point as $Y(t_{0})$ and the nearest point of it as $Y_{0}(t_{0})$, we define $L_{0}$ as the distance between them. Then, we track the evolution of the distance between these two points over time until $L_{0}$ reaches the critical value $\varepsilon$ ($\varepsilon >0$) at $t_{1}$, that is $L_{1}=\vert Y(t_{1})-Y_{0}(t_{1})\vert>\varepsilon$. Similarly, we keep $Y(t_{1})$ and find its nearest point $Y_{1}(t_{1})$. We set $L_{1}^{'}=\vert Y(t_{1})-Y_{1}(t_{1})\vert<\varepsilon$ and hope that the angle of $L_{1}$ and $L_{1}^{'}$ is as small as possible. Continue the above process until $Y(t)$ reaches the end of the time series $N$. The total number of iterations of this process to track the evolution is $M$. Then, the maximum Lyapunov exponent is :

\begin{equation}
      \lambda=\frac1{t_M-t_0}\sum_{i=1}^{M}\ln(\frac{L_{i}^{'}}{L_i-1}).
\end{equation}

If the maximum Lyapunov exponent is positive, we can get the Lyapunov time $h$:

\begin{equation}
      h = 1/\lambda
\end{equation}

Since the Lyapunov time is the limit of the predictability of the system, it can be regarded as the safe predication horizon.

\subsection{Feature Embedding with Phase Space Reconstruction}
\label{embedding}
After showing that the time series has chaotic behavior by the maximum Lyapunov exponent, we need to represent our feature embedding. The phase space reconstruction technology has two key parameters: the embedded dimension and the delay time. We assume that the time delay $\tau$ is not related to the reconstruction dimension $m$ during reconstruction. Since the current moment is usually related to the information at the previous moment, we choose the time delay $\tau=1$ to ensure the integrity of the information. The next step is to calculate its embedding dimension. 

Due to the fact that the chaotic sequence is actually the result of the trajectory generated by the chaotic system in the high-dimensional phase space being projected into one dimension, this will cause the motion trajectory to be distorted during the projection process. In order to make the information extracted from the low-dimensional data incomplete and accurate, we need to choose the reconstruction dimension to recover the chaotic motion trajectory from the chaotic time series. The usual method of determining the embedding dimension in practical applications is to calculate some geometric invariants of the attractor (such as correlation dimension, Lyapunov exponent, etc.). From the analysis of Takens embedding theorem, it can be seen that these geometric invariants have the geometric properties of attractors. When the dimension is greater than the minimum embedding dimension, the geometric structure has been fully opened. At this time, these geometric invariants have nothing to do with the embedding dimension. Based in this theory, the embedding dimension when the geometric invariant of the attractor stops changing can be selected as the reconstructed phase space dimension.

\subsubsection*{Cao's Method}  %不带序号

In terms of determining the embedding dimension of the phase space, Cao False Nearest Neighbour Method (Cao's  Method) \cite{Cao1997PracticalMF} is considered to be one of the effective methods to calculate the embedding dimension. There are three advantages of using Cao's method to calculate the embedding dimension: First, it is only necessary to know the delay time parameter in advance. Second, only small amount of data is required. Third, random signals and deterministic signals can both be effectively identified. 

Suppose the reconstructed data is:

\begin{equation}
       X(t_{i})=(x(t_{i}),x(t_{i+\tau}),\dots,x(t_{i+(m-1)\tau})), i=1,2,\cdots,N.
\end{equation}

Then the distance $R_{m(i)}$ between it and its nearest neighbor is

\begin{equation}
       R_{m}(i)=\left\|X_{m}(i)-X_{m}^{F N N}(i)\right\|_{2},
\end{equation}

where $ X_{m}^{F N N}(i) $ is the nearest neighbor of $X_{m}(i)$. 

Define

\begin{equation}
       a(i, m)=\frac{R_{m+1}(i)}{R_{m(i)}}=\frac{\left\|X_{m+1}(i)-X_{m+1}^{F N N}(i)\right\|}{\left\|X_{m}(i)-X_{m}^{F N N}(i)\right\|},
\end{equation}

when $X_{m+1}^{F N N}(i)$ and $X_{m}^{F N N}(i)$ are close, use the next adjacent point instead \cite{Lihua2011financialChaos}.

Next, calculate

\begin{equation}
      E(m)=\frac{1}{N-m \tau} \sum_{i=1}^{N-m \tau} a(i, m),
\end{equation}

\begin{equation}
      E_{1}(m)=E(m+1)/E(m).
\end{equation}

When the value of $m$ keeps increasing and is greater than a certain value, the amplitude of the change of $E_{1}(m)$ can be neglected, resulting in saturation. At this time, the corresponding $m+1$ value is the best embedding dimension of the system.

Finally, we have

\begin{equation}
     E^{*}(m)=\frac{1}{N-m \tau} \sum_{i=1}^{N-m \tau}\left|x_{i+m \tau}-x_{n(i, m)+m \tau}\right|,
\end{equation}

\begin{equation}
     E_{2}(m)=E^{*}(m+1) / E^{*}(m).
\end{equation}

If for any $m$, $E_{2}(m)=1$, it is a random time series, that is, each value of data is independent. If there is always some $m$, such that $E_{2}(m) \neq 1$, it is a deterministic time series, that is, the relationship between data points depends on the change of the embedding dimension $m$.

After calculating the maximum Lyapunov exponent, $E_{1}(m)$ and $E_{2}(m)$, we can identify the chaos and predictability of the sequence, and determine the reconstruction dimension. In this way, the dimension of the time series is restored to the original high dimension so that its potentially valuable features are extracted. At the same time, we also get the best input feature embedding for our model.

\section{Appendix B }
  \subsection{Full Results of \ref{exp1} of the main paper}
  \begin{figure}[htbp]
    \centering    
    \subfigure {
        \label{fig:SSENYt+1-1.5}     
        \includegraphics[width=4.5cm,height=3.5cm]{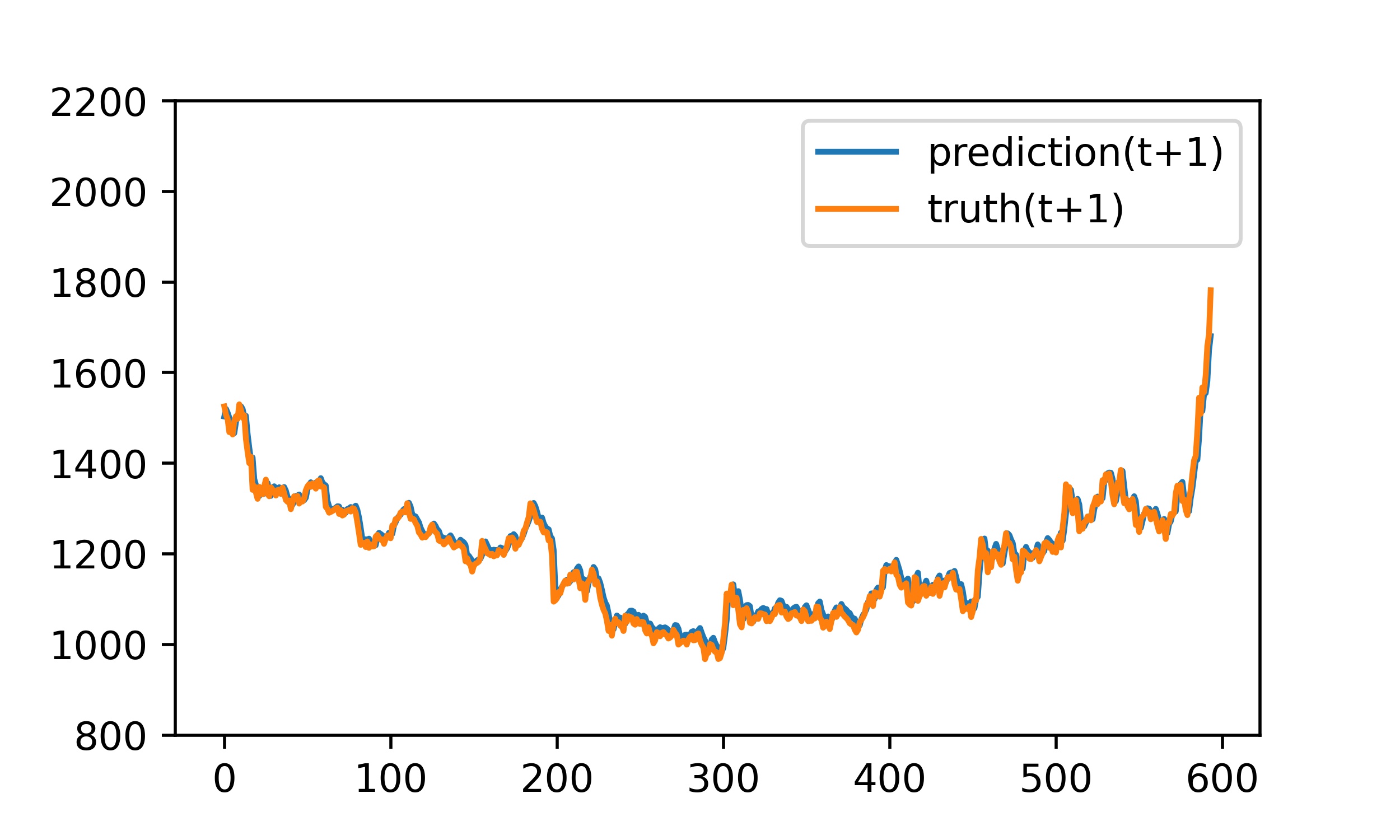}  
    }     
    \subfigure{ 
        \label{fig:SSEt+1-1.5}     
        \includegraphics[width=4.5cm,height=3.5cm]{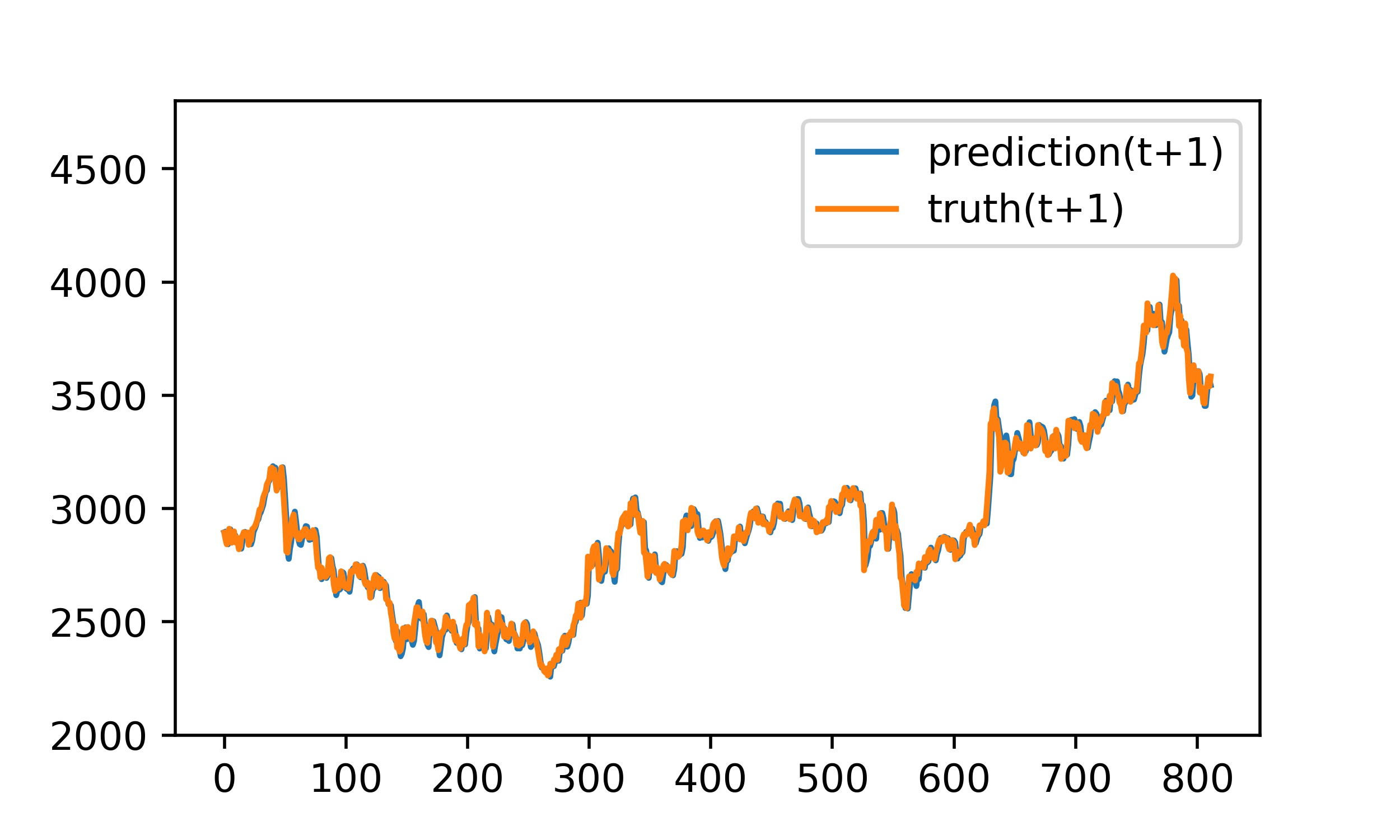}     
    }    
    \subfigure { 
        \label{fig:SZXFt+1-1.5}     
        \includegraphics[width=4.5cm,height=3.5cm]{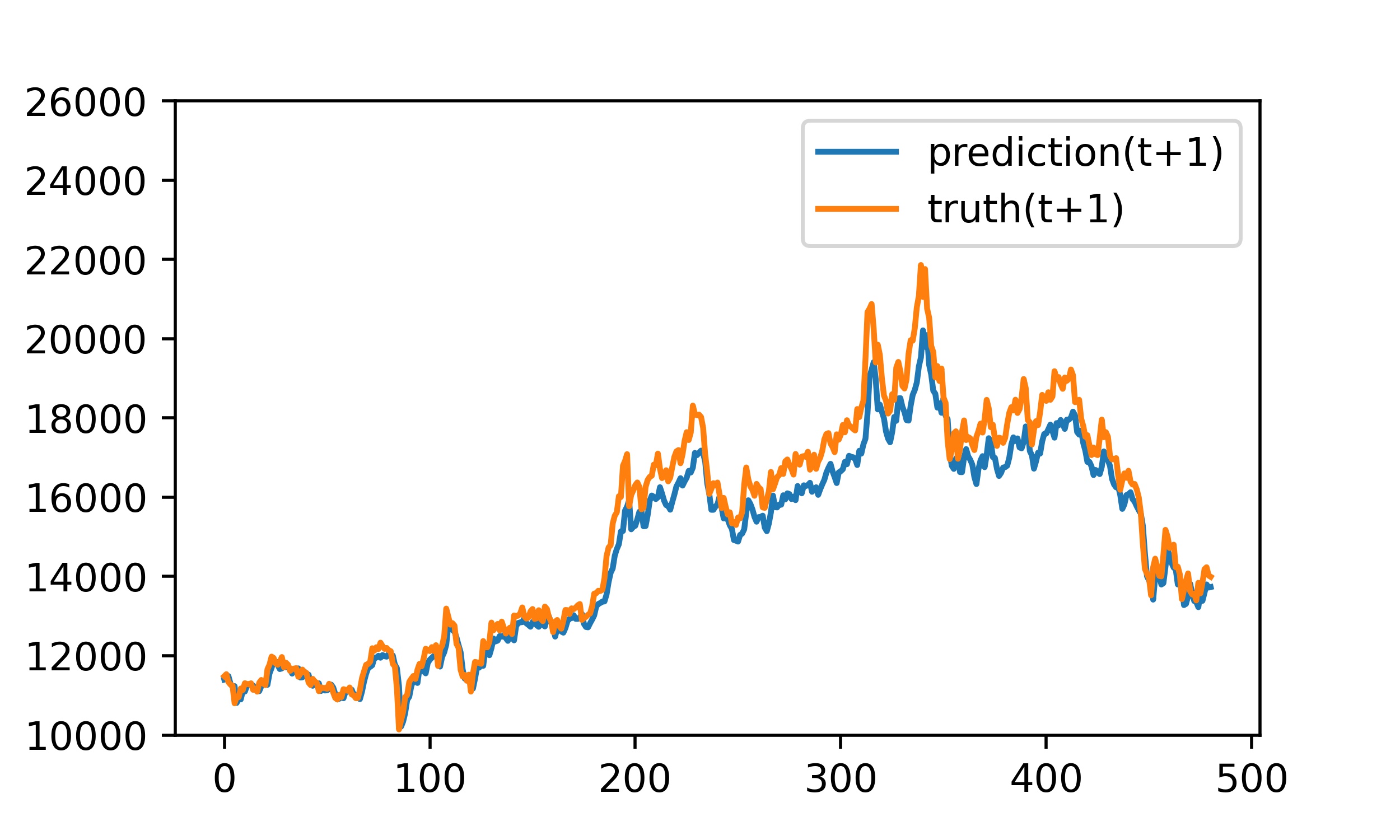}     
    }
     \subfigure {
        \label{fig:SSENYt+2-1.5}     
        \includegraphics[width=4.5cm,height=3.5cm]{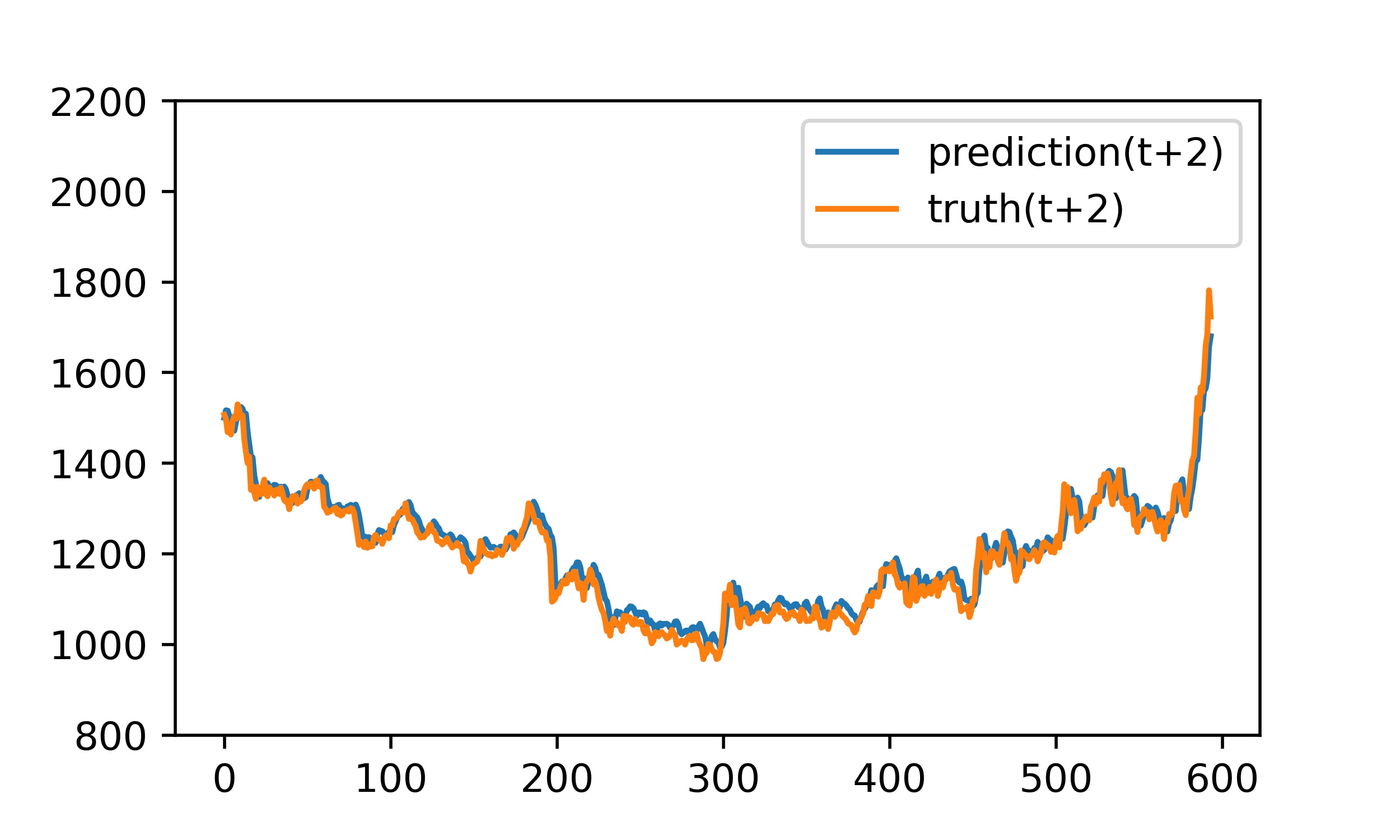}  
    }     
    \subfigure{ 
        \label{fig:SSEt+2-1.5}     
        \includegraphics[width=4.5cm,height=3.5cm]{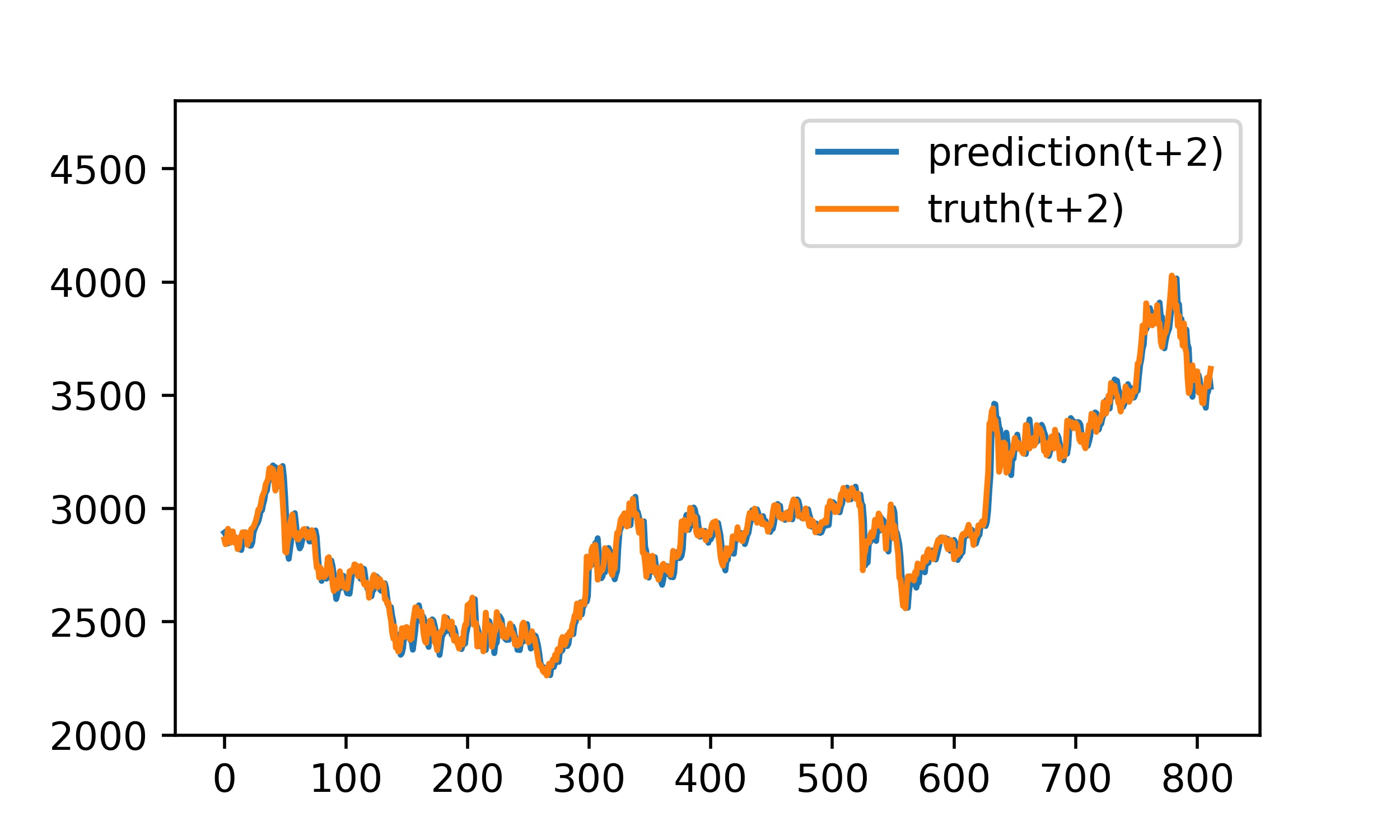}     
    }    
    \subfigure { 
        \label{fig:SZXFt+2-1.5}     
        \includegraphics[width=4.5cm,height=3.5cm]{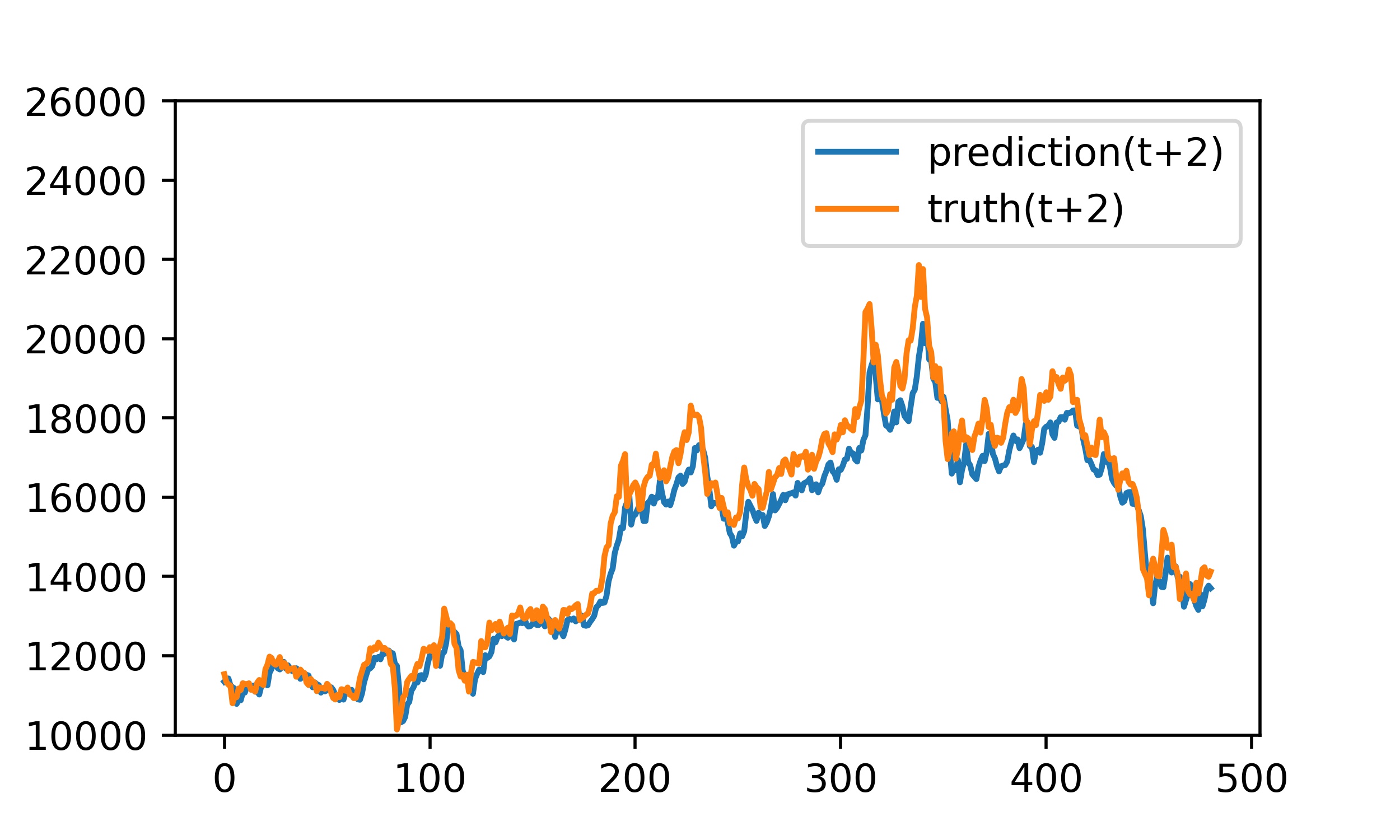}     
    }   
    \subfigure {
        \label{fig:SSENYt+3-1.5}     
        \includegraphics[width=4.5cm,height=3.5cm]{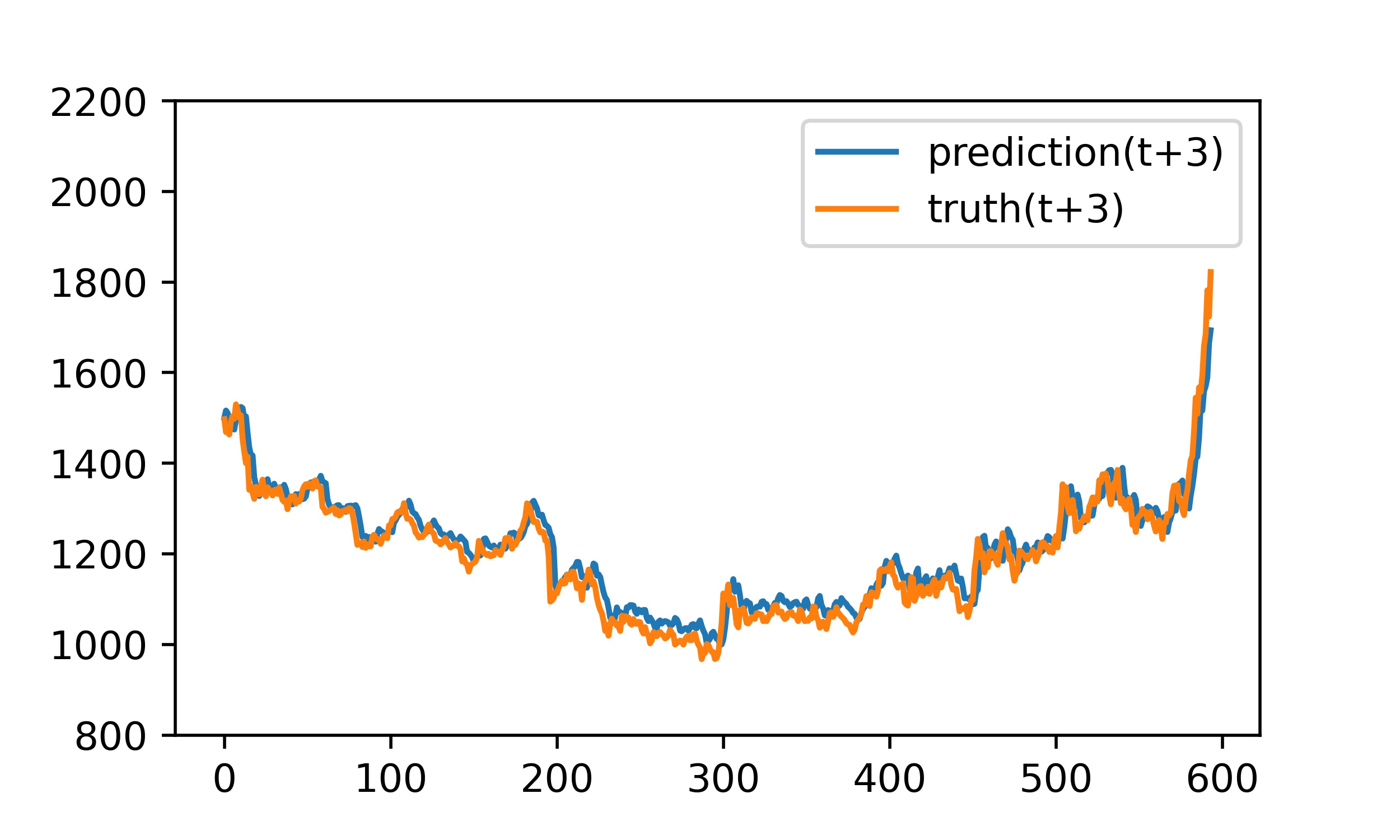}  
    }     
    \subfigure{ 
        \label{fig:SSEt+3-1.5}     
        \includegraphics[width=4.5cm,height=3.5cm]{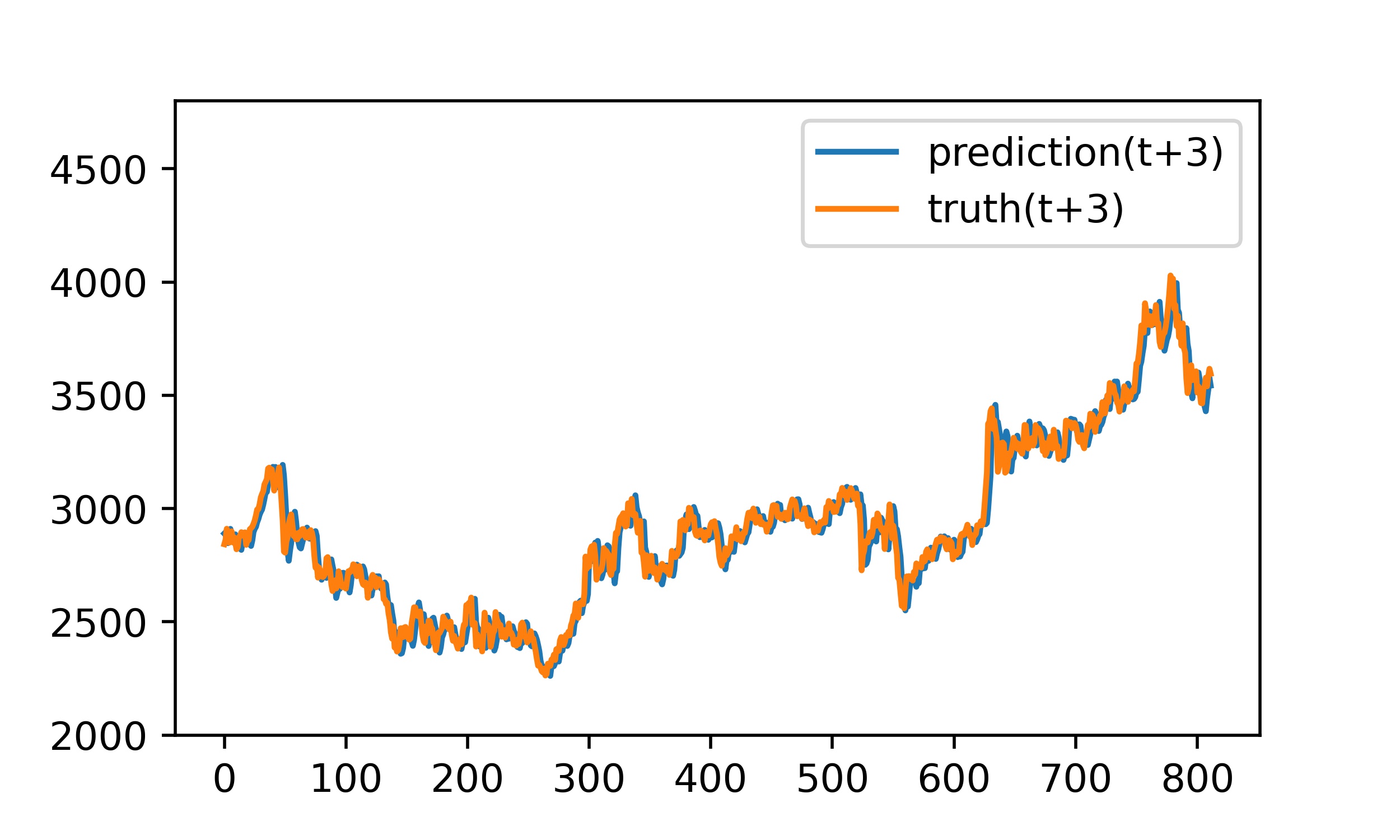}     
    }    
    \subfigure { 
        \label{fig:SZXFt+3-1.5}     
        \includegraphics[width=4.5cm,height=3.5cm]{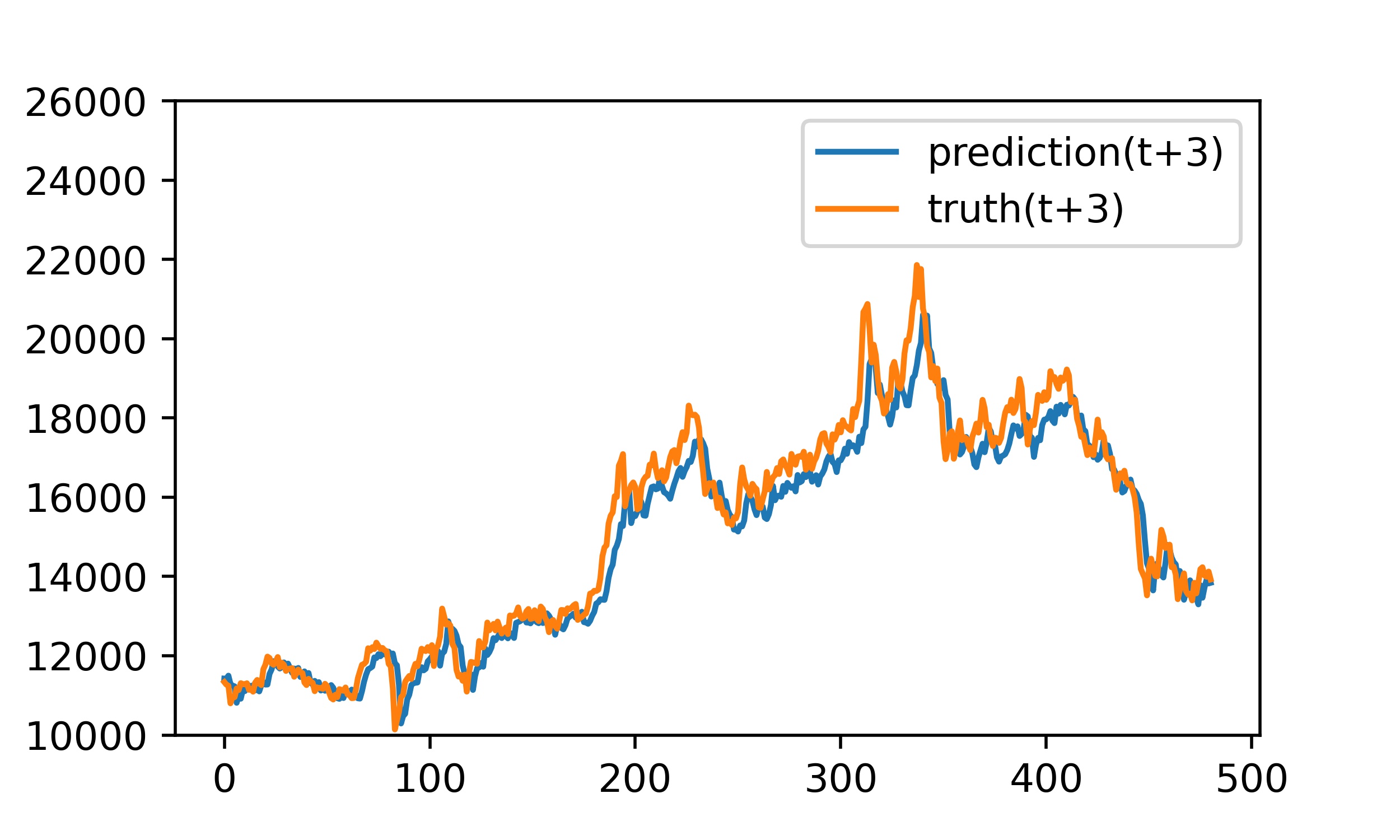}     
    }   
    \caption{ The prediction effect of different stocks with $\alpha=1.5$. (left:SSE Energy Index, middle:SSE50 Index, right:SSE Consumer Index) }     
    \label{fig:B1}     
    \end{figure}
    
    \newpage
    \subsection{Full Results of \ref{exp2} of the main paper:SSE 50 Index}
    \begin{figure}[htbp]
        \centering
        \begin{minipage}[t]{0.45\textwidth}
            \centering \includegraphics[width=5cm]{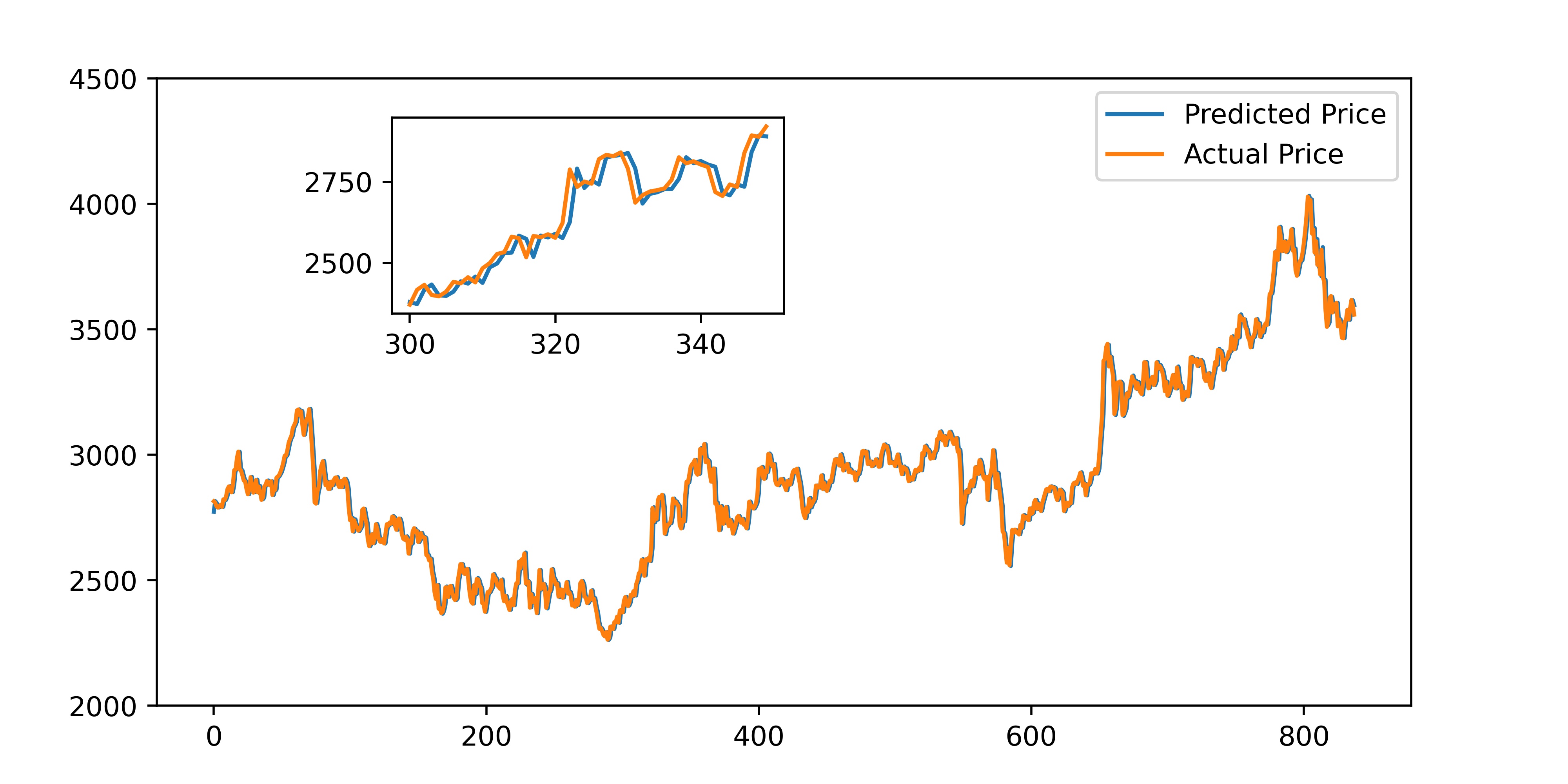}
            \label{fig:SZ50ARIMA}
        \end{minipage}
        \begin{minipage}[t]{0.45\textwidth}
            \centering         \includegraphics[width=5cm]{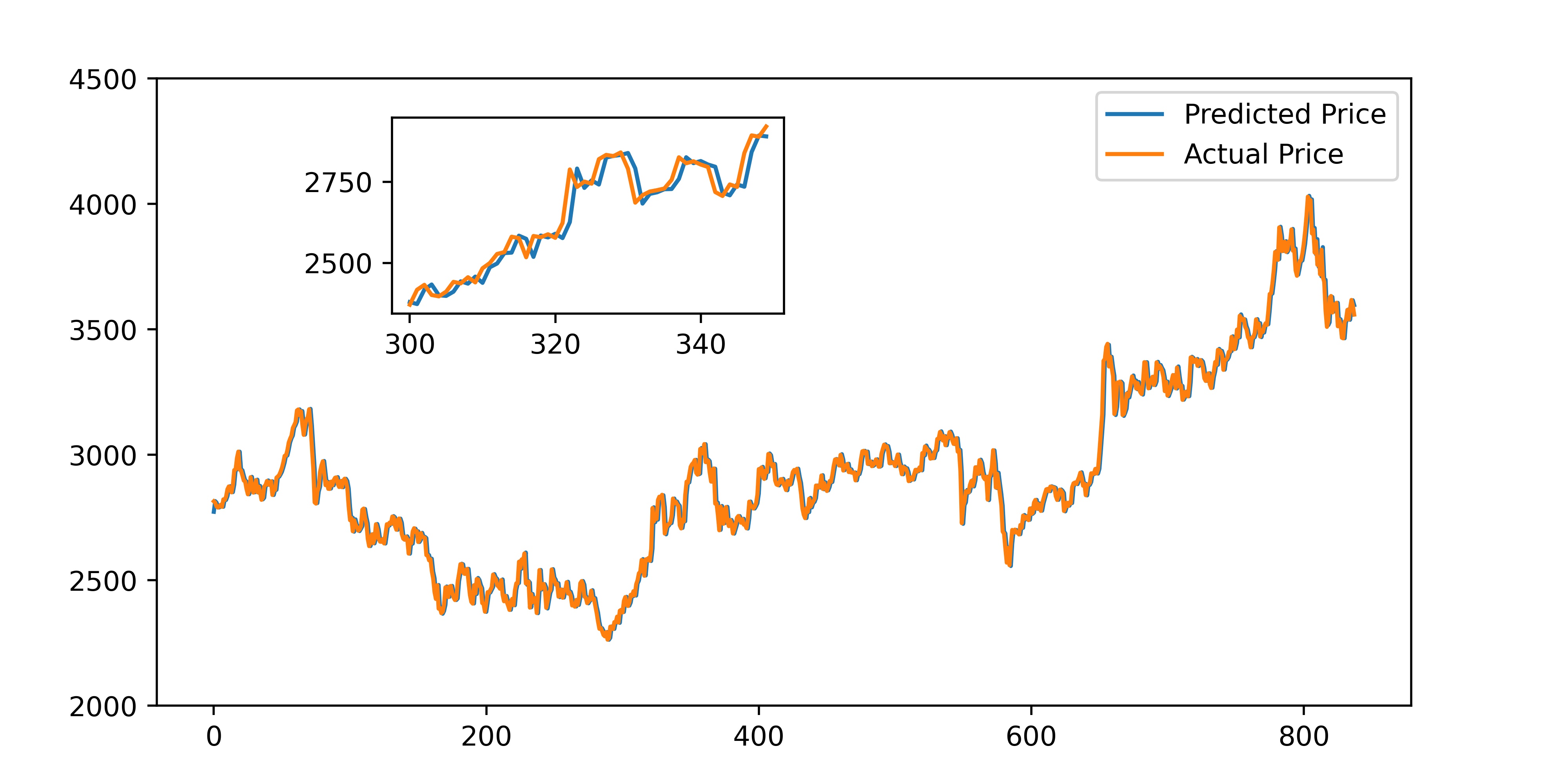}
            \label{fig:SZ50LSTM}
        \end{minipage}
        \begin{minipage}[t]{0.45\textwidth}
            \centering         \includegraphics[width=5cm]{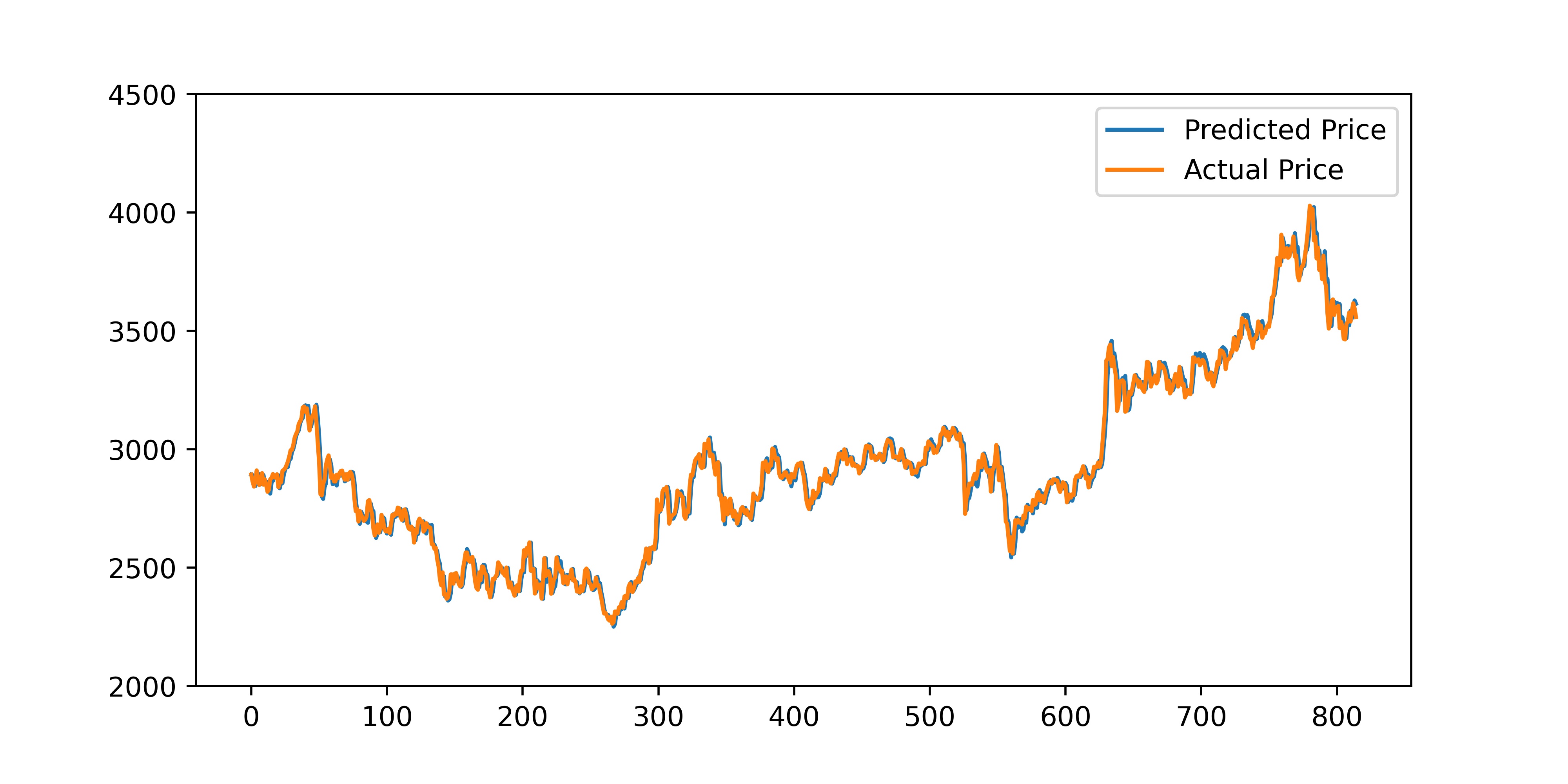}
            \label{fig:SZ50OLD}
        \end{minipage}
        \begin{minipage}[t]{0.45\textwidth}
            \centering           \includegraphics[width=5cm]{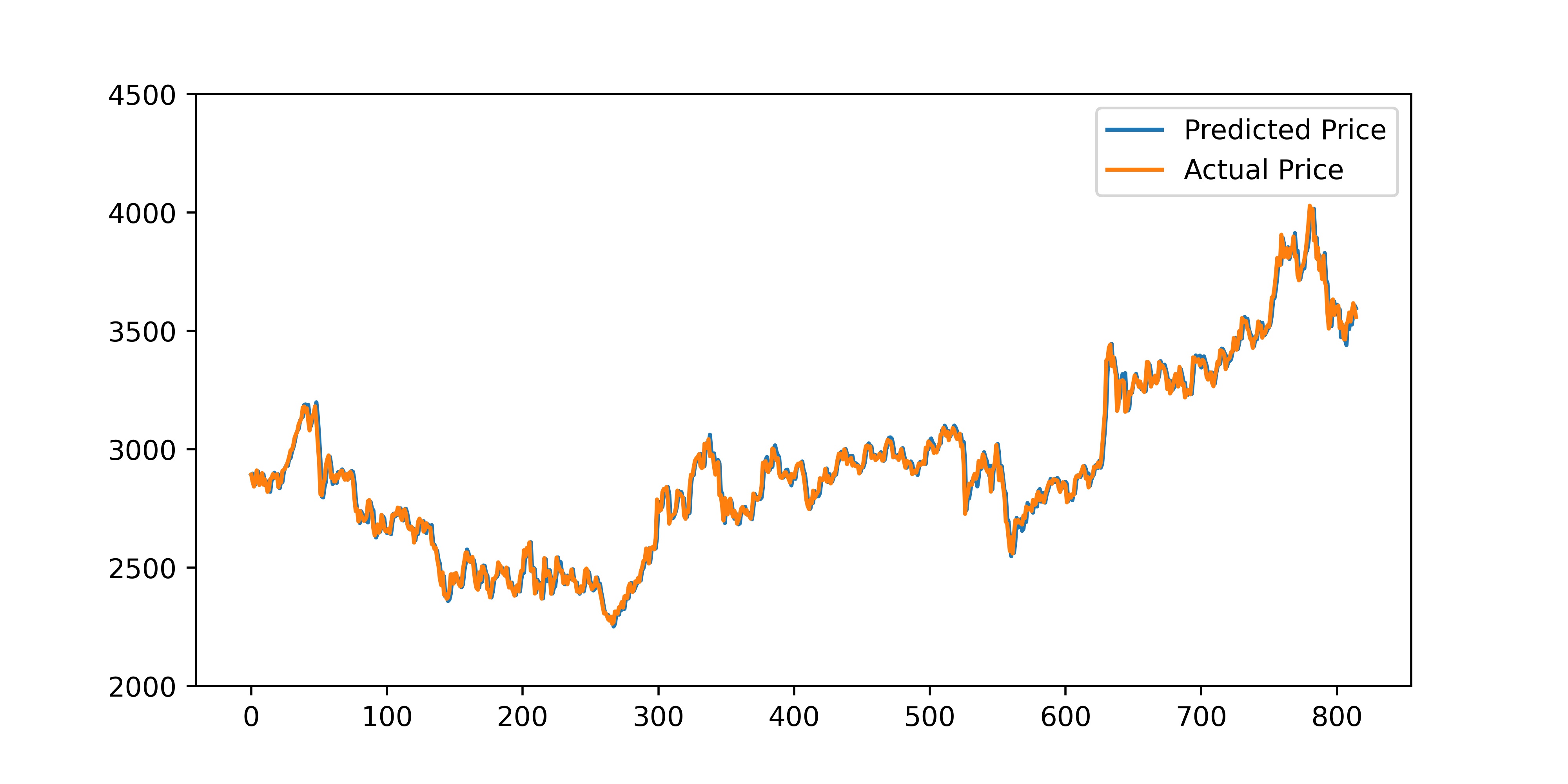}
            \label{fig:SZ50LDE-Net}
        \end{minipage}
        \caption{For SSE 50 Index, the predictive effects of different models. (The upper left is ARIMA, the upper right is LSTM, the lower left is SDE-Net, the lower right is LDE-Net.)}
    \end{figure}
    
    \subsection{Experiment : Safe prediction horizon}
    \label{Safe prediction horizon}
    \begin{table}[H]
    	\caption{Forecast results of SSE Energy Index with different horizon ($\alpha=1.5$)}
    	\centering
    	\setlength{\textwidth}{15mm}{
    	\begin{tabular}{cccccc}
    		\toprule
    		  Embedding Dimension  & MSE(t+1) & MSE(t+2)& MSE(t+3)& MSE(t+4) \\
    		\midrule
            20 &0.0010&0.0022&0.0034&0.0045  
            \\
            43 &0.0013&0.0032&0.0058&0.0075
            \\
            50 &0.0018&0.0026& 0.0048&0.0054
            \\
    	\bottomrule
    	\end{tabular}}
    	\label{tab: SSEenergyhorizon}
    \end{table}  
        
    \begin{table}[H]
    	\caption{Forecast results of SSE50 Index with different horizon ($\alpha=1.5$)}
    	\centering
    	\setlength{\textwidth}{15mm}{
    	\begin{tabular}{cccccc}
    		\toprule
    		 Embedding Dimension  & MSE(t+1) & MSE(t+2)& MSE(t+3)& MSE(t+4) \\
    		\midrule
            23 &0.0029&0.0058&0.0091&0.0125   
            \\
            41 &0.0037&0.0070&0.0100&0.0137
            \\
            50 &0.0036&0.0066&0.0102&0.0138
            \\
    	\bottomrule
    	\end{tabular}}
    	\label{tab: SSE50horizon}
    \end{table}

    \subsection{Experiment : Forecasting with different steps}
    \label{different steps}
    \begin{table}[H]
	\caption{Forecast results of different steps ($\alpha=1.5$)}
	\centering
	\setlength{\textwidth}{15mm}{
	\begin{tabular}{ccccccc}
		\toprule
		Data  & MSE(t+5) & MSE(t+6)& MSE(t+7)& MSE(t+8) & MSE(t+9) & MSE(t+10) \\
		\midrule
        SSE Energy Index &0.0068 &0.0071 &0.0109 &0.0140 &0.0155 &0.0168
        \\
        SSE50 Index &0.0158&0.0188&0.0211&0.0236&0.0257&0.0285
        \\
	\bottomrule
	\end{tabular}}
	\label{tab:stock10step}
    \end{table}
    
    \begin{figure}[htbp]
        \centering    
        \includegraphics[width=5cm,height=4cm]{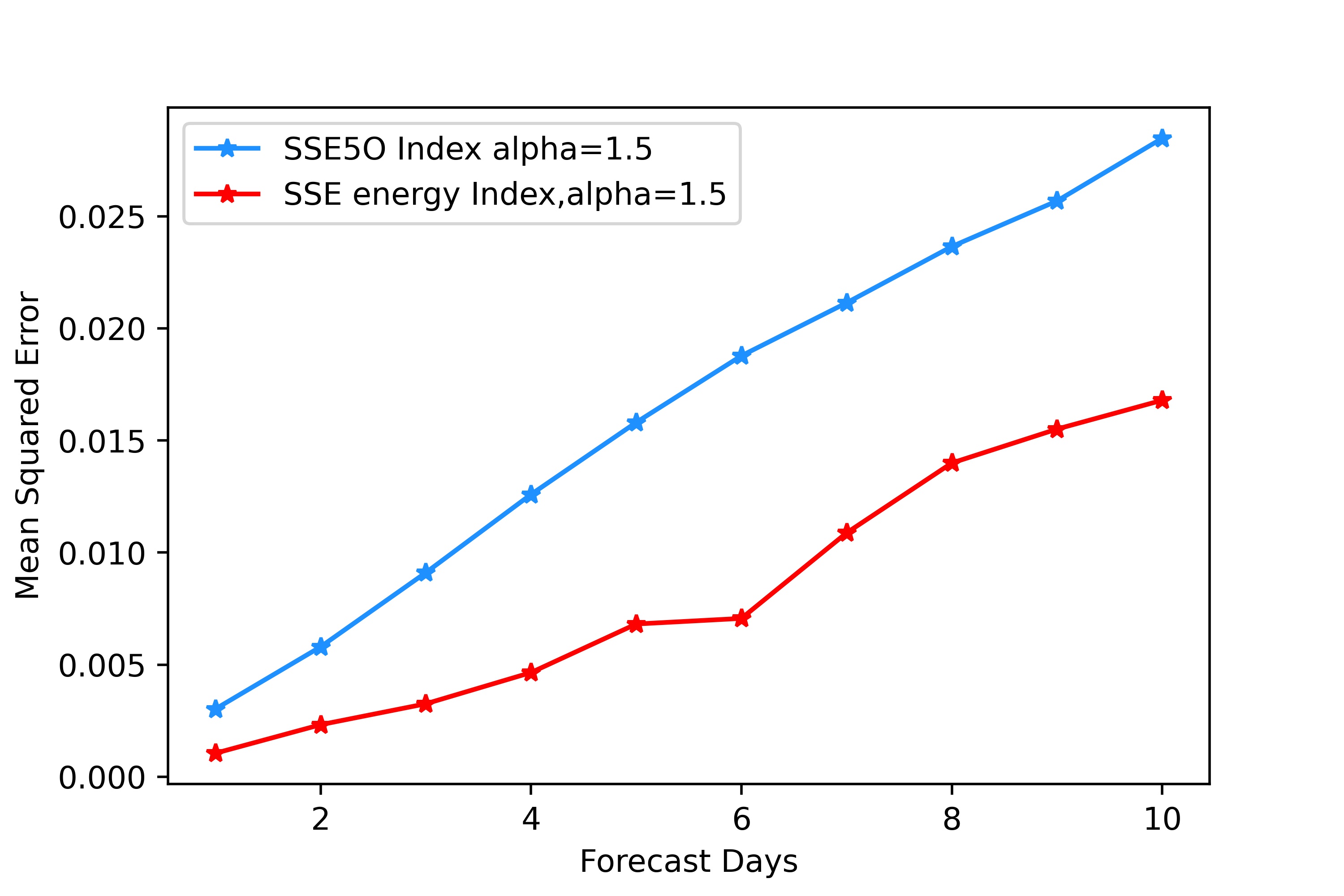}  
        \caption{The prediction loss of SSE Energy Index and SSE50 Index with different steps. ($\alpha=1.5$)}   
        \label{fig:B3}     
    \end{figure}

    \begin{figure}[htbp]
    \centering    
    \subfigure {
        \label{fig:SSEt+5-1.5}     
        \includegraphics[width=4.5cm,height=3.5cm]{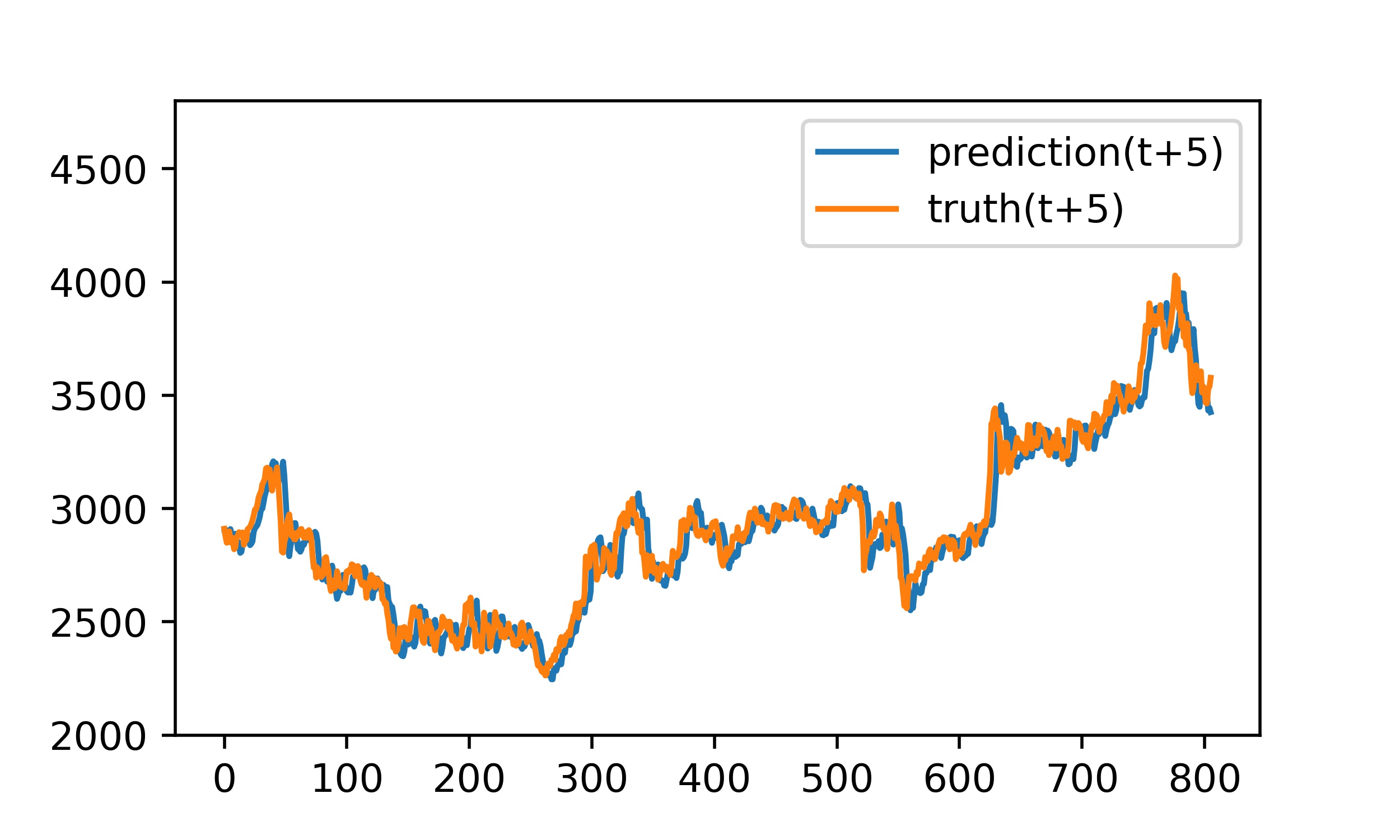}  
    }     
    \subfigure{ 
        \label{fig:SSEt+6-1.5}     
        \includegraphics[width=4.5cm,height=3.5cm]{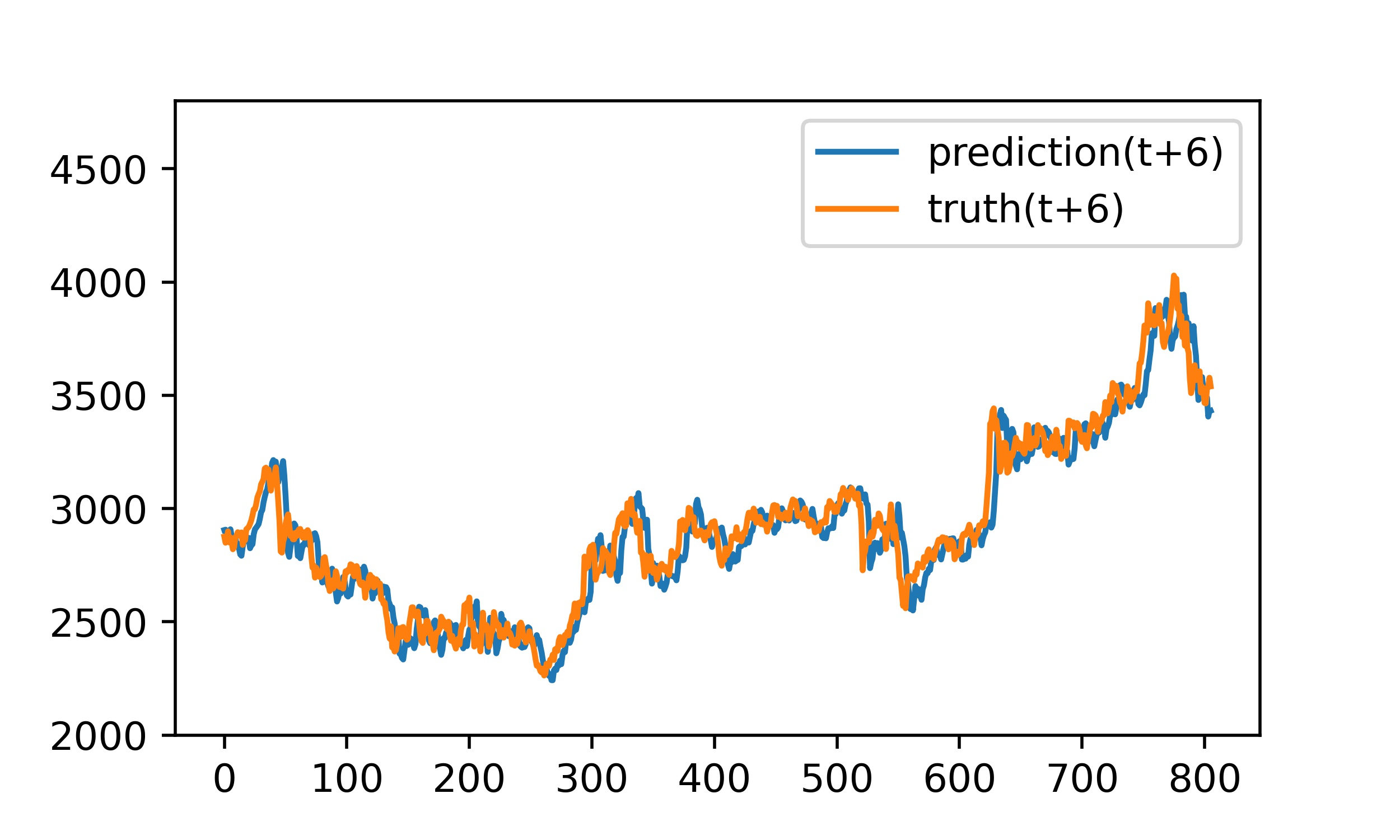}     
    }    
    \subfigure { 
        \label{fig:SSEt+7-1.5}     
        \includegraphics[width=4.5cm,height=3.5cm]{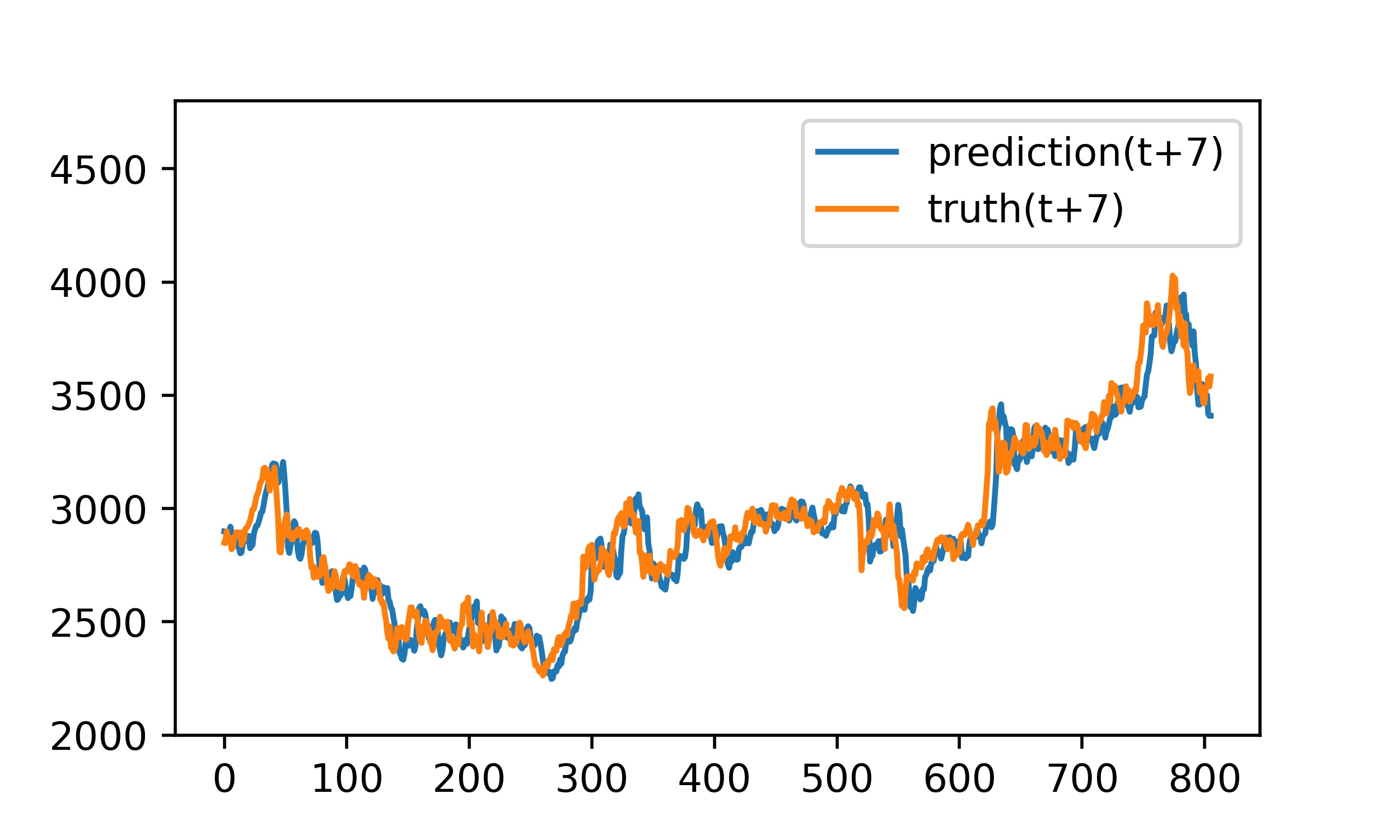}     
    }
     \subfigure {
        \label{fig:SSEt+8-1.5}     
        \includegraphics[width=4.5cm,height=3.5cm]{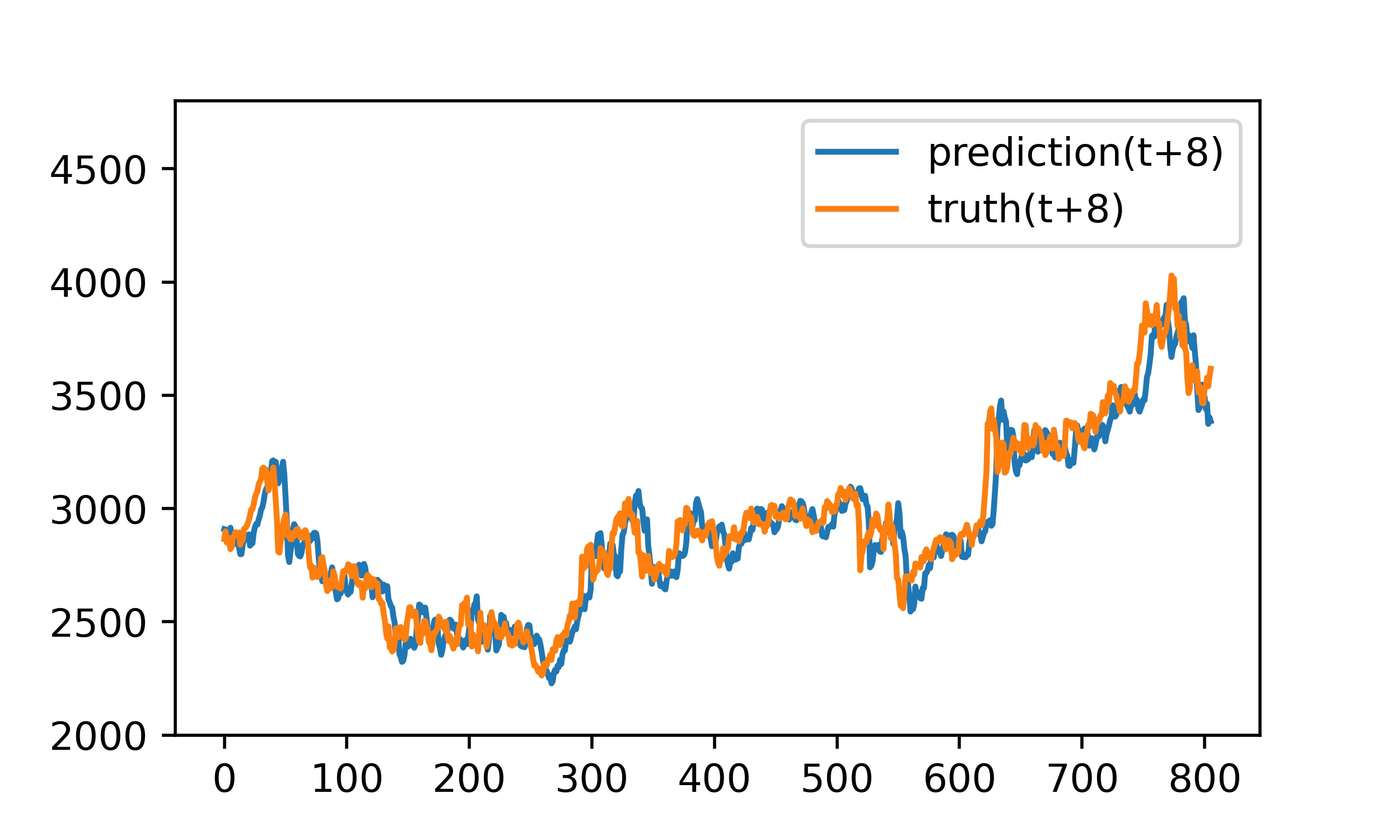}  
    }     
    \subfigure{ 
        \label{fig:SSEt+9-1.5}     
        \includegraphics[width=4.5cm,height=3.5cm]{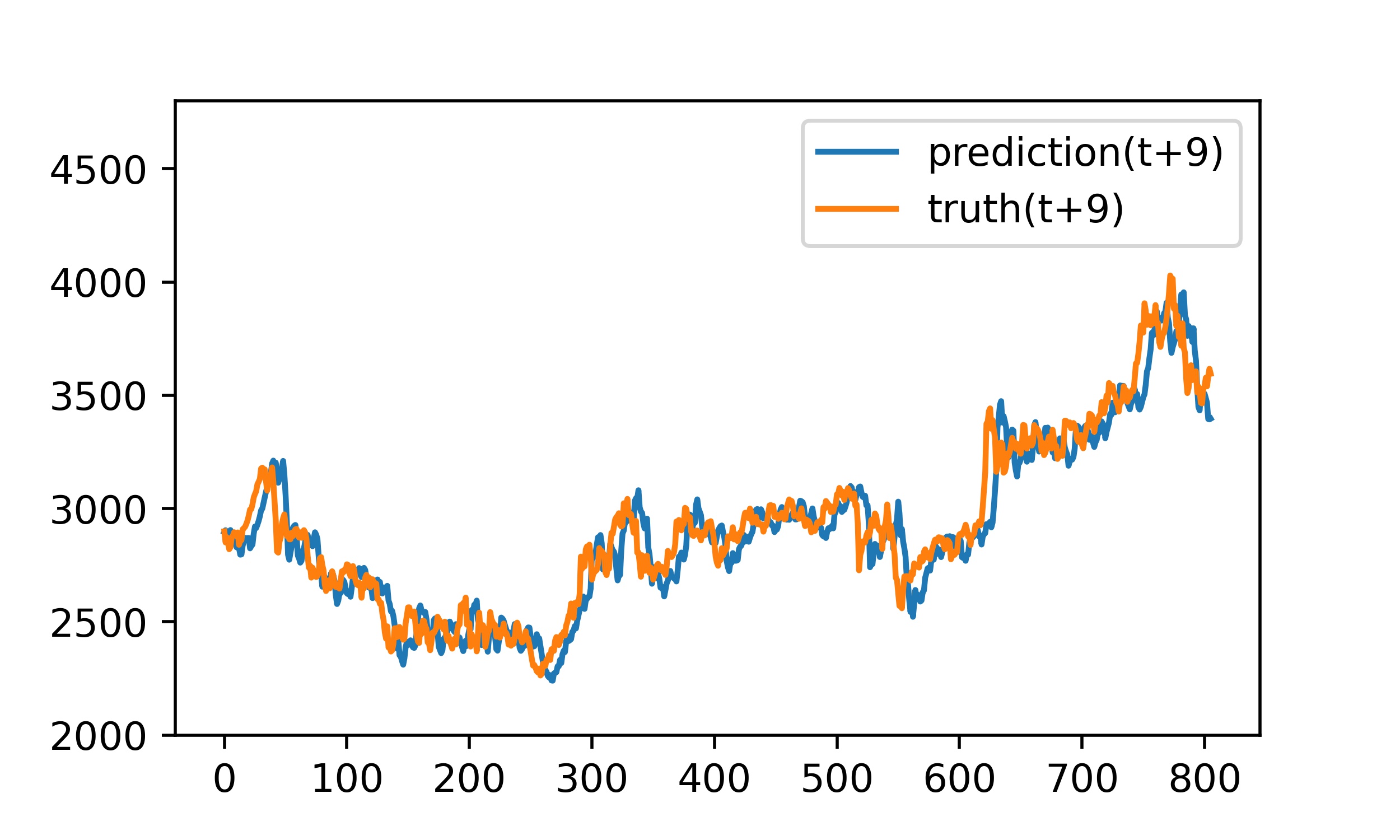}     
    }    
    \subfigure { 
        \label{fig:SSEt+10-1.5}     
        \includegraphics[width=4.5cm,height=3.5cm]{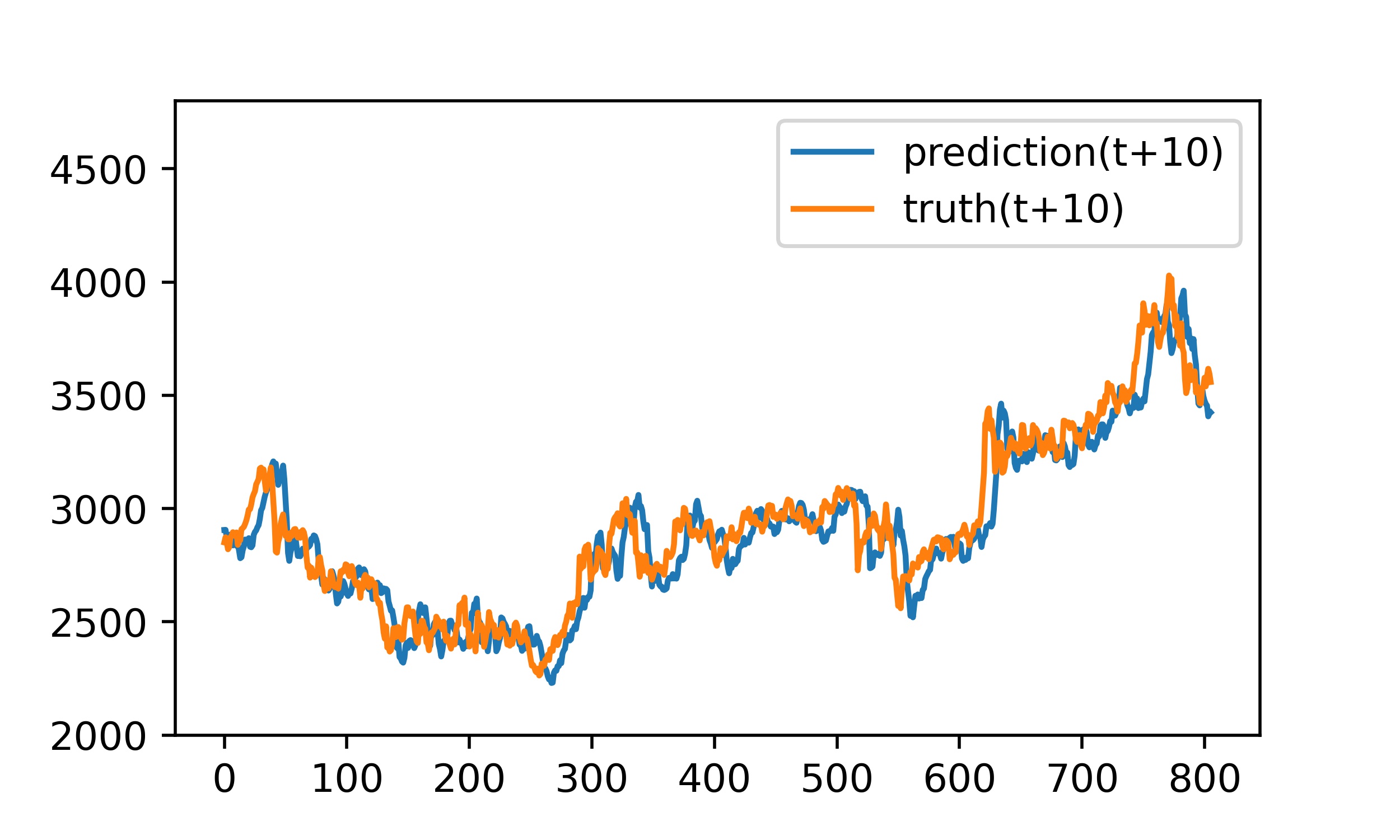}     
    }   
    
   \caption{The prediction effect of SSE50 Index with different steps. ($\alpha=1.5$)}     
    \label{fig:B4}     
    \end{figure}
    
    \begin{figure}[htbp]
    \centering    
    \subfigure {
        \label{fig:SENYt+5-1.5}     
        \includegraphics[width=4.5cm,height=3.5cm]{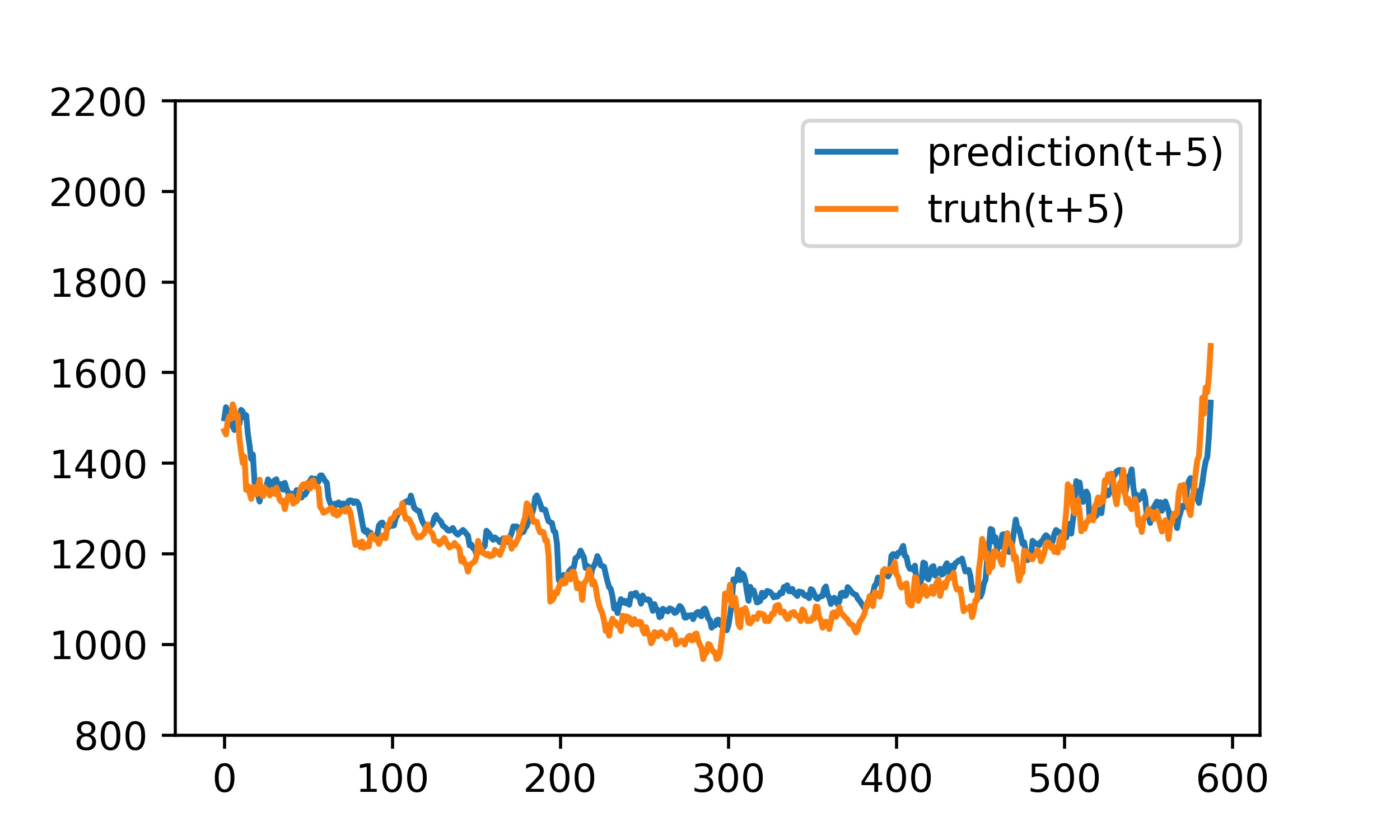}  
    }     
    \subfigure{ 
        \label{fig:SENYt+6-1.5}     
        \includegraphics[width=4.5cm,height=3.5cm]{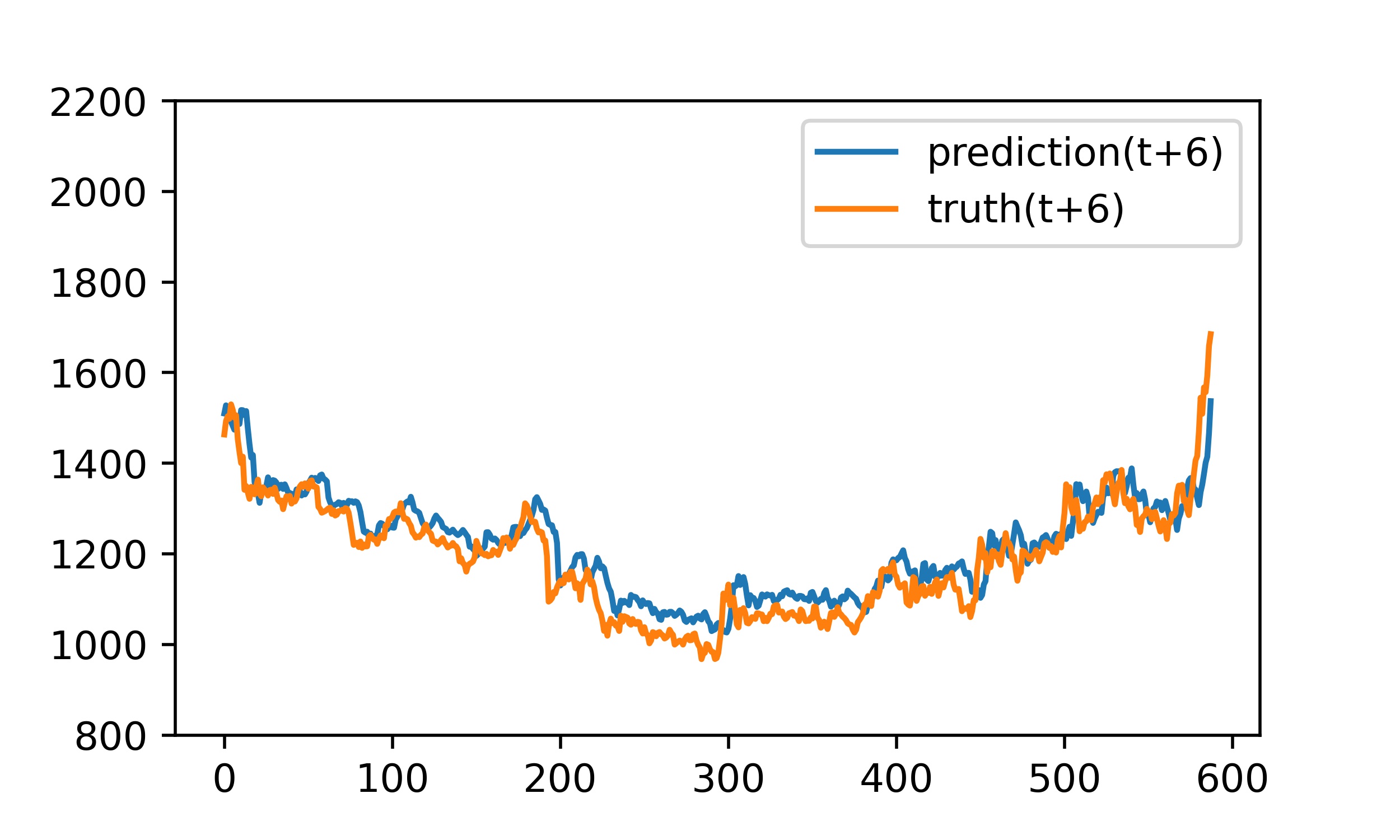}     
    }    
    \subfigure { 
        \label{fig:SENYt+7-1.5}     
        \includegraphics[width=4.5cm,height=3.5cm]{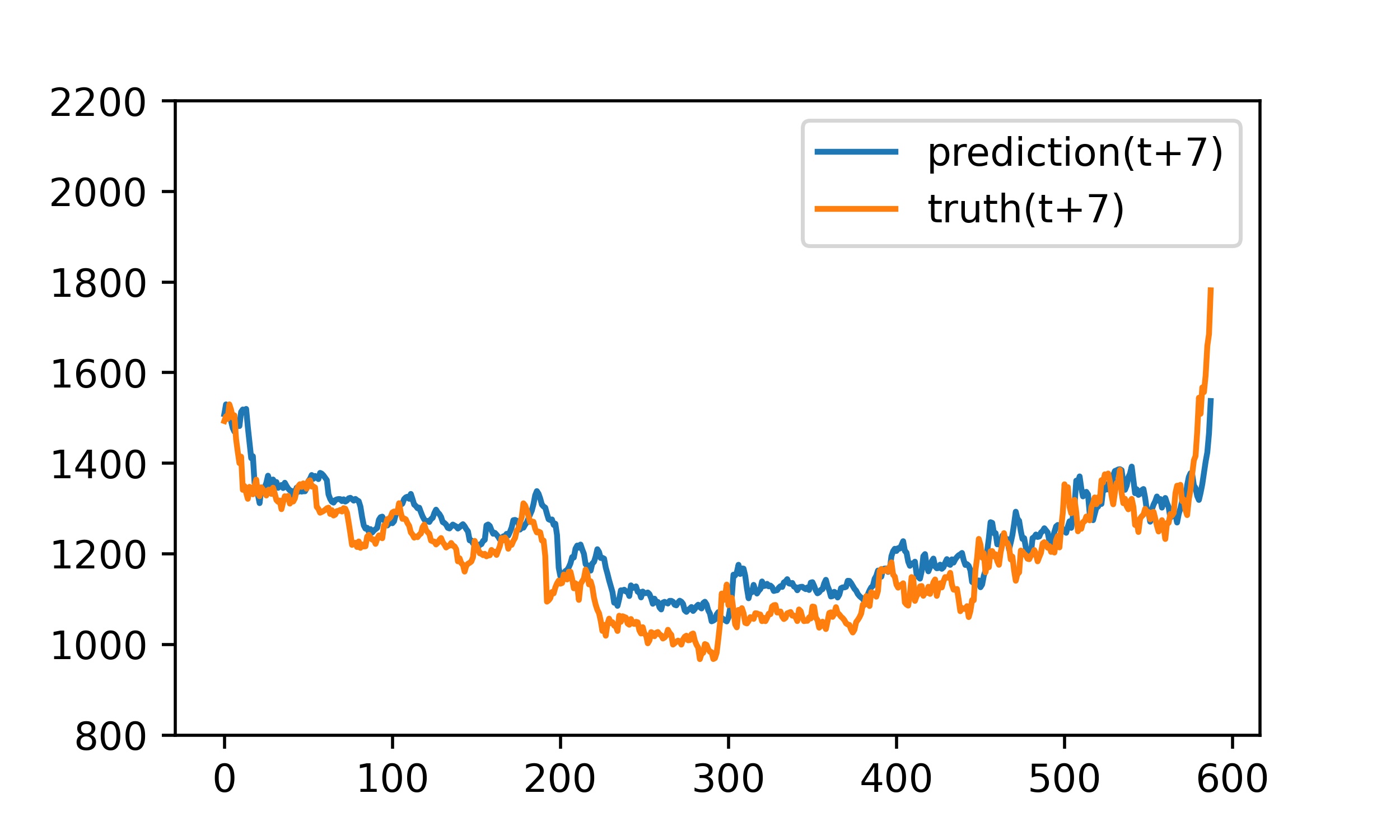}     
    }
     \subfigure {
        \label{fig:SENYt+8-1.5}     
        \includegraphics[width=4.5cm,height=3.5cm]{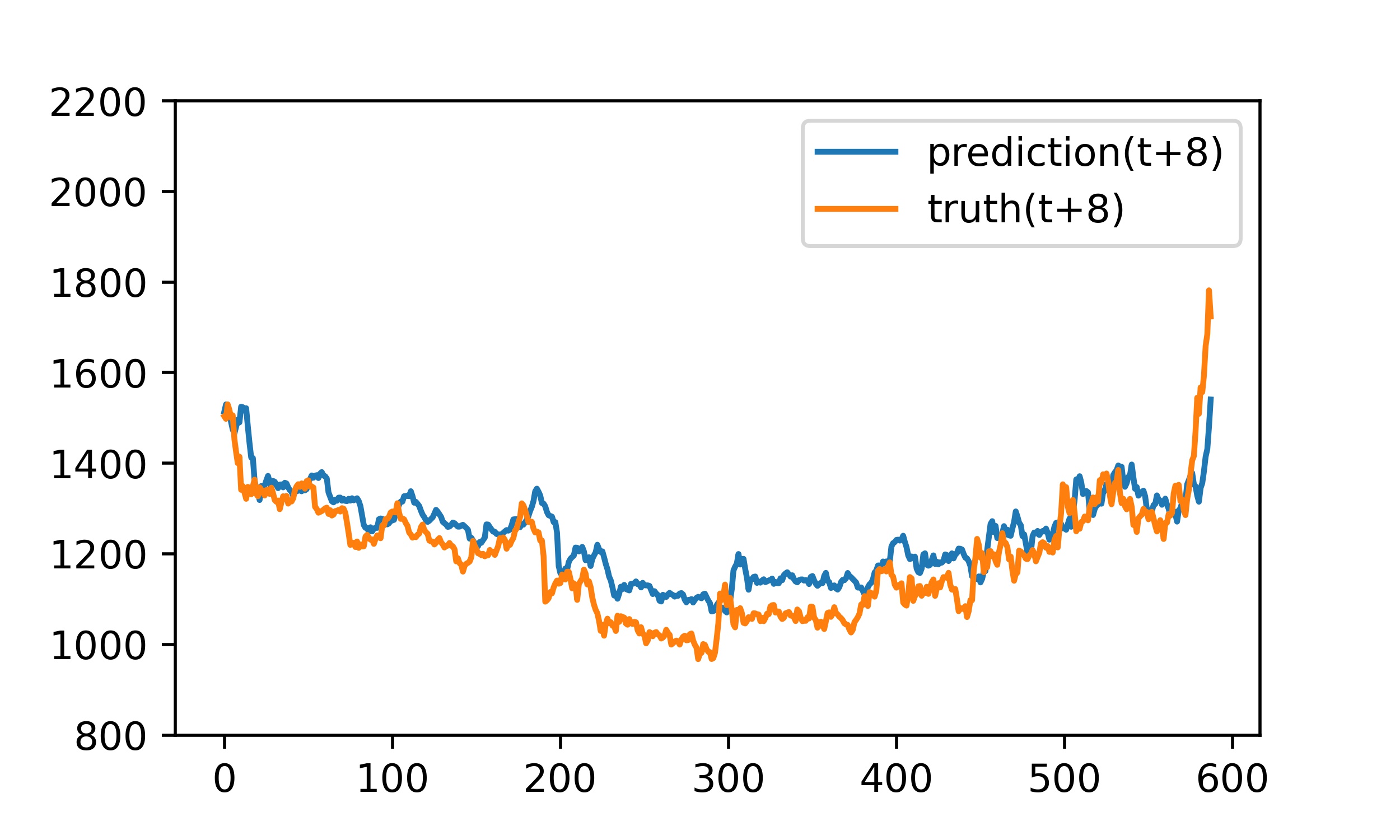}  
    }     
    \subfigure{ 
        \label{fig:SENYt+9-1.5}     
        \includegraphics[width=4.5cm,height=3.5cm]{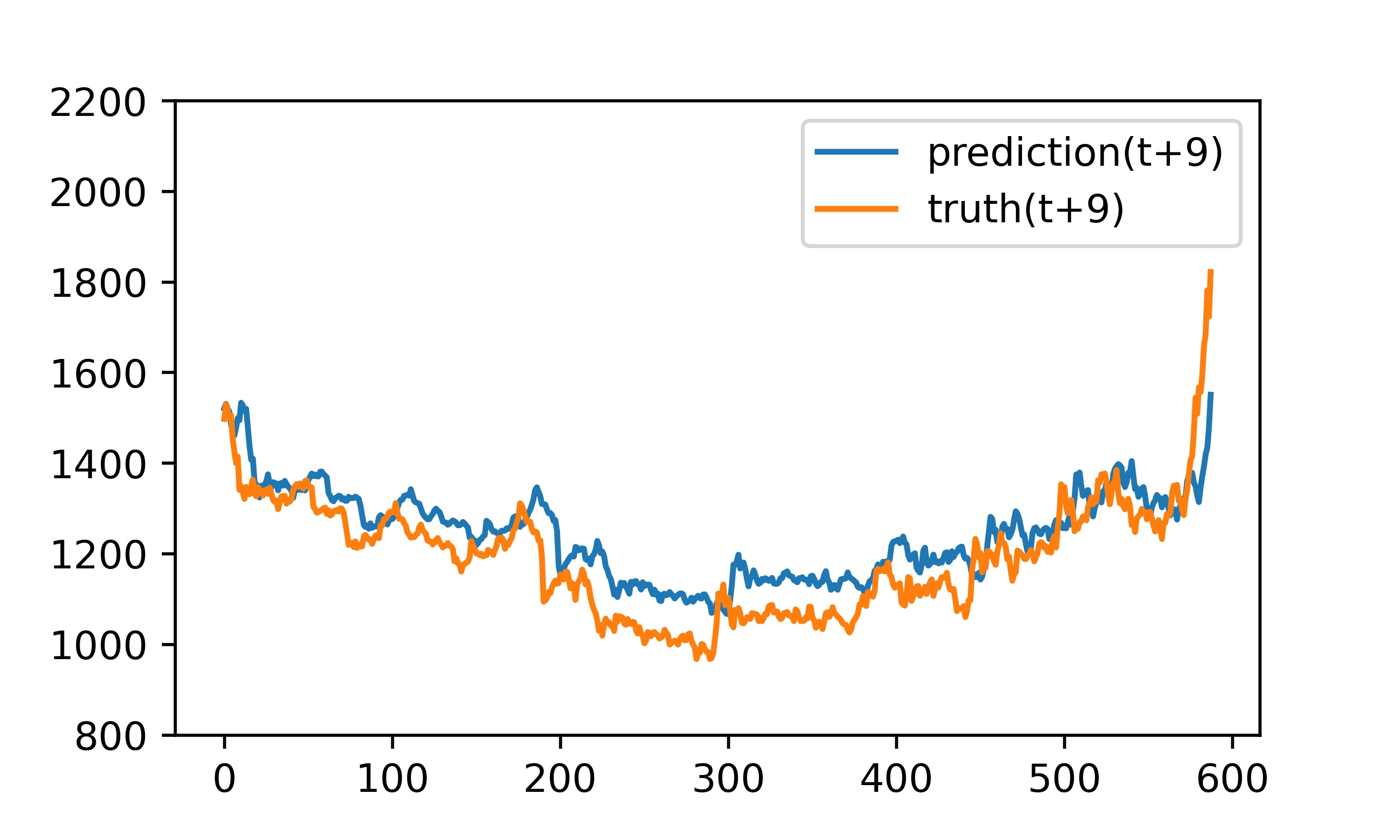}     
    }    
    \subfigure { 
        \label{fig:SENYt+10-1.5}     
        \includegraphics[width=4.5cm,height=3.5cm]{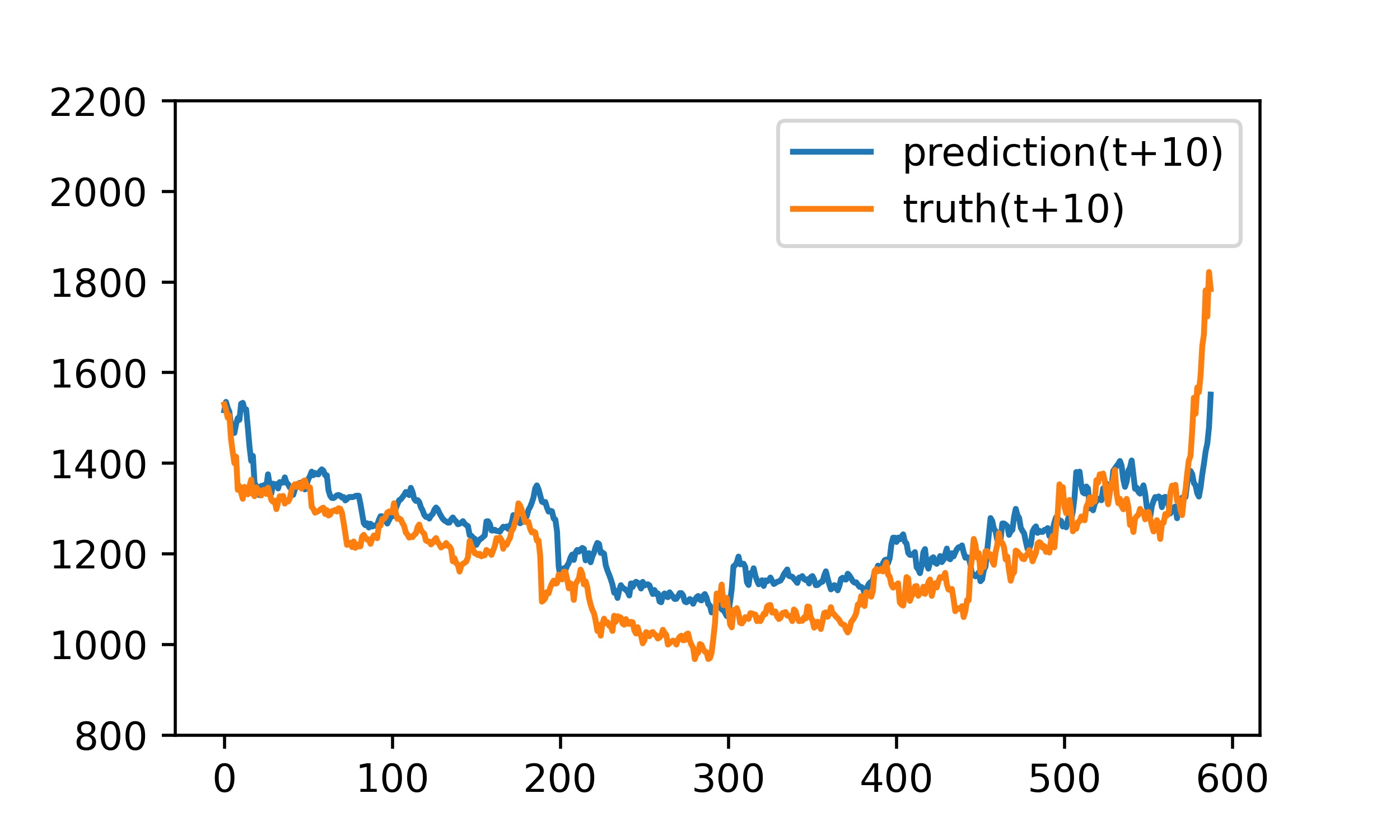}     
    }   
    
    \caption{The prediction effect of SSE Energy Index with different steps. ($\alpha=1.5$)}   
    \label{fig:B5}     
    \end{figure}

  \end{appendices}
\end{document}